\newtheorem{theorem}{Theorem}
\newtheorem{lemma}{Lemma}
\newtheorem{corollary}{Corollary}
\newtheorem{proposition}{Proposition}
\theoremstyle{definition}
\newtheorem{remark}{Remark}
\newtheorem{example}{Example}
\DeclarePairedDelimiter\ceil{\lceil}{\rceil}
\DeclarePairedDelimiter\floor{\lfloor}{\rfloor}
\newcommand{\ubar}[1]{\underaccent{\bar}{#1}}
\def\R{\mathbb{R}}
\def\E{\mathbb{E}}
\def\Var{\mathrm{Var}}
\def\cA{\mathcal{A}}
\def\cF{\mathcal{F}}
\def\cH{\mathcal{H}}
\def\cN{\mathcal{N}}
\def\cP{\mathcal{P}}
\def\cS{\mathcal{S}}
\def\cT{\mathcal{T}}
\def\cX{\mathcal{X}}
\def\TV{\mathrm{TV}}
\newcommand{\red}[1]{\textcolor{red}{#1}}
\def\shownotes{1}  
\newcommand{\authnote}[2]{$\ll$\textsf{\footnotesize #1 notes: #2}$\gg$}
\newcommand{\authnote}[2]{}
\newcommand{\yw}[1]{{\color{violet}\authnote{Yu-Xiang}{#1}}}
\newcommand{\ARROWS}{\textsc{Arrows}}
\newcommand{\lsim}{\raisebox{-0.13cm}{~\shortstack{$<$ \\[-0.07cm]
      $\sim$}}~}
\def\R{\mathbb{R}}
\def\E{\mathbb{E}}
\def\Var{\mathrm{Var}}
\def\cA{\mathcal{A}}
\def\cF{\mathcal{F}}
\def\cH{\mathcal{H}}
\def\cN{\mathcal{N}}
\def\cP{\mathcal{P}}
\def\cS{\mathcal{S}}
\def\cT{\mathcal{T}}
\def\cX{\mathcal{X}}
\def\TV{\mathrm{TV}}
\title{Online Forecasting of Total-Variation-bounded Sequences}  
\author{Dheeraj Baby \\dheeraj@ucsb.edu \and Yu-Xiang Wang \\yuxiangw@cs.ucsb.edu}
\date{UC Santa Barbara}
\begin{document}

\maketitle

\begin{abstract}
	We consider the problem of online forecasting of sequences of length $n$ with total-variation at most $C_n$ using observations contaminated by independent $\sigma$-subgaussian noise. We design an $O(n\log n)$-time algorithm that achieves a cumulative square error of $\tilde{O}(n^{1/3}C_n^{2/3}\sigma^{4/3} + C_n^2)$ with high probability. 
	We also prove a lower bound that matches the upper bound in all parameters (up to a $\log(n)$ factor). 
	To the best of our knowledge, this is the first \emph{polynomial-time} algorithm that achieves the optimal $O(n^{1/3})$ rate in forecasting total variation bounded sequences and the first algorithm that \emph{adapts to unknown} $C_n$.
	Our proof techniques leverage the special localized structure of Haar wavelet basis and the adaptivity to unknown smoothness parameters in the classical wavelet smoothing \citep{donoho1998minimax}.
	We also compare our model to the rich literature of dynamic regret minimization and nonstationary stochastic optimization, where our problem can be treated as a special case. We show that the workhorse in those settings --- online gradient descent and its variants with a fixed restarting schedule --- are instances of a class of \emph{linear forecasters} that require a suboptimal regret of $\tilde{\Omega}(\sqrt{n})$. This implies that the use of more adaptive algorithms is necessary to obtain the optimal rate. 
\end{abstract}



\section{Introduction}

Nonparametric regression is a fundamental class of problems that has been studied for more than half a century in statistics and machine learning
  \citep{nadaraya1964estimating,de1978practical,wahba1990spline,donoho1998minimax,mallat1999wavelet,scholkopf2001learning,rasmussen2006gaussian}.
It solves the following problem:
\begin{itemize}
	\item Let $y_i = f(u_i) + \text{Noise}$ for $i=1,...,n$. How can we estimate a  function $f$ using data points $(u_1,y_1),...,(u_n,y_n)$  and 
	the knowledge that $f$  belongs to a function class $\cF$?
\end{itemize}
Function class $\cF$ typically imposes only weak regularity assumptions on the function $f$ such as boundedness and smoothness, which makes nonparametric regression widely applicable to many real-life applications especially those with unknown physical processes.

A recent and successful class of nonparametric regression technique called trend filtering 
\citep{hightv,l1tf,tibshirani2014adaptive,wang2014falling} was shown to have the property of \emph{local adaptivity} \citep{locadapt} in both theory and practice. We say a nonparametric regression technique is \emph{locally adaptive} if it can cater to local differences in smoothness, hence allowing more accurate estimation of functions with varying smoothness and abrupt changes. 
For example, for functions with bounded total variation (when $\cF$ is a total variation class), standard nonparametric regression techniques such as kernel smoothing and smoothing splines have a mean square error (MSE) of $O(n^{-1/2})$ while trend filtering has the optimal $O(n^{-2/3})$.

Trend filtering is, however, a batch learning algorithm where one observes the entire dataset ahead of the time and makes inference about the past.  This makes it inapplicable to the many time series problems that motivate the study of trend filtering in the first place \citep{l1tf}. These include influenza forecasting, inventory planning, economic policy-making, financial market prediction and so on. In particular, it is unclear whether the advantage of trend filtering methods in estimating functions with heterogeneous smoothness (e.g., sharp changes) would carry over to the online forecasting setting. The focus of this work is in developing theory and algorithms for locally adaptive online forecasting which predicts the immediate future value of a function with heterogeneous smoothness using only noisy observations from the past.

\subsection{Problem Setup}
\label{sec:problem_setup}
\vspace{-1em}
\begin{figure}[h!]
	\centering
	\fbox{
		\begin{minipage}{13 cm}
			\begin{enumerate}
				\setlength\itemsep{0em}
				\item Fix action time intervals $1,2,...,n$
				\item The player declares a forecasting strategy $\cA_i:  \R^{i-1}\rightarrow \R$ for $i=1,...,n$.
				\item An adversary chooses a sequence $\theta_{1:n} = [\theta_1, \theta_2,\hdots, \theta_n]^T \in \R^n$. 
				\item For every time point $i = 1,...,n$:
				\begin{enumerate}
					\item We play $x_i =  \cA_i(y_1,...,y_{i-1})$.
					\item We receive a feedback $y_i = \theta_i +  Z_i$, where $Z_i$ is a zero-mean, independent subgaussian noise.
				\end{enumerate}
				\item At the end, the player suffers a cumulative error 
				$\sum_{i=1}^n \big(x_i- \theta_i\big)^2$.
			\end{enumerate}
		\end{minipage}
	}
	\caption{\emph{Nonparametric online forecasting model. The focus of the proposed work is to design a forecasting strategy that minimizes the expected cumulative square error. 
	Note that the problem depends a lot on the choice of the sequence $\theta_i$.  Our primary interest is on sequences with bounded total variation (TV) so that $\sum_{i=2}^{n}|\theta_i - \theta_{i-1}| \le C_n$, but we will also talk about the adaptivity our our method to easier problems such as forecasting Sobolev and Holder functions.}}
	\label{fig:nonpara_online_forecasting}
\end{figure}We propose a model for nonparametric online forecasting as described in Figure \ref{fig:nonpara_online_forecasting}. This model can be re-framed in the language of the online convex optimization model with three differences. 
\begin{enumerate}
\setlength{\parskip}{0pt}
	\item We consider only quadratic loss functions of the form $\ell_t(x) = (x -\theta_t)^2$.
	\item The learner receives independent \emph{noisy} gradient feedback, rather than the exact gradient.
	\item The criterion of interest is redefined as the \emph{dynamic regret} \citep{zinkevich2003online,besbes2015non}:
\begin{equation}\label{eq:dynamic_regret}
	R_{\text{dynamic}}(\cA,\ell_{1:n}) := \E \left[ \sum_{t=1}^n \ell_t(x_t) \right] - 
	\sum_{t=1}^n \inf_{x_t }\ell_t(x_t).
\end{equation}
\end{enumerate} 
The new criterion is called a dynamic regret because we are now comparing to a stronger dynamic baseline that chooses an optimal $x$ in every round.
Of course in general, the dynamic regret will be linear in $n$ \citep{jadbabaie2015online}. 
To make the problem non-trivial, we restrict our attention to sequences of $\ell_1,...,\ell_n$ that are \emph{regular}, which makes it possible to design algorithms with \emph{sublinear} dynamic regret.
In particular, we borrow ideas from the nonparametric regression literature and consider sequences $[\theta_1,...,\theta_n]$ that are discretizations of functions in the continuous domain. Regularity assumptions emerge naturally as we consider canonical functions classes such as the Holder class, Sobolev class and Total Variation classes \citep[see, e.g.,][for a review]{tsybakov_book}.



\subsection{Assumptions}
We consolidate all the assumptions used in this work and provide necessary justifications for them. 
\begin{description}[font=$\bullet$\scshape\bfseries]
\item (A1) The time horizon for the online learner is known to be $n$.
\item (A2)  The parameter $\sigma^2$ of subgaussian noise in the observations is known.
\item (A3) The ground truth denoted by $\theta_{1:n} = [\theta_1,..., \theta_n]^T$ has its total variation bounded by some positive $C_n$, i.e., we take $\cF$ to be the total variation class $\TV(C_n) := \{ \theta_{1:n}\in \R^n :   \|D \theta_{1:n}\|_1 \leq C_n \}$ where  $D$ is the discrete difference operator. Here $D \theta_{1:n} = [\theta_2 - \theta_1, \ldots, \theta_n - \theta_{n-1}]^T$.
\item (A4) $|\theta_1| \le U$.
\end{description}

The knowledge of $\sigma^2$ in assumption (A2) is primarily used to get the optimal dependence of $\sigma$ in minimax rate. This assumption can be relaxed in practice by using the Median Absolute Deviation estimator as described in Section 7.5 of \citet{DJBook} to estimate $\sigma^2$ robustly. Assumption (A3) features a samples from a large class of functions with spatially inhomogeneous degree of smoothness. The functions residing in this class need not even be continuous. Our goal is to propose a policy that is locally adaptive whose empirical mean squared error converges at the minimax rate for this function class.  We stress that we do \emph{not} assume that the learner knows $C_n$. The problem is open and nontrivial even when $C_n$ is known. Assumption (A4) is very mild as it puts restriction only to the first value of the sequence. This assumption controls the inevitable prediction error for the first point in the sequence.

\subsection{Our Results}
The major contributions of this work are summarized below.
\begin{itemize}
    \item It is known that the minimax MSE for \emph{smoothing} sequences in the TV class is $\tilde{\Omega}(n^{-2/3})$. This implies a lowerbound of $\tilde{\Omega}(n^{1/3})$ for the dynamic regret in our setting. 
    We present a policy \ARROWS{} (\textbf{A}daptive \textbf{R}estarting \textbf{R}ule for \textbf{O}nline averaging using \textbf{W}avelet \textbf{S}hrinkage)  with a nearly minimax dynamic regret $\tilde{O}(n^{1/3})$ and a run-time complexity of $O(n\log n)$.
    \item We show that a class of forecasting strategies --- including the popular Online Gradient Descent (OGD) with fixed restarts \citep{besbes2015non}, moving averages (MA) \citep{box1970time} --- are fundamentally limited by $\tilde{\Omega}(\sqrt{n})$ regret.
    \item 
We also provide a more refined lower bound that characterized the dependence of $U,C_n$ and $\sigma$, which certifies the adaptive optimality of \ARROWS{} in all regimes. The bound also reveals a subtle price to pay when we move from the smoothing problem to the forecasting problem, which indicates the separation of the two problems when $C_n/\sigma \gg n^{1/4}$, a regime where the forecasting problem is \emph{strictly} harder (See Figure~\ref{fig:minimax_rate}).
\item Lastly, we consider forecasting sequences in Sobolev classes and Holder classes and establish that \ARROWS{} can automatically \emph{adapt} to the optimal regret of these \emph{simpler} function classes as well, while OGD and MA cannot, unless we change their tuning parameter (to behave suboptimally on the TV class).
\end{itemize}

\section{Related Work}
\label{sec:related_work}
The topic of this paper sits well in between two amazing bodies of literature: nonparametric regression and online learning. Our results therefore contribute to both fields and hopefully will inspire more interplay between the two communities. Throughout this paper when we refer $\tilde{O}(n^{1/3})$ as the optimal regret, we assume the parameters of the problem are such that it is acheivable (see Figure~\ref{fig:minimax_rate}).

\noindent\textbf{Nonparametric regression.}
As we mentioned before, our problem --- online nonparametric forecasting --- is motivated by the idea of using locally adaptive nonparametric regression for time series forecasting \citep{locadapt,l1tf,tibshirani2014adaptive}. It is more challenging than standard nonparametric regression because we do not have access to the data in the future.  While our proof techniques make use of several components (e.g., universal shrinkage) from the seminal work in wavelet smoothing \citep{donoho1990minimax,donoho1998minimax}, the way we use them to construct and analyze our algorithm is new and more generally applicable for converting non-parametric regression methods to forecasting methods.

\noindent\textbf{Adaptive Online Learning.}
Our problem is also connected to a growing literature on adaptive online learning which aims at matching the performance of a stronger time-varying baseline \citep{zinkevich2003online,hall2013dynamical,besbes2015non,chen2018non,jadbabaie2015online,hazan2007adaptive,daniely2015strongly,yang2016tracking,zhang2018adaptive,zhang2018dynamic,chen2018smoothed}.
Many of these settings are highly general and we can apply their algorithms directly to our problem, but to the best of our knowledge, none of them achieves the optimal $\tilde{O}(n^{1/3})$ dynamic regret.

In the remainder of this section, we focus our discussion on how to apply the regret bounds in non-stationary stochastic optimization \citep{besbes2015non,chen2018non} to our problem while leaving more elaborate discussion with respect to alternative models (e.g. the constrained comparator approach \citep{zinkevich2003online,hall2013dynamical}, adaptive regret \citep{jadbabaie2015online,zhang2018adaptive}, competitive ratio \citep{bansal20152,chen2018smoothed}), as well as the comparison to the classical time series models to Appendix~\ref{app:related}. 


%
%



\noindent\textbf{Regret from Non-Stationary Stochastic Optimization}
The problem of non-stationary stochastic optimization is more general than our model because instead of considering only the quadratic functions, $\ell_t(x) = (x -\theta_t)^2$, they work with the more general class of strongly convex functions and general convex functions. They also consider both noisy gradient feedbacks (stochastic first order oracle) and noisy function value feedbacks (stochastic zeroth order oracle).

In particular, \citet{besbes2015non} define a quantity $V_n$ which captures the total amount of ``variation'' of the functions $\ell_{1:n}$ using
$
V_n  :=  \sum_{i=1}^{n-1}  \| \ell_{i+1} - \ell_{i}\|_{\infty}.
$ \footnote{The $V_n$ definition in \citep{besbes2015non} for strongly convex functions are defined a bit differently, the $\|\cdot\|_{\infty}$ is taken over the convex hull of minimizers. This creates some subtle confusions regarding our results which we explain in details in Appendix~\ref{app:lower}.}
\citet{chen2018non} generalize the notion to
$
V_n(p,q)  := \left(\sum_{i=1}^{n-1}  \| \ell_{i+1} - \ell_{i}\|_{p}^q\right)^{1/q}
$
for any $1 \leq p,q \leq +\infty$ where $\|\cdot\|_p := (\int | \cdot(x)|^p dx)^{1/p}$ is the standard $L_p$ norm for functions\footnote{We define $V_n(p,q)$ to be a factor of $n^{-1/q}$ times bigger than the original scaling presented in \citep{chen2018non} so the results become comparable to that of \citep{besbes2015non}.}. Table~\ref{tab:known_results_in_nso} summarizes the known results under the non-stationary stochastic optimization setting.
\begin{table}[h!]
	\centering
	\caption{Summary of known minimax dynamic regret in the non-stationary stochastic optimization model. Note that the choice of $q$ does not affect the minimax rate in any way, but the choice of $p$ does.  ``-'' indicates that the no upper or lower bounds are known for that setting. }\label{tab:known_results_in_nso}
	\resizebox{\textwidth}{!}{
	\begin{tabular}{c|cc|cc}
		\hline
		& \multicolumn{2}{c}{Noisy gradient feedback} &  \multicolumn{2}{c}{Noisy function value feedback} \\\hline
	Assumptions on $\ell_{1:n}$ 	&$p=+\infty$& $1\leq p<+\infty$&$p=+\infty$ &$1\leq p<+\infty$\\\hline
		Convex \& Lipschitz & $\Theta(n^{2/3} V_n^{1/3})$ &$O(n^{\frac{2p+d}{3p+d}} V_n(p,q)^{\frac{p}{3p+d}})$ & - & -\\
		Strongly convex \& Smooth &$\Theta(n^{1/2}V_n^{1/2})$&   $\Theta(n^{\frac{2p+d}{4p+d}} V_n(p,q)^{\frac{2p}{4p+d}})$ & $\Theta(n^{2/3} V_n^{1/3})$&  $\Theta(n^{\frac{4p+d}{6p+d}}V_n(p,q)^{\frac{2p}{6p+d}})$  \\\hline
	\end{tabular}
	}
\end{table}

Our assumption on the underlying trend $\theta_{1:n} \in \cF$ can be used to construct an upper bound of this quantity of variation $V_n$ or $V_n(p,q)$. As a result, the algorithms in non-stationary stochastic optimization and their dynamic regret bounds in Table~\ref{tab:known_results_in_nso} will apply to our problem (modulo additional restrictions on bounded domain). However, our preliminary investigation suggests that this direct reduction does \emph{not}, in general, lead to optimal algorithms. We illustrate this observation in the following example.
\begin{example}\label{ex:nso_for_nof}
Let $\cF$ be the set of all bounded sequences in the total variation class $TV(1)$. 
	It can be worked out that $V_n(p,q) = O(1)$ for all $p,q$. Therefore the smallest regret from \citep{besbes2015non,chen2018non} is obtained by taking $p\rightarrow +\infty$, which gives us a regret of $O(n^{1/2})$.  Note that we expect the optimal regret to be $\tilde{O}(n^{1/3})$ according to the theory of locally adaptive nonparametric regression.
\end{example}

In Example~\ref{ex:nso_for_nof}, we have demonstrated that one cannot achieve  the optimal dynamic regret using known results in non-stationary stochastic optimization.  We show in section \ref{gen_start} that ``Restarting OGD'' algorithm has a fundamental lower bound of $\tilde{\Omega}(\sqrt{n})$ on dynamic regret in the TV class.

\noindent\textbf{Online nonparametric regression.} As we finalize our manuscript, it comes to our attention that our problem of interest in Figure~\ref{fig:nonpara_online_forecasting} can be cast as a special case of the ``online nonparametric regression'' problem \citep{rakhlin2014online,gaillard2015chaining}. The general result of \citet{rakhlin2014online} 
implies the \emph{existence} of an algorithm that enjoys a $\tilde{O}(n^{1/3})$ regret for the TV class without explicitly constructing one, which shows that $n^{1/3}$ is the minimax rate when $C_n= O(1)$ (see more details in Appendix~\ref{app:related}).
To the best of our knowledge, our proposed algorithm remains the first \emph{polynomial time} algorithm with $\tilde{O}(n^{1/3})$ regret and our results reveal more precise (optimal) upper and lower bounds on all parameters of the problem (see Section~\ref{sec:precise_lowerbound}).



\section{Main results}
We are now ready to present our main results. 
\subsection{Limitations of Linear Forecasters}
\label{gen_start}
Restarting OGD as discussed in Example 1, fails to achieve the optimal regret in our setting. A curious question to ask is whether it is the algorithm itself that fails or it is an artifact of a potentially suboptimal regret analysis. To answer this, let's consider the class of linear forecasters --- estimators that outputs a fixed linear transformation of the observations $y_{1:n}$. The following preliminary result shows that Restarting OGD is a linear forecaster . By the results of \citet{donoho1998minimax}, linear smoothers are fundamentally limited in their ability to estimate functions with heterogeneous smoothness. Since forecasting is harder than smoothing, this limitation gets directly translated to the setting of linear forecasters.

\begin{proposition}\label{prop:OGD_lowerbound}
	Online gradient descent with a fixed restart schedule is a linear forecaster. Therefore, it has a dynamic regret of at least $\tilde{\Omega}(\sqrt{n})$.
\end{proposition}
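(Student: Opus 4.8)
The plan is to establish the claim in two stages. First, I would show that unrolling the online gradient descent (OGD) recursion under \emph{any} fixed restart schedule produces a prediction vector that is a data-independent linear (more precisely, affine) image of the observations. Second, I would argue that any such linear forecaster inherits the classical limitation of linear smoothers over the total-variation class, namely a minimax summed squared error of $\tilde\Omega(\sqrt n)$ established through the wavelet-shrinkage theory of \citet{donoho1998minimax}.

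For the first stage, I would compute the (noisy) gradient of the quadratic loss. Since $\ell_t(x) = (x-\theta_t)^2$ and the learner only observes $y_t = \theta_t + Z_t$, the gradient feedback evaluated at the played point is $2(x_t - y_t)$, so the OGD update with (possibly time-varying) step size $\eta_t$ reads $x_{t+1} = (1-2\eta_t)x_t + 2\eta_t y_t$. A straightforward induction shows that, within each block delimited by the fixed restart times, $x_t$ is a fixed linear combination of the past observations in that block, plus a constant stemming from the block's initialization, with coefficients that depend only on the step sizes and the schedule and \emph{not} on the realized data. Concatenating the blocks, the whole prediction vector can be written as $x = Ly + b$, where $L$ is a strictly lower-triangular matrix (causality forces strict triangularity, as $x_t$ uses only $y_{1:t-1}$) and $b$ is a fixed offset. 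This proves that Restarting OGD is a linear forecaster.

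For the second stage, I would first observe that, because $\inf_x \ell_t(x) = 0$ is attained at $x=\theta_t$, the dynamic regret of \eqref{eq:dynamic_regret} collapses to the summed squared error $R_{\text{dynamic}} = \E\|x - \theta_{1:n}\|_2^2 = \E\|Ly + b - \theta_{1:n}\|_2^2$. Thus the forecaster is exactly a (strictly causal) linear estimator of $\theta_{1:n}$ from $y_{1:n}$, and in particular an element of the larger class of all linear smoothers. Its worst-case risk over $\TV(C_n)$ is therefore at least the minimax \emph{linear} risk over that class, $\sup_{\theta_{1:n} \in \TV(C_n)} \E\|Ly+b-\theta_{1:n}\|_2^2 \ge \inf_{\text{linear } S}\sup_{\theta_{1:n}\in\TV(C_n)} \E\|S y - \theta_{1:n}\|_2^2$. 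The theory of \citet{donoho1998minimax} shows that on a total-variation (Besov-type) ball containing spatially inhomogeneous signals, no fixed linear map can simultaneously denoise localized spikes and flat regions, which forces this linear minimax risk to be $\tilde\Omega(\sqrt n)$, strictly above the $\tilde O(n^{1/3})$ achievable by nonlinear shrinkage. Chaining the two displays yields the stated bound.

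I expect the main obstacle to lie in the second-stage reduction rather than in the algebra of the first, and two points need care. The affine offset $b$ must be shown not to help: since $\TV(C_n)$ is invariant under the sign flip $\theta_{1:n}\mapsto-\theta_{1:n}$, a symmetrization argument lets one set $b=0$ without decreasing the worst-case risk, so the homogeneous linear-minimax bound applies (the mild constraint $|\theta_1|\le U$ from (A4) touches only one coordinate and does not affect the rate). Second, one must invoke the \citet{donoho1998minimax} lower bound in the correct normalization for our sequence model --- summed rather than per-coordinate squared error, with noise level $\sigma$ and radius $C_n$ --- to guarantee that the exponent of $n$ is exactly $1/2$; this is precisely where the separation between the linear rate $\sqrt n$ and the nonlinear rate $n^{1/3}$ must be tracked carefully.
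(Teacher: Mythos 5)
Your proposal is correct and follows essentially the same route as the paper's own proof: an inductive unrolling of the OGD update to show the predictions are a fixed linear (affine) transformation of the observations, followed by an appeal to the \citet{donoho1998minimax} minimax lower bound for linear smoothers over the total-variation class. Your write-up is considerably more careful than the paper's two-sentence argument --- in particular the symmetrization step disposing of the affine offset and the explicit reduction of dynamic regret to summed squared error are details the paper glosses over --- but the underlying idea is identical.
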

\begin{proof}
	First, observe that the stochastic gradient is of form $2(x_t - y_t)$ where $x_t$ is what the agent played at time $t$ and $y_t$ is the noisy observation $\theta_t + \text{Independent noise}$.   By the online gradient descent strategy with the fixed restart interval and an inductive argument, $x_t $ is a linear combination of $y_1,...,y_{t-1}$ for any $t$. Therefore, the entire vector of predictions $x_{1:t}$ is a fixed linear transformation of $y_{1:t-1}$.  The fundamental lower bound for linear smoothers from \citet{donoho1998minimax} implies that this algorithm will have a regret of at least $\tilde{\Omega}(\sqrt{n})$.
\end{proof}
The proposition implies that we will need fundamentally new \emph{nonlinear} algorithmic components to achieve the optimal $O(n^{1/3})$ regret, if it is achievable at all!

\subsection{Policy}
In this section, we present our policy \ARROWS{} (Adaptive Restarting Rule for Online averaging using Wavelet Shrinkage). The following notations are introduced for describing the algorithm.
\begin{description}[font=$\bullet$\scshape\bfseries]
\item $t_h$ denotes start time of the current bin and $t$ be the current time point.
\item $\bar{y}_{t_h:t}$ denotes the average of the $y$ values for time steps indexed from $t_h$ to $t$.
\item $pad_0(y_{t_h},...,y_{t})$ denotes the vector $(y_{t_h}-\bar{y}_{t_h:t},...,y_{t}-\bar{y}_{t_h:t})^T$ zero-padded at the end till its length is a power of 2. \emph{i.e}, a re-centered and padded version of observations.
\item $T(x)$ where $x$ is a sequence of values, denotes the element-wise soft thresholding of the sequence with threshold $\sigma\sqrt{\beta\log(n)}$
\item H denotes the orthogonal discrete Haar wavelet transform matrix of proper dimensions
\item Let $Hx = \alpha = [\alpha_1,\alpha_2,...,\alpha_k]^T$ where $k$ being a power of 2 is the length of $x$. Then the vector $[\alpha_2,...,\alpha_k]^T$ can be viewed as a concatenation of $\log_2 k$ contiguous blocks represented by $\alpha [l], l=0,...,\log_2(k) - 1$. Each block $\alpha [l]$ at level $l$ contains $2^{l}$ coefficients. 
\end{description}
\begin{figure}[h!]
	\centering
	\fbox{

		\begin{minipage}{13 cm}
\ARROWS{}: inputs - observed $y$ values, time horizon $n$, std deviation $\sigma$, $\delta \in (0,1]$, a hyper-parameter $\beta > 24$
\begin{enumerate}
    \item Initialize $t_h = 1$, $newBin=1$, $y_0 = 0$
    \item For $t$ = $1$ to $n$:
    \begin{enumerate}
        \item If $newBin == 1$, predict $x_t^{t_h} = y_{t-1}$, else predict $x_t^{t_h} = \bar{y}_{t_h:t-1}$ 
        \item set $newBin = 0$, observe $y_t$ and suffer loss $(x_t^{t_h} - \theta_t)^2$
        \item Let $\tilde{y} = pad_0(y_{t_h},...,y_{t})$  and $k$ be the padded length.
        \item Let $\hat{\alpha}(t_h:t) = T(H\tilde{y})$
        \item Restart Rule: If $\frac{1}{\sqrt{k}} \sum_{l=0}^{\log_2(k) - 1} 2^{l/2} \|\hat{\alpha}(t_h:t)[l] \|_1 > \frac{\sigma}{\sqrt{k}}$ then
        \begin{enumerate}
            \item set $newBin = 1$
            \item set $t_h = t+1$
        \end{enumerate}
    \end{enumerate}
\end{enumerate}
		\end{minipage}
	}
\end{figure}

Our policy is the byproduct of following question: How can one lift a batch estimator that is minimax over the TV class to a minimax online algorithm?

Restarting OGD when applied to our setting with squared error losses reduces to partitioning the duration of game into fixed size chunks and outputting online averages. As described in Section~\ref{gen_start}, this leads to suboptimal regret. However, the notion of averaging is still a good idea to keep. If within a time interval, the Total Variation (TV) is adequately small, then outputting sample averages is reasonable for minimizing the cumulative squared error. Once we encounter a bump in the variation, a good strategy is to restart the averaging procedure. Thus we need to adaptively detect intervals with low TV. For accomplishing this, we communicate with an oracle estimator whose output can be used to construct a lowerbound of TV within an interval. The decision to restart online averaging is based on the estimate of TV computed using this oracle. Such a decision rule introduces non-linearity and hence breaks free of the suboptimal world of linear forecasters.

The oracle estimator we consider here is a slightly modified version of the soft thresholding estimator from \citet{softThreshold95}. We capture the high level intuition behind steps (d) and (e) as follows. Computation of Haar coefficients involves smoothing adjacent regions of a signal and taking difference between them. So we can expect to construct a lowerbound of the total variation $\|D\theta_{1:n}\|_1$ from these coeffcients. The extra thresholding step $T(.)$ in (d) is done to denoise the Haar coefficients computed from noisy data. In step (e), a weighted L1 norm of denoised coefficients is used to lowerbound the total variation of the true signal. The multiplicative factors $2^{l/2}$ are introduced to make the lowerbound tighter. We restart online averaging once we detect a large enough variation. The first coefficient $\hat{\alpha}(t_h:t)_1$ is zero due to the re-centering caused by $pad_0$ operation. The hyper-parameter $\beta$ controls the degree to which we shrink the noisy wavelet coefficients. For sufficiently small $\beta$, It is almost equivalent to the universal soft-thresholding of \citep{softThreshold95}. The optimal selection of $\beta$ is described in Theorem~\ref{thm:main}.

We refer to the duration between two consecutive restarts inclusive of the first restart but exclusive of the second as a bin. The policy identifies several bins across time, whose width is adaptively chosen.

\begin{figure}[htp]
  \centering
  \stackunder{\hspace*{-0.5cm}\includegraphics[width=0.6\textwidth,height=0.7\textheight,keepaspectratio]{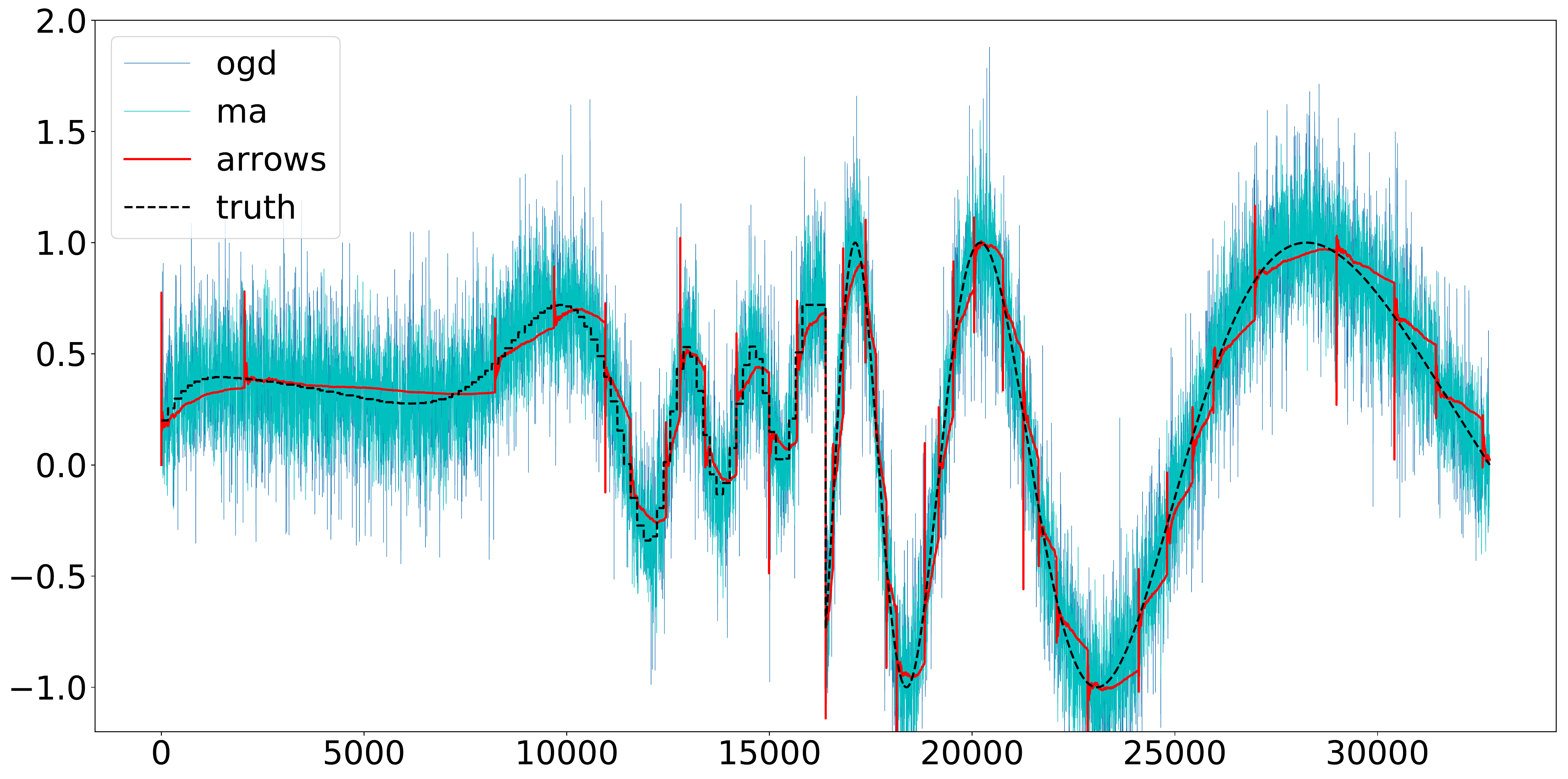}}{}%
  \stackunder{\includegraphics[width=0.4\textwidth,height=0.2\textheight]{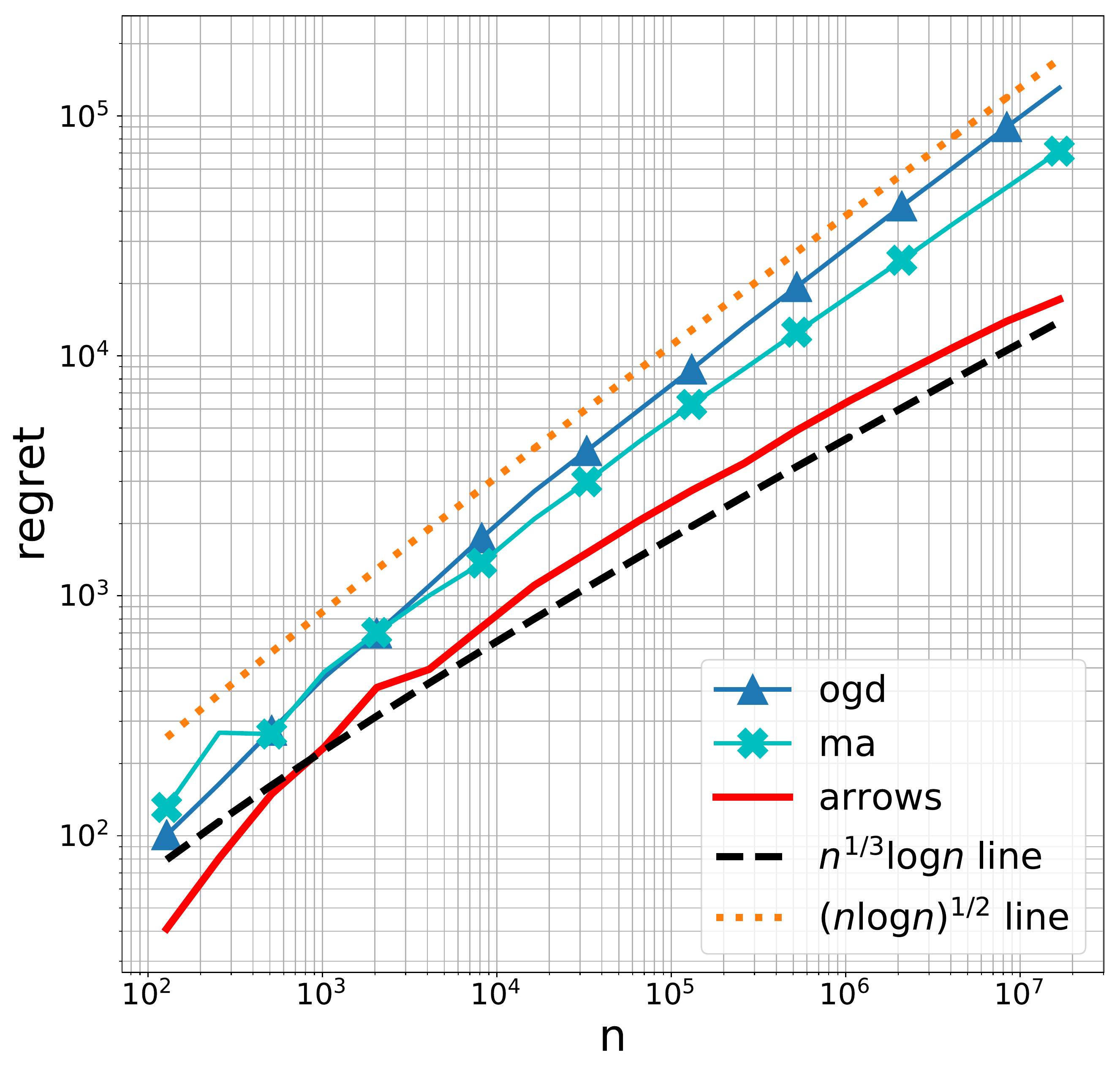}}{}
   \caption{\emph{An illustration of \ARROWS{} on a sequence with heterogeneous smoothness. 
    We compare qualitatively (on the left) and quantitatively (on the right) to two popular baselines: (a) restarting online gradient descent \citep{besbes2015non}; (b) the moving averages \citep{box1970time} with optimal parameter choices. As we can see, \ARROWS{} achieves the optimal $\tilde O(n^{1/3})$ regret while the baselines are both suboptimal. }}\label{fig:illustration}  
\end{figure}

\subsection{Dynamic Regret of \ARROWS{}}
\label{sec:analysis}
In this section, we provide bounds for non-stationary regret and run-time of the policy.

\begin{theorem}
    \label{thm:main}
    Let the feedback be $y_t = \theta_t + Z_t$, $t=1,\ldots,n$ and $Z_t$ be independent, $\sigma$-subgaussian random variables. 
    If $\beta = 24+\frac{8\log(8/\delta)}{\log(n)}$, then with probability at least $1-\delta$, \ARROWS{} achieves 
    a dynamic regret of $\tilde{O}(n^{1/3}\|D\theta_{1:n}\|_1^{2/3}\sigma^{4/3} +|\theta_1|^2 +  \|D\theta_{1:n}\|_2^2 + \sigma^2 )$ where $\tilde{O}$ hides a logarithmic factor in $n$ and $1/\delta$.
\end{theorem}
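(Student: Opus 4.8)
The plan is to partition the horizon into the data-adaptive bins produced by the restart rule, bound the cumulative error inside each bin by a variance term plus a bias term, and then control the number of bins. Fix a bin $[t_h,t_e]$ of length $m$. For $t>t_h$ the prediction is the running mean $\bar y_{t_h:t-1}$, so I would decompose $x_t-\theta_t=(\bar y_{t_h:t-1}-\bar\theta_{t_h:t-1})+(\bar\theta_{t_h:t-1}-\theta_t)$. The first (noise) part has variance $\sigma^2/(t-t_h)$; summing over the bin and using a subgaussian tail bound for the resulting quadratic form gives a variance contribution of $\tilde O(\sigma^2)$ per bin with high probability. The first point of each bin is predicted by the previous observation $y_{t_h-1}=\theta_{t_h-1}+Z_{t_h-1}$, whose error is at most $2(\theta_{t_h}-\theta_{t_h-1})^2+2Z_{t_h-1}^2$; summed over bins these yield the $\|D\theta_{1:n}\|_2^2$ term and another $\tilde O(\sigma^2)$-per-bin term, while the very first prediction $x_1=y_0=0$ produces the $|\theta_1|^2$ term.

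Two deterministic facts about the Haar statistic drive the rest. Writing $\alpha=H\tilde y$ and $Q(\alpha)=\sum_l 2^{l/2}\|\alpha[l]\|_1$, I would first establish the norm inequality $\|\alpha\|_2^2\le 2\,Q(\alpha)^2$ (bound each level by $\|\alpha[l]\|_2\le\|\alpha[l]\|_1\le 2^{-l/2}Q(\alpha)$ and sum the geometric series), and second the two-sided comparison between $Q$ of the \emph{noiseless} centered signal and its total variation, $c_1\sqrt m\,\|D\theta_{t_h:t_e}\|_1\le Q(\alpha_\theta)\le c_2\sqrt m\log m\,\|D\theta_{t_h:t_e}\|_1$, obtained by writing each Haar coefficient as a scaled difference of block averages and bounding it by the variation over its (per level, disjoint) support. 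Since $\sum_t(\bar\theta_{t_h:t-1}-\theta_t)^2$ can be related, up to a $\log m$ factor, to $\|\theta_{t_h:t_e}-\bar\theta_{t_h:t_e}\mathbf 1\|_2^2=\|\alpha_\theta\|_2^2$, the first inequality is the mechanism for converting a bound on $Q$ into a bound on the per-bin bias.

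Next I would pass from the noiseless $Q(\alpha_\theta)$ to the statistic $Q(\hat\alpha)$ actually computed by \ARROWS{}. On the high-probability event that every Haar coefficient of the noise over every candidate bin satisfies $|\xi_{l,j}|\le\lambda:=\sigma\sqrt{\beta\log n}$ --- guaranteed by a union bound over the $O(n^2)$ bins and their coefficients, which is exactly where $\beta>24$ and the $\log(8/\delta)$ offset are spent --- soft thresholding is a contraction with $|\hat\alpha_{l,j}|\le|\alpha_{l,j}|$, so $Q(\hat\alpha)\le Q(\alpha_\theta)$ holds \emph{deterministically} on this event. For a completed bin the rule fired, so $Q(\hat\alpha)>\sigma$, hence $Q(\alpha_\theta)>\sigma$, and the upper comparison gives $\sqrt m\,\|D\theta_{t_h:t_e}\|_1\gtrsim\sigma/\log n$; thus each completed bin consumes variation $V_b\gtrsim\sigma/(\sqrt{m_b}\log n)$. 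Summing $\sum_b V_b\le\|D\theta_{1:n}\|_1$ and applying Jensen to the convex map $t\mapsto t^{-1/2}$ (which yields $\sum_b m_b^{-1/2}\ge B^{3/2}/\sqrt n$ under $\sum_b m_b\le n$) bounds the number of bins by $B=\tilde O(n^{1/3}\|D\theta_{1:n}\|_1^{2/3}\sigma^{-2/3})$, and multiplying by the $\tilde O(\sigma^2)$-per-bin error produces the leading $\tilde O(n^{1/3}\|D\theta_{1:n}\|_1^{2/3}\sigma^{4/3})$ term.

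The hard part is the reverse direction needed for the bias of the \emph{active} bins: a small computed statistic $Q(\hat\alpha)\le\sigma$ must be shown to force a small true bias energy $\|\alpha_\theta\|_2^2$. This is where the Donoho--Johnstone soft-thresholding oracle inequality \citep{donoho1998minimax,softThreshold95} enters and where care is genuinely required, since noise can annihilate true coefficients whose magnitude lies below $\lambda$. I would split the coefficients at $\lambda$: the supra-threshold coefficients are recovered to within $2\lambda$ and satisfy $Q(\alpha_\theta^{>\lambda})\lesssim Q(\hat\alpha)\le\sigma$, so by $\|\cdot\|_2^2\le 2Q(\cdot)^2$ they cost only $\tilde O(\sigma^2)$; the sub-threshold coefficients are precisely the oscillatory, ``below the noise floor'' content that the detector cannot and should not react to, and their energy is what the additive $\|D\theta_{1:n}\|_2^2$ term absorbs. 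Making this split quantitative --- bounding the sub-threshold energy not already charged to $\sigma^2$ by $\|D\theta_{1:n}\|_2^2$ up to logarithmic factors, uniformly over bins --- is the main obstacle, and I expect it to need the full wavelet-shrinkage risk bound rather than coefficient-wise estimates, which blow up under the $2^{l/2}$ weights at fine scales. Assembling the per-bin variance, the per-bin bias, the boundary and first-point terms, and the bin count $B$ then yields the claimed $\tilde O(n^{1/3}\|D\theta_{1:n}\|_1^{2/3}\sigma^{4/3}+|\theta_1|^2+\|D\theta_{1:n}\|_2^2+\sigma^2)$ bound.
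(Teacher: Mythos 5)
Your overall architecture --- the per-bin decomposition into variance, first-point, and running-mean bias terms, the uniform-shrinkage step giving $Q(\hat\alpha)\le Q(\alpha_\theta)$ on the good event, the Haar-to-TV comparison forcing each completed bin to consume variation $\gtrsim\sigma/(\sqrt{m_b}\log n)$, and the Jensen/H\"older bin count $M=\tilde O(n^{1/3}\|D\theta_{1:n}\|_1^{2/3}\sigma^{-2/3})$ --- matches the paper's proof closely, and that half of the argument is sound. The gap is in how you close the bias bound for a completed bin. You propose to split the true Haar coefficients at the threshold $\lambda$, charge the supra-threshold part to $Q(\hat\alpha)\le\sigma$, and absorb the sub-threshold energy into the additive $\|D\theta_{1:n}\|_2^2$ term. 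That last step fails: the sub-threshold energy $\sum_i\min(\alpha_i^2,\lambda^2)$ of the centered signal over a bin of length $m_b$ with variation $V_b$ is generically of order $m_b^{1/3}V_b^{2/3}\sigma^{4/3}$ up to logarithms --- this is exactly the minimax smoothing risk over the Besov body containing the TV ball --- and it can exceed $\|D\theta_{t_h:t_e}\|_2^2$ by an unbounded factor. A staircase with $m_b$ equal jumps summing to $V_b$ has $\|D\theta\|_2^2=V_b^2/m_b$ while its sub-threshold Haar energy is $\asymp m_b^{1/3}V_b^{2/3}\sigma^{4/3}$, so no choice of constants or logarithmic factors lets you hide this quantity inside $\|D\theta_{1:n}\|_2^2$.

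The resolution in the paper is that the sub-threshold energy is \emph{not} an additive nuisance but a second contribution to the leading term. Concretely, the paper bounds the centered signal energy by the triangle inequality $\|\alpha_\theta\|_2\le\|\hat\alpha\|_1+\|\hat\alpha-\alpha_\theta\|_2$, controls $\|\hat\alpha\|_1\le Q(\hat\alpha)\le\sigma$ via the restart rule (Lemma 1 of the appendix, applied at the last step before the rule fires), and controls the estimation error by the Donoho--Johnstone oracle inequality $\|\hat\alpha-\alpha_\theta\|_2^2\lesssim\log n\sum_i\min(\alpha_i^2,\sigma^2)\lesssim\log n\, m_b^{1/3}V_b^{2/3}\sigma^{4/3}$ (Theorem 4 and Corollary 1 there), uniformly over all $O(n^2)$ candidate bins. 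These per-bin risks are then aggregated across bins with H\"older's inequality ($p=3$, $q=3/2$), using $\sum_b m_b\le 2n$ and $\sum_b V_b\le\|D\theta_{1:n}\|_1$, to give a second $\tilde O(n^{1/3}\|D\theta_{1:n}\|_1^{2/3}\sigma^{4/3})$ term. So the leading term has two sources --- bin count times per-bin variance, and the H\"older-aggregated per-bin estimation errors --- whereas your accounting has only the first; the $\|D\theta_{1:n}\|_2^2$ term comes solely from the first-point-of-bin errors and the cross terms $(\theta_t-\theta_{t-1})^2$ generated by the FTL reduction. Once you reroute the sub-threshold energy this way, the rest of your plan goes through.
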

\begin{proof}[Proof Sketch]
Our policy is similar in spirit to restarting OGD but with an adaptive restart schedule. The key idea we used is to reduce the dynamic regret of our policy in probability roughly to a sum of squared error of a soft thresholding estimator and number of restarts. This was accomplished by using a Follow The Leader (FTL) reduction. For bounding the squared error part of the sum we modified the threshold value for the estimator in \citet{softThreshold95} and proved high probability guarantees for the convergence of its empirical mean. To bound the number of times we restart, we first establish a connection between Haar coefficients and total variation. This is intuitive since computation of Haar coefficients can be viewed as smoothing the adjacent regions of a signal and taking their difference. Then we exploit a special condition called ``uniform shrinkage'' of the soft-thresholding estimator which helps to optimally bound the number of restarts with high probability.
\end{proof}

Theorem~\ref{thm:main} provides an upper bound of the minimax dynamic regret for forecasting the TV class.
\begin{corollary}
Suppose the ground truth $\theta_{1:n} \in TV(C_n)$ and $|\theta_1| \le U$. Then $\|D\theta_{1:n}\|_1 \le C_n$. By noting that $\|D\theta_{1:n}\|_2 \le \|D\theta_{1:n}\|_1$, under the setup in Theorem \ref{thm:main} \ARROWS{} achieves a dynamic regret of $\tilde{O}(n^{1/3}C_n^{2/3}\sigma^{4/3} +U^2 +  C_n^2 + \sigma^2 )$ with probability at-least $1-\delta$.
\end{corollary}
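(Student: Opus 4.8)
The plan is to derive this directly from Theorem~\ref{thm:main} by substituting the class-level bounds on each of the norms appearing in the generic regret expression. First I would invoke Theorem~\ref{thm:main} under the same hyper-parameter choice $\beta = 24 + 8\log(8/\delta)/\log(n)$ and the same subgaussian noise model. This guarantees that, on an event of probability at least $1-\delta$, the dynamic regret of \ARROWS{} is at most $\tilde{O}(n^{1/3}\|D\theta_{1:n}\|_1^{2/3}\sigma^{4/3} + |\theta_1|^2 + \|D\theta_{1:n}\|_2^2 + \sigma^2)$. Everything that follows is deterministic post-processing on this same event, so the final probability statement is inherited unchanged.

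Next I would control each of the data-dependent quantities by the class parameters. The membership $\theta_{1:n} \in TV(C_n)$ is, by Assumption (A3), exactly the statement $\|D\theta_{1:n}\|_1 \le C_n$, which handles the leading term. Assumption (A4) gives $|\theta_1| \le U$, so $|\theta_1|^2 \le U^2$. For the middle term I would apply the elementary comparison $\|v\|_2 \le \|v\|_1$, valid for any vector, to $v = D\theta_{1:n}$, obtaining $\|D\theta_{1:n}\|_2 \le \|D\theta_{1:n}\|_1 \le C_n$ and hence $\|D\theta_{1:n}\|_2^2 \le C_n^2$.

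The only point deserving a sentence of justification is that substituting these upper bounds into the $\tilde{O}$-expression is legitimate. This holds because the bound is a sum of terms, each of which is monotonically nondecreasing in the quantity being replaced --- the exponents $2/3$ on $\|D\theta_{1:n}\|_1$ and $2$ on $|\theta_1|$ and $\|D\theta_{1:n}\|_2$ are all positive --- so replacing each quantity by a larger one can only increase the bound. Collecting the four terms yields the claimed regret $\tilde{O}(n^{1/3}C_n^{2/3}\sigma^{4/3} + U^2 + C_n^2 + \sigma^2)$, with the logarithmic factors in $n$ and $1/\delta$ carried over verbatim from Theorem~\ref{thm:main}. I do not anticipate any genuine obstacle: the corollary is a pure specialization, and its entire content is the monotone substitution of $C_n$, $U$, and the $\ell_2 \le \ell_1$ comparison into a result already established.
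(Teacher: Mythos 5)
Your proposal is correct and matches the paper's own treatment: the corollary is stated as an immediate specialization of Theorem~\ref{thm:main}, obtained by substituting $\|D\theta_{1:n}\|_1 \le C_n$, $|\theta_1|\le U$, and $\|D\theta_{1:n}\|_2 \le \|D\theta_{1:n}\|_1$ into the generic bound (the same substitution appears explicitly at the end of the proof of Theorem~\ref{thm:main} in the appendix). Your added remark about monotone substitution being legitimate is a fine, if unnecessary, bit of care.
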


\begin{remark}[Adaptivity to unknown parameters.]
	Observe that \ARROWS{} does not require the knowledge of $C_n$.It adapts optimally to the unknown TV radius $C_n := \|D\theta_{1:n}\|_1$ of the ground truth $\theta_{1:n}$. The adaptivity to $n$ can be achieved by a standard doubling trick. $\sigma$, if unknown, can be robustly estimated from the first few observations by a Median Absolute Deviation estimator (eg. Section 7.5 of \citet{DJBook}), thanks to the sparsity of wavelet coefficients of TV bounded functions. 
\end{remark}

\subsection{A lower bound on the minimax regret}\label{sec:precise_lowerbound}
We now give a matching lower bound of the expected regret, which establishes that \ARROWS{} is adaptively minimax.
\begin{proposition}\label{prop:lowerbound}
	Assume $\min\{U, C_n\} > 2\pi\sigma$ and $n > 3$, there is a universal constant $c$ such that
	$$
	\inf_{x_{1:n}} \sup_{\theta_{1:n}\in \TV(C_n)}  \E\left[\sum_{t=1}^n \big(x_t(y_{1:t-1}) - \theta_t\big)^2 \right] \geq  c(U^2 + C_n^2 + \sigma^2\log n +  n^{1/3} C_n^{2/3}\sigma^{4/3}).
	$$	
\end{proposition}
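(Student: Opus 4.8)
The plan is to prove the bound by establishing the four summands as four \emph{separate} lower bounds, since for nonnegative $a,b,c,d$ we have $a+b+c+d \le 4\max\{a,b,c,d\}$, so it suffices to show the minimax risk is $\gtrsim$ each term individually (with a regime caveat for the fourth term, explained below). Throughout I would use two standard reductions: since $N(0,\sigma^2)$ is $\sigma$-subgaussian, it is enough to lower bound the risk under Gaussian noise; and the minimax risk dominates the Bayes risk under any prior supported on $\TV(C_n)\cap\{|\theta_1|\le U\}$. The structural feature I exploit everywhere is that the forecaster's move $x_t$ is measurable with respect to $y_{1:t-1}$ only, so its one-step risk at time $t$ is controlled by how much the past reveals about $\theta_t$.

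For the three lower-order terms I would use one-coordinate priors. For $U^2$: take the constant sequences $\theta\equiv \pm U$ (both in $\TV(C_n)$ with $|\theta_1|=U$); since $x_1$ is a data-independent constant, $\max\{(x_1-U)^2,(x_1+U)^2\}\ge U^2$. For $C_n^2$: set $\theta_1=\dots=\theta_{n-1}=0$ and place the two-point prior $\theta_n\in\{\pm C_n\}$ (total variation equals $C_n$); because $y_{1:n-1}$ is independent of $\theta_n$, the Bayes risk at time $n$ is exactly $x_n^2+C_n^2\ge C_n^2$. For $\sigma^2\log n$: take a constant sequence $\theta\equiv\mu$ with a smooth prior on $\mu\in[-U,U]$ whose Fisher information is the minimal $I(\lambda)=\pi^2/U^2$, and apply the van Trees inequality at each $t$: the data $y_{1:t-1}$ carry Fisher information $(t-1)/\sigma^2$ about $\mu$, so the Bayes one-step risk is $\ge\big((t-1)/\sigma^2+\pi^2/U^2\big)^{-1}$. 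The hypothesis $U>2\pi\sigma$ forces $\pi^2\sigma^2/U^2<1/4$, making each term $\gtrsim\sigma^2/t$, and summing over $t=1,\dots,n$ gives $\gtrsim\sigma^2\log n$.

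The main term $n^{1/3}C_n^{2/3}\sigma^{4/3}$ is where the real work lies. I would partition $\{1,\dots,n\}$ into $m$ contiguous blocks of length $L=n/m$ and, on block $k$, set $\theta_t\equiv\mu_k$ with independent priors of scale $\delta$; choosing $\delta=C_n/(2m)$ keeps the total variation $\le 2m\delta\le C_n$ surely. Since the block levels are independent, the only information the forecaster has about $\mu_k$ at the $j$-th point of block $k$ comes from the $j-1$ in-block observations, so a per-point van Trees (or sign-testing) bound gives one-step risk $\gtrsim\big((j-1)/\sigma^2+\pi^2/\delta^2\big)^{-1}$; summing over the block yields $\gtrsim\min\{L\delta^2,\sigma^2\}$ and over the $m$ blocks $\gtrsim\min\{n\delta^2,m\sigma^2\}$. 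Trading off the TV budget against confusability by balancing $L\delta^2\asymp\sigma^2$ gives $m\asymp(nC_n^2/\sigma^2)^{1/3}$ and $\delta\asymp C_n^{1/3}\sigma^{2/3}n^{-1/3}$, producing the claimed $m\sigma^2\asymp n^{1/3}C_n^{2/3}\sigma^{4/3}$.

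I expect two obstacles. The main one is carrying out the block argument while respecting the filtration: unlike the smoothing lower bound, the per-point difficulty decays within a block as in-block samples accumulate, so I must integrate the position-dependent risk rather than assign a uniform $\delta^2$ to every coordinate — this is exactly the feature that makes the first point of each block (and, in the extreme, the isolated jump used for the $C_n^2$ term) maximally hard, and it is the source of the forecasting/smoothing separation. The second is regime bookkeeping: the block construction is valid only when $1\le m\le n$, i.e. roughly $\sigma/\sqrt n\lesssim C_n\lesssim n\sigma$; the lower end is guaranteed by $C_n>2\pi\sigma$, while for $C_n\gtrsim n\sigma$ one checks $n^{1/3}C_n^{2/3}\sigma^{4/3}\le C_n^2$, so the already-established $C_n^2$ bound dominates and the sum bound still follows from the max of the remaining three terms. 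Assembling the four constants and confirming that each prior lives in $\TV(C_n)\cap\{|\theta_1|\le U\}$ — which is precisely where $\min\{U,C_n\}>2\pi\sigma$ is used — completes the argument.
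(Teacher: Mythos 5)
Your proposal is correct, and for the lower-order terms ($U^2$ via a two-point prior on a constant sequence, $C_n^2$ via a single unpredictable jump, $\sigma^2\log n$ via a bounded-normal-mean argument summed over the harmonic series) it coincides with the paper's proof in all but cosmetic details: the paper places the jump at $t=2$ rather than $t=n$, and invokes Bickel's exact minimax risk for the bounded normal mean where you invoke van Trees with the minimal-Fisher-information prior — these give the same $\bigl((t-1)/\sigma^2+\pi^2/U^2\bigr)^{-1}$ bound and the same role for the hypothesis $\min\{U,C_n\}>2\pi\sigma$. Both proofs then combine the pieces by averaging the separate lower bounds. The genuine divergence is in the main term $n^{1/3}C_n^{2/3}\sigma^{4/3}$: the paper simply observes that a forecaster is dominated by a smoother and cites the Donoho--Johnstone minimax lower bound for the TV/Besov class (with the explicit $C_n,\sigma$ dependence traced to Birg\'e--Massart), whereas you reconstruct it from scratch with an $m$-block piecewise-constant prior of amplitude $\delta=C_n/(2m)$, a per-point van Trees bound $\bigl((j-1)/\sigma^2+\pi^2/\delta^2\bigr)^{-1}$ respecting the filtration, and the balance $L\delta^2\asymp\sigma^2$. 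Your route is longer and requires the regime bookkeeping $1\le m\le n$ (which you handle correctly by noting that $C_n\gtrsim n\sigma$ forces $n^{1/3}C_n^{2/3}\sigma^{4/3}\le C_n^2$), plus a small check that $\delta\le U$ so the prior respects $|\theta_1|\le U$ — easily arranged since $\delta\lesssim\sigma n^{-1/4}$ in the regime where this term matters, or by pinning $\mu_1=0$. What it buys is self-containedness and a proof that lives natively in the sequential setting, making transparent exactly where the forecasting and smoothing problems give the same rate (the block term) and where they separate (the isolated jump); the paper's citation is shorter but imports the batch result as a black box. Either argument is a valid proof of the proposition.
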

The proof is deferred to the Appendix~\ref{app:lower}. The result shows that our result in Theorem~\ref{thm:main} is optimal up to a logarithmic term in $n$ and $1/\delta$ for almost all regimes (modulo trivial cases of extremely small $\min\{U,C_n\}/\sigma$ and $n$)\footnote{When both $U$ and $C_n$ are moderately small relative to $\sigma$, the lower bound will depend on $\sigma$ a little differently because the estimation error goes to $0$ faster than $1/\sqrt{n}$. We know the minimax risk exactly for that case as well but it is somewhat messy \citep[see e.g.][]{wasserman2006book}. When they are both much smaller than $\sigma$, e.g., when $ \min\{U,C_n\} \leq \sigma /\sqrt{n}$, then outputting $0$ when we do not have enough information will be better than doing online averages. }.

\begin{remark}[The price of forecasting]
	The result also shows that \emph{forecasting is strictly harder than smoothing}.
	Observe that a term with $C_n^2$ is required even if $\sigma = 0$, whereas in the case of a one-step look-ahead oracle (or the smoothing algorithm that sees all $n$ observations) does not have this term. This implies that the total amount of variation that \emph{any} algorithm can handle while producing a sublinear regret has dropped from $C_n= o(n)$ to $C_n = o(\sqrt{n})$. Moreover, the regime where the $n^{1/3} C_n^{2/3}\sigma^{4/3}$ term is meaningful only when $C_n = o(n^{1/4})$. For the region where $\sigma n^{1/4} \ll C_n \ll \sigma n^{1/2}$, the minimax regret is essentially proportional to $C_n^2$. This is illustrated in Figure~\ref{fig:minimax_rate}.
\end{remark}

\begin{figure}[tbh]
\centering
\includegraphics[width=0.7\textwidth]{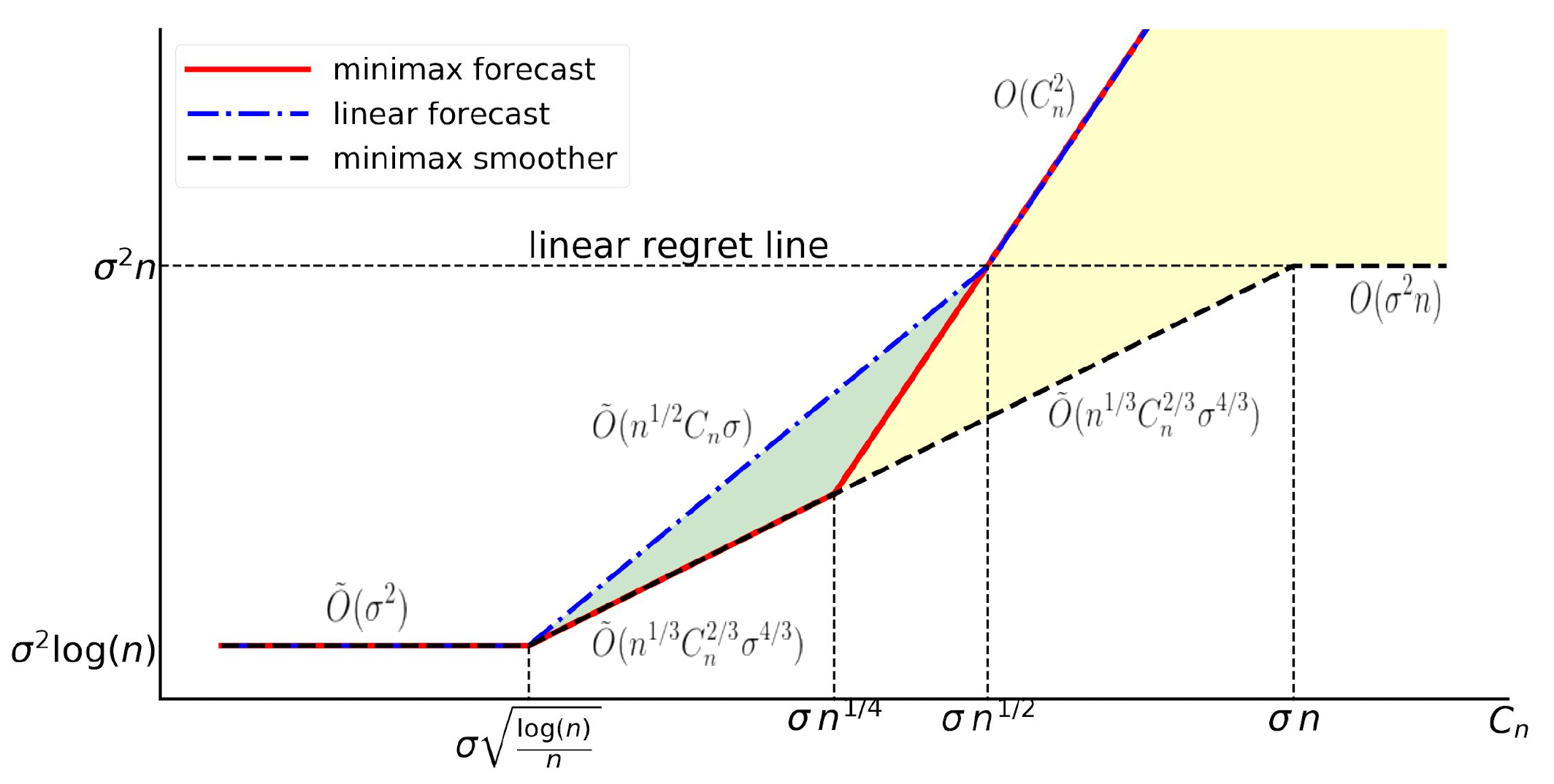}
\caption{\emph{An illustration of the minimax (dynamic) regret of forecasters and smoothers as a function of $C_n$. The non-trivial regime for forecasting is when $C_n$ lies between $\sigma\sqrt{\frac{\log(n)}{n}}$ and $\sigma\:n^{1/4}$ where forecasting is just as hard as smoothing. 
When $C_n > \sigma\:n^{1/4}$, forecasting is harder than smoothing. The yellow region indicates the extra loss incurred by any minimax forecaster. The green region marks the extra loss incurred by a linear forecaster compared to minimax forecasting strategy. The figure demonstrates that linear forecasters are sub-optimal even in the non-trivial regime. When $C_n > \sigma\:n^{1/2}$, it is impossible to design a forecasting strategy with sub-linear regret. For $C_n > \sigma\:n$, identity function is optimal estimator for smoothing and when when $C_n < \sigma\sqrt{\frac{\log(n)}{n}}$, online averaging is optimal for both problems.}} \label{fig:minimax_rate}
\end{figure}

We note that in much of the online learning literature, it is conventional to consider a slightly more restrictive setting with bounded domain, which could reduce the minimax regret. The following remark summarizes a variant of our results in this setting.
\begin{remark}[Minimax regret in bounded domain]
    If we consider predicting sequences from a subset of the $TV(C_n)$ ball having an extra boundedness condition $|\theta_i| \le B$ for $i = 1 \ldots n$, it can be shown that \ARROWS{} achieves a dynamic regret of $\tilde{O}(n^{1/3}C_n^{2/3}\sigma^{4/3} +B^2 +  B C_n + \sigma^2 )$ with probability at least $1-\delta$. The discussion in Appendix \ref{app:lower}, establishes a matching lower bound in this more constrained setting. In particular, forecasting is still strictly harder than smoothing due to the $BC_n$ factor in the bound.
\end{remark}

\subsection{The adaptivity of \ARROWS{} to Sobolev and Holder classes}
It turns out that \ARROWS{} is also adaptively optimal in forecasting sequences in the discrete Sobolev classes and the discrete Holder classes, which are defined as
\begin{align}
    \cS(C_n') = \{\theta_{1:n} : \|D \theta_{1:n}\|_2 \le C_n' \}, &&     \cH(B_n') = \{\theta_{1:n} : \|D \theta_{1:n}\|_\infty \le B_n' \}.
\end{align}
These classes feature sequences that are more spatially homogeneous than those in the TV class. The minimax cumulative error of nonparametric estimation in the discrete Sobolev class is $\Theta(n^{2/3}[C'_n]^{2/3}\sigma^{4/3})$ \citep[see e.g.,][Theorem 5 and 6]{sadhanala2016total}.

\begin{corollary}
    \label{thm:main_sobelov1}
    Let the feedback be $y_t = \theta_t + Z_t$ where $Z_t$ is an independent, $\sigma$-subgaussian random variable. 
    Let $\theta_{1:n} \in \cS(C_n')$ and $|\theta_1| \le U$. If $\beta = 24+\frac{8\log(8/\delta)}{\log(n)}$, then with probability at least $1-\delta$, \ARROWS{} achieves 
    a dynamic regret of $\tilde{O}(n^{2/3}[C_n']^{2/3}\sigma^{4/3} +U^2 +  [C_n^']^2 + \sigma^2 )$ where $\tilde{O}$ hides a logarithmic factor in $n$ and $1/\delta$.
\end{corollary}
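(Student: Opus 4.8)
The plan is to derive this corollary directly from Theorem~\ref{thm:main} by controlling the two norms of $D\theta_{1:n}$ that appear in the general regret bound in terms of the Sobolev radius $C_n'$. The essential point is that \ARROWS{} requires no knowledge of any smoothness radius, so the algorithm, its tuning $\beta = 24+\frac{8\log(8/\delta)}{\log(n)}$, and the high-probability event from Theorem~\ref{thm:main} all carry over verbatim; only the final bookkeeping that rewrites the bound in terms of $C_n'$ changes.

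First I would recall that $D\theta_{1:n} \in \R^{n-1}$ and that membership $\theta_{1:n} \in \cS(C_n')$ means precisely $\|D\theta_{1:n}\|_2 \le C_n'$. This immediately disposes of three of the four terms in Theorem~\ref{thm:main}: the $\|D\theta_{1:n}\|_2^2 \le [C_n']^2$ term, the $|\theta_1|^2 \le U^2$ term, and the unchanged $\sigma^2$ term. The only real work is the leading term $n^{1/3}\|D\theta_{1:n}\|_1^{2/3}\sigma^{4/3}$, which is stated via the $\ell_1$ norm rather than the $\ell_2$ norm that the Sobolev assumption controls.

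To bridge this gap I would apply Cauchy--Schwarz to pass from $\ell_1$ to $\ell_2$: for any $v \in \R^{n-1}$ we have $\|v\|_1 \le \sqrt{n-1}\,\|v\|_2 \le \sqrt{n}\,\|v\|_2$, and hence $\|D\theta_{1:n}\|_1 \le \sqrt{n}\,C_n'$. Substituting this into the leading term gives
$$
n^{1/3}\|D\theta_{1:n}\|_1^{2/3}\sigma^{4/3} \le n^{1/3}\big(\sqrt{n}\,C_n'\big)^{2/3}\sigma^{4/3} = n^{2/3}[C_n']^{2/3}\sigma^{4/3},
$$
which is exactly the claimed leading term. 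Combining the four bounds inside the $\tilde{O}(\cdot)$ yields the stated regret, on the same event of probability at least $1-\delta$.

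There is no substantive obstacle here: the argument is a single norm comparison plugged into the already-established Theorem~\ref{thm:main}, with the adaptivity of \ARROWS{} doing the heavy lifting. The only point worth emphasizing is that the reduction is \emph{lossless} in the exponents --- Cauchy--Schwarz is tight precisely for the spatially spread-out sequences that witness the Sobolev minimax rate --- so the resulting $n^{2/3}[C_n']^{2/3}\sigma^{4/3}$ matches the known minimax cumulative error $\Theta(n^{2/3}[C_n']^{2/3}\sigma^{4/3})$ for the discrete Sobolev class. This certifies that \ARROWS{} is adaptively optimal over $\cS(C_n')$ as well, without any retuning of its hyper-parameters.
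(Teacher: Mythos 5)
Your proposal is correct and follows essentially the same route as the paper: the paper's Appendix proof likewise takes the bound of Theorem~\ref{thm:main} in its explicit form (in terms of $\|D\theta_{1:n}\|_1$ and $\|D\theta_{1:n}\|_2$), applies the norm inequality $\|x\|_1 \le n^{1/2}\|x\|_2$ to get $\|D\theta_{1:n}\|_1 \le \sqrt{n}\,C_n'$ (phrased there as embedding $\cS(C_n')$ into $\TV(\sqrt{n}C_n')$), and substitutes to obtain the $n^{2/3}[C_n']^{2/3}\sigma^{4/3}$ leading term. Your remaining bookkeeping of the $U^2$, $[C_n']^2$, and $\sigma^2$ terms matches the paper's as well.
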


\begin{table}
	\centering
	\caption{\emph{Minimax rates for cumulative error $\sum_{i=1}^{n} (\hat{\theta}_i - \theta)^2$ in various settings and policies that achieve those rates.
			\ARROWS{} is adaptively minimax across all of the described function classes while linear forecasters fail to perform optimally over the TV class. For simplicity, we assume $U$ is small and hide a $\log n$ factors in all the forecasting rates.}
	}\label{tab:results_summary}
	\resizebox{\textwidth}{!}{
		\begin{tabular}{|cc|c|c|c|} 
			\hline
			\multicolumn{2}{|c|}{Class}                                     & \begin{tabular}[c]{@{}l@{}} Minimax rate for \\Forecasting\end{tabular} & \begin{tabular}[c]{@{}l@{}}Minimax rate for \\Smoothing\end{tabular} & \begin{tabular}[c]{@{}l@{}}Minimax rate for \\Linear Forecasting\end{tabular}  \\ 
			\hline
			TV      & $\|D\theta_{1:n}\|_1 \le C_n$  & $n^{1/3}C_n^{2/3}\sigma^{4/3} + C_n^2 + \sigma^2$     & $n^{1/3}C_n^{2/3}\sigma^{4/3} + \sigma^2$  & $n^{1/2}C_n\sigma + C_n^2 +              \sigma^2$                                                                              \\ 
			\hline
			Sobolev & $\|D\theta_{1:n}\|_2 \le C_n'$  & $n^{2/3}[C_n']^{2/3}\sigma^{4/3} + [C_n']^2 + \sigma^2$ & $n^{2/3}[C_n']^{2/3}\sigma^{4/3} + \sigma^2$ & $n^{2/3}[C_n']^{2/3}\sigma^{4/3} + [C_n']^2 + \sigma^2$                                                                            \\ 
			\hline
			Holder  & $\|D\theta_{1:n}\|_{\infty}\leq L_n$  & $nL_n^{2/3}\sigma^{4/3} + nL_n^2 + \sigma^2$  & $nL_n^{2/3}\sigma^{4/3} + \sigma^2$ & $nL_n^{2/3}\sigma^{4/3} + nL_n^2 + \sigma^2$                                                                              \\ 
			\clineB{1-5}{2.5}
			\multicolumn{2}{|c|}{Minimax Algorithm} & \ARROWS{} & \begin{tabular}[c]{@{}c@{}}Wavelet Smoothing\\ Trend Filtering \end{tabular} & \begin{tabular}[c]{@{}c@{}}Restarting OGD\\ Moving Averages \end{tabular} \\                  
			\clineB{1-5}{2.5}
		\end{tabular}
	}
	
	\smallskip
	
	\begin{minipage}{0.3\textwidth}
		\includegraphics[width=\textwidth]{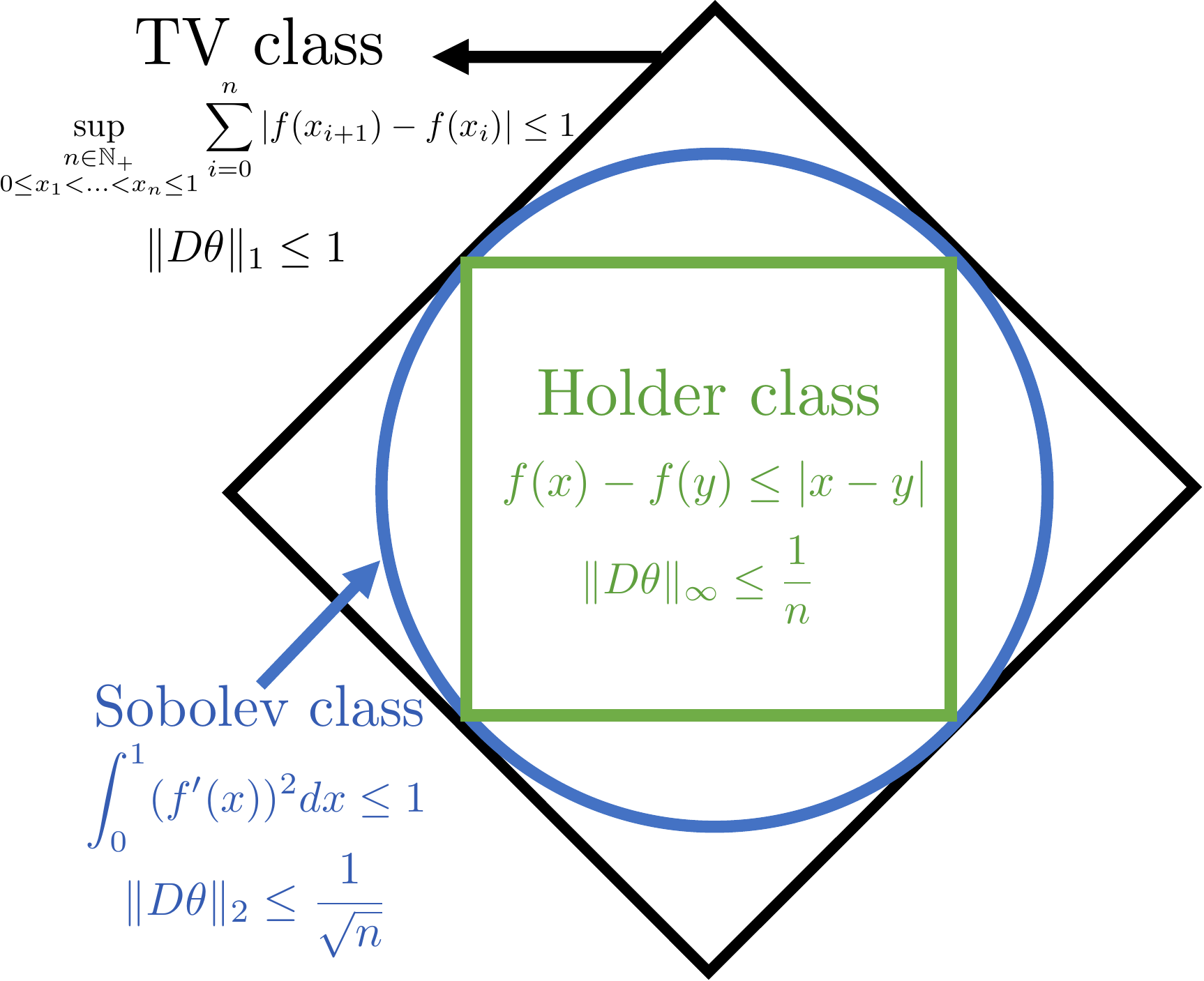}
	\end{minipage}
	\begin{minipage}{0.69\textwidth}
		\resizebox{\textwidth}{!}{
			\begin{tabular}{|cc|c|c|c|} 
				\hline
				\multicolumn{2}{|c|}{Canonical Scaling\footnote{The ``canonical scaling'' are obtained by discretizing functions in canonical function classes. Under the canonical scaling, Holder class $\subset$ Sobolev class $\subset$ TV class, as shown in the figure on the left.  \ARROWS{} is optimal for the Sobolev and Holder classes inscribed in the TV class. MA and Restarting OGD on the other hand require different parameters and prior knowledge of variational budget (i.e $C_n$ or $C'_n$) to achieve the minimax linear rates for the TV class and the Sobolev/Holder class.}}                                     & \begin{tabular}[c]{@{}l@{}}Forecasting\end{tabular} & \begin{tabular}[c]{@{}l@{}}Smoothing\end{tabular} & \begin{tabular}[c]{@{}l@{}}Linear Forecasting\end{tabular}  \\ 
				\hline
				TV      & $C_n \asymp 1$  & $n^{1/3}$ & $n^{1/3}$  & $n^{1/2}$                                                                              \\ 
				\hline
				Sobolev & $C_n' \asymp 1/\sqrt{n}$  & $n^{1/3}$ & $n^{1/3}$ & $n^{1/3}$                                                                            \\ 
				\hline
				Holder  & $L_n \asymp 1/n$  & $n^{1/3}$  & $n^{1/3}$ & $n^{1/3}$\\
				\hline
			\end{tabular}
		}
	\end{minipage}
	
\end{table}

Thus despite the fact that \ARROWS{} is designed for total variation class, it adapts to the optimal rates of forecasting sequences that are spatially regular. To gain some intuition,  let's minimally expand the Sobolev ball to a TV ball of radius $C_n = \sqrt{n}C_n'$. The chosen scaling of $C_n$ activates the embedding $\cS(C'_n) \subset TV(C_n)$ (see the illustration in Table~\ref{tab:results_summary}) with both classes having same minimax rate in the batch setting. This implies that dynamic regret of \ARROWS{} is simultaneously minimax optimal over $\cS(C'_n)$ and  $TV(C_n)$ wrt the term containing $n$. It can be shown that \ARROWS{} is optimal wrt to the additive $[C_n^']^2, U^2, \sigma^2$ terms as well. Minimaxity in Sobolev class implies minimaxity in Holder class since it is known that a Holder ball is sandwiched between two Sobolev balls having the same minimax rate \citep[see e.g.,][]{ryan_nonpar_notes}. A proof of the Corollary and related experiments are presented in Appendix~\ref{app:sobolev} and ~\ref{app:sobexp}.

\subsection{Fast computation}

Last but not least, we remark that there is a fast implementation of \ARROWS{} that reduces the overall time-complexity for $n$ step from $O(n^2)$ to $O(n\log n)$.
\begin{proposition}
	The run time of \ARROWS{} is $O(n\log(n))$, where $n$ is the time horizon.
\end{proposition}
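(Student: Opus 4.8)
The plan is to bound the total work by summing the cost over the bins produced by the adaptive restart rule, and to show that a bin of length $N$ can be processed in $O(N\log n)$ time. Since the bins partition the horizon we have $\sum_b N_b = n$, so the total cost is $\sum_b O(N_b\log n) = O(n\log n)$. Within a bin, the per-step bookkeeping for prediction (maintaining the running average $\bar y_{t_h:t-1}$), for observing $y_t$, and for incurring the loss is $O(1)$, so the only nontrivial cost is the evaluation of the restart statistic $\sum_{l=0}^{\log_2 k - 1} 2^{l/2}\|\hat\alpha(t_h:t)[l]\|_1$, which requires the soft-thresholded Haar detail coefficients of the centered, zero-padded vector $\tilde y = pad_0(y_{t_h},\dots,y_t)$. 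A naive recomputation costs $O(k)$ per step and hence $O(N^2)$ per bin; the crux is to maintain these coefficients incrementally.

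First I would write the centered padded vector as a linear combination $\tilde y = u - \bar y_{t_h:t}\, e$, where $u = (y_{t_h},\dots,y_t,0,\dots,0)^T \in \R^k$ is the uncentered zero-padded vector and $e = (1,\dots,1,0,\dots,0)^T$ is the indicator of the first $m = t-t_h+1$ coordinates. By linearity of $H$, the coefficients satisfy $H\tilde y = Hu - \bar y_{t_h:t}\, He$. When the current length $m$ increases by one without crossing a power of two (so $k$ is unchanged), both $u$ and $e$ change in a single coordinate (position $m+1$ flips from a padding zero to $y_t$, resp.\ to $1$). Because the fast Haar transform is a dyadic pyramid, altering one coordinate of a length-$k$ vector alters only the $O(\log k)$ coefficients on the leaf-to-root path, so $Hu$ changes at $O(\log k)$ positions.

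The key observation is that, although the running mean $\bar y_{t_h:t}$ changes at every step and a priori rescales every entry of $He$, the vector $e$ is a step function and hence has an $O(\log k)$-sparse Haar detail expansion: at each level $l$ the only wavelet with nonzero inner product against $e$ is the one whose dyadic support straddles the jump at $m$, giving at most one nonzero coefficient per level. Consequently $He$ has $O(\log k)$ nonzero entries, and it changes at $O(\log k)$ positions when the step advances from $m$ to $m+1$, so $(H\tilde y)_j = (Hu)_j - \bar y_{t_h:t}(He)_j$ can change only where $(Hu)_j$ changed or $(He)_j$ is nonzero --- an $O(\log k)$ set. Since soft-thresholding acts coordinatewise, only those $O(\log k)$ entries of $T(H\tilde y)$ change, and the restart statistic, kept as a running weighted sum $\sum_j 2^{l(j)/2}|T(H\tilde y)_j|$, is updated by subtracting the old and adding the new contributions of the changed coefficients in $O(\log k) = O(\log n)$ time. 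The $O(\log N)$ power-of-two crossings within a bin, where $k$ doubles and the structure is rebuilt from scratch at cost $O(k)$, contribute a geometric sum of total size $O(N)$ and are therefore negligible. Adding the $O(N\log n)$ incremental updates gives $O(N\log n)$ per bin and $O(n\log n)$ overall.

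The main obstacle is precisely the mean-centering: a direct incremental update fails because changing $\bar y_{t_h:t}$ perturbs every real coordinate of $\tilde y$, which seems to force $\Omega(k)$ work per step. The resolution is the decomposition $\tilde y = u - \bar y_{t_h:t}\, e$ together with the sparsity of the Haar transform of the indicator/step vector $e$, which confines the effect of the changing mean to $O(\log k)$ coefficients. Verifying this sparsity, the $O(\log k)$ leaf-to-root update property of the Haar pyramid, and the fact that coordinatewise thresholding preserves the sparsity of the change are the technical points that make the argument go through.
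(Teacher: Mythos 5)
Your proposal is correct and follows essentially the same route as the paper's proof: the decomposition $\tilde y = u - \bar y_{t_h:t}\,e$ with the $O(\log k)$-sparse Haar transform of the step vector $e$, the $O(\log k)$ leaf-to-root update for a single changed coordinate of $u$, and the summation of per-bin costs are exactly the paper's argument. The only cosmetic difference is at power-of-two crossings, where you rebuild in $O(k)$ and invoke a geometric sum while the paper computes only the $O(\log t)$ newly introduced wavelet coefficients; both give the same $O(n\log n)$ total.
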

The proof exploits the sequential structure of our policy and sparsity in wavelet transforms, which allows us to have $O(\log n)$ incremental updates in all but $O(\log n)$ steps.
See Appendix~\ref{sec:runtime} for details.

\subsection{Experimental Results}\label{demo}
To empirically validate our results, we conducted a number of numerical simulations that compares the regret of \ARROWS{}, (Restarting) OGD and MA. 
Figure~\ref{fig:illustration} shows the results on a function with heterogeneous smoothness (see the exact details and more experiments in Appendix~\ref{app:more_exp}) with the hyperparameters selected according to their theoretical optimal choice for the TV class (See Theorem~\ref{thm:ogd}, ~\ref{thm:ma} for OGD and MA in Appendix~\ref{gentle}).
The left panel illustrates that \ARROWS{} is locally  adaptive to heterogeneous smoothness of the ground truth. Red peaks in the figure signifies restarts. During the initial and final duration, the signal varies smoothly and \ARROWS{} chooses a larger window size for online averaging. In the middle, signal varies rather abruptly. Consequently \ARROWS{} chooses a smaller window size. On the other hand, the linear smoothers OGD and MA use a constant width and cannot adapt to the different regions of the space. This differences are also reflected in the quantitative evaluation on the right, which clearly shows that OGD and MA has a suboptimal $\tilde{O}(\sqrt{n})$ regret while \ARROWS{} attains the $\tilde{O}(n^{1/3})$ minimax regret!

\section{Concluding Discussion}
In this paper, we studied the problem of online nonparametric forecasting of bounded variation sequences. We proposed a new forecasting policy \ARROWS{} and proved that it achieves a cumulative square error (or dynamic regret) of 
$\tilde{O}(n^{1/3}C_n^{2/3}\sigma^{4/3} + \sigma^2 + U^2 + C_n^2)$
with total runtime of $O(n \log n)$. We also derived a lower bound for forecasting sequences with bounded total variation which matches the upper bound up to a logarithmic term 
which certifies the optimality of \ARROWS{} in all parameters. Through connection to linear estimation theory, we assert that no linear forecaster can achieve the optimal rate. 
\ARROWS{} is highly adaptive and has essentially no tuning parameters. We show that it is adaptively minimax (up to a logarithmic factor) simultaneously for all discrete TV classes, Sobolev classes and Holder classes with unknown radius. Future directions include generalizing to higher order TV class and other convex loss functions.


\subsection*{Acknowledgement}
DB and YW were supported by a start-up grant from UCSB CS department and a gift from Amazon Web Services. The authors thank Yining Wang for a preliminary discussion that inspires the work, and Akshay Krishnamurthy and Ryan Tibshirani for helpful comments to an earlier version of the paper.

\bibliography{tf,yuxiang}

\begin{thebibliography}{43}
\providecommand{\natexlab}[1]{#1}
\providecommand{\url}[1]{\texttt{#1}}
\expandafter\ifx\csname urlstyle\endcsname\relax
  \providecommand{\doi}[1]{doi: #1}\else
  \providecommand{\doi}{doi: \begingroup \urlstyle{rm}\Url}\fi

\bibitem[Bansal et~al.(2015)Bansal, Gupta, Krishnaswamy, Pruhs, Schewior, and
  Stein]{bansal20152}
Nikhil Bansal, Anupam Gupta, Ravishankar Krishnaswamy, Kirk Pruhs, Kevin
  Schewior, and Cliff Stein.
\newblock A 2-competitive algorithm for online convex optimization with
  switching costs.
\newblock In \emph{LIPIcs-Leibniz International Proceedings in Informatics},
  volume~40. Schloss Dagstuhl-Leibniz-Zentrum fuer Informatik, 2015.

\bibitem[Baum and Petrie(1966)]{baum1966statistical}
Leonard~E Baum and Ted Petrie.
\newblock Statistical inference for probabilistic functions of finite state
  markov chains.
\newblock \emph{The annals of mathematical statistics}, 37\penalty0
  (6):\penalty0 1554--1563, 1966.

\bibitem[Besbes et~al.(2015)Besbes, Gur, and Zeevi]{besbes2015non}
Omar Besbes, Yonatan Gur, and Assaf Zeevi.
\newblock Non-stationary stochastic optimization.
\newblock \emph{Operations research}, 63\penalty0 (5):\penalty0 1227--1244,
  2015.

\bibitem[Bickel et~al.(1981)]{bickel1981minimax}
PJ~Bickel et~al.
\newblock Minimax estimation of the mean of a normal distribution when the
  parameter space is restricted.
\newblock \emph{The Annals of Statistics}, 9\penalty0 (6):\penalty0 1301--1309,
  1981.

\bibitem[Birge and Massart(2001)]{birge2001gaussian}
Lucien Birge and Pascal Massart.
\newblock Gaussian model selection.
\newblock \emph{Journal of the European Mathematical Society}, 3\penalty0
  (3):\penalty0 203--268, 2001.

\bibitem[Box and Jenkins(1970)]{box1970time}
George~EP Box and Gwilym~M Jenkins.
\newblock \emph{Time series analysis: forecasting and control}.
\newblock John Wiley \& Sons, 1970.

\bibitem[Chen et~al.(2018{\natexlab{a}})Chen, Goel, and
  Wierman]{chen2018smoothed}
Niangjun Chen, Gautam Goel, and Adam Wierman.
\newblock Smoothed online convex optimization in high dimensions via online
  balanced descent.
\newblock In \emph{Conference on Learning Theory (COLT-18)},
  2018{\natexlab{a}}.

\bibitem[Chen et~al.(2018{\natexlab{b}})Chen, Wang, and Wang]{chen2018non}
Xi~Chen, Yining Wang, and Yu-Xiang Wang.
\newblock Non-stationary stochastic optimization under lp, q-variation
  measures.
\newblock \emph{Operations Research, to appear.}, 2018{\natexlab{b}}.

\bibitem[Daniely et~al.(2015)Daniely, Gonen, and
  Shalev-Shwartz]{daniely2015strongly}
Amit Daniely, Alon Gonen, and Shai Shalev-Shwartz.
\newblock Strongly adaptive online learning.
\newblock In \emph{International Conference on Machine Learning}, pages
  1405--1411, 2015.

\bibitem[De~Boor et~al.(1978)De~Boor, De~Boor, Math{\'e}maticien, De~Boor, and
  De~Boor]{de1978practical}
Carl De~Boor, Carl De~Boor, Etats-Unis Math{\'e}maticien, Carl De~Boor, and
  Carl De~Boor.
\newblock \emph{A practical guide to splines}, volume~27.
\newblock Springer-Verlag New York, 1978.

\bibitem[Donoho et~al.(1990)Donoho, Liu, and MacGibbon]{donoho1990minimax}
David Donoho, Richard Liu, and Brenda MacGibbon.
\newblock Minimax risk over hyperrectangles, and implications.
\newblock \emph{Annals of Statistics}, 18\penalty0 (3):\penalty0 1416--1437,
  1990.

\bibitem[Donoho(1995)]{softThreshold95}
David~L Donoho.
\newblock De-noising by soft-thresholding.
\newblock \emph{IEEE transactions on information theory}, 41\penalty0
  (3):\penalty0 613--627, 1995.

\bibitem[Donoho et~al.(1998)Donoho, Johnstone, et~al.]{donoho1998minimax}
David~L Donoho, Iain~M Johnstone, et~al.
\newblock Minimax estimation via wavelet shrinkage.
\newblock \emph{The annals of Statistics}, 26\penalty0 (3):\penalty0 879--921,
  1998.

\bibitem[Gaillard and Gerchinovitz(2015{\natexlab{a}})]{gaillard2015chaining}
Pierre Gaillard and S{\'e}bastien Gerchinovitz.
\newblock A chaining algorithm for online nonparametric regression.
\newblock In \emph{Conference on Learning Theory}, pages 764--796,
  2015{\natexlab{a}}.

\bibitem[Gaillard and Gerchinovitz(2015{\natexlab{b}})]{pierre2015}
Pierre Gaillard and S{\'e}bastien Gerchinovitz.
\newblock A chaining algorithm for online nonparametric regression.
\newblock In \emph{COLT}, 2015{\natexlab{b}}.

\bibitem[Hall and Willett(2013)]{hall2013dynamical}
Eric Hall and Rebecca Willett.
\newblock Dynamical models and tracking regret in online convex programming.
\newblock In \emph{International Conference on Machine Learning (ICML-13)},
  pages 579--587, 2013.

\bibitem[Hazan and Seshadhri(2007)]{hazan2007adaptive}
Elad Hazan and Comandur Seshadhri.
\newblock Adaptive algorithms for online decision problems.
\newblock In \emph{Electronic colloquium on computational complexity (ECCC)},
  volume~14, 2007.

\bibitem[Hodrick and Prescott(1997)]{hodrick1997postwar}
Robert~J Hodrick and Edward~C Prescott.
\newblock Postwar us business cycles: an empirical investigation.
\newblock \emph{Journal of Money, credit, and Banking}, pages 1--16, 1997.

\bibitem[Hutter and Rigollet(2016)]{hutter2016optimal}
Jan-Christian Hutter and Philippe Rigollet.
\newblock Optimal rates for total variation denoising.
\newblock In \emph{Conference on Learning Theory (COLT-16)}, 2016.
\newblock to appear.

\bibitem[Jadbabaie et~al.(2015)Jadbabaie, Rakhlin, Shahrampour, and
  Sridharan]{jadbabaie2015online}
Ali Jadbabaie, Alexander Rakhlin, Shahin Shahrampour, and Karthik Sridharan.
\newblock Online optimization: Competing with dynamic comparators.
\newblock In \emph{Artificial Intelligence and Statistics}, pages 398--406,
  2015.

\bibitem[Johnstone(2017)]{DJBook}
Iain~M. Johnstone.
\newblock \emph{Gaussian estimation: Sequence and wavelet models}.
\newblock 2017.

\bibitem[Kim et~al.(2009)Kim, Koh, Boyd, and Gorinevsky]{l1tf}
Seung-Jean Kim, Kwangmoo Koh, Stephen Boyd, and Dimitry Gorinevsky.
\newblock $\ell_1$ trend filtering.
\newblock \emph{SIAM Review}, 51\penalty0 (2):\penalty0 339--360, 2009.

\bibitem[Koolen et~al.(2015)Koolen, Malek, Bartlett, and
  Abbasi]{koolen2015minimax}
Wouter~M Koolen, Alan Malek, Peter~L Bartlett, and Yasin Abbasi.
\newblock Minimax time series prediction.
\newblock In \emph{Advances in Neural Information Processing Systems
  (NIPS'15)}, pages 2557--2565. 2015.

\bibitem[Kotłowski et~al.(2016)Kotłowski, Koolen, and Malek]{kotlowski2016}
Wojciech Kotłowski, Wouter~M. Koolen, and Alan Malek.
\newblock Online isotonic regression.
\newblock In \emph{Annual Conference on Learning Theory (COLT-16)}, volume~49,
  pages 1165--1189. PMLR, 2016.

\bibitem[Mallat(1999)]{mallat1999wavelet}
St{\'e}phane Mallat.
\newblock \emph{A wavelet tour of signal processing}.
\newblock Elsevier, 1999.

\bibitem[Mammen and van~de Geer(1997)]{locadapt}
Enno Mammen and Sara van~de Geer.
\newblock Locally apadtive regression splines.
\newblock \emph{Annals of Statistics}, 25\penalty0 (1):\penalty0 387--413,
  1997.

\bibitem[Nadaraya(1964)]{nadaraya1964estimating}
Elizbar~A Nadaraya.
\newblock On estimating regression.
\newblock \emph{Theory of Probability \& Its Applications}, 9\penalty0
  (1):\penalty0 141--142, 1964.

\bibitem[Rakhlin and Sridharan(2014)]{rakhlin2014online}
Alexander Rakhlin and Karthik Sridharan.
\newblock Online non-parametric regression.
\newblock In \emph{Conference on Learning Theory}, pages 1232--1264, 2014.

\bibitem[Rakhlin and Sridharan(2015)]{online_nonpar2015}
Alexander Rakhlin and Karthik Sridharan.
\newblock Online nonparametric regression with general loss functions.
\newblock \emph{CoRR}, abs/1501.06598, 2015.

\bibitem[Rasmussen and Williams(2006)]{rasmussen2006gaussian}
Carl~Edward Rasmussen and Christopher~KI Williams.
\newblock \emph{Gaussian processes for machine learning}.
\newblock MIT Press, 2006.

\bibitem[Sadhanala et~al.(2016)Sadhanala, Wang, and
  Tibshirani]{sadhanala2016total}
Veeranjaneyulu Sadhanala, Yu-Xiang Wang, and Ryan Tibshirani.
\newblock Total variation classes beyond 1d: Minimax rates, and the limitations
  of linear smoothers.
\newblock \emph{Advances in Neural Information Processing Systems (NIPS-16)},
  2016.

\bibitem[Scholkopf and Smola(2001)]{scholkopf2001learning}
Bernhard Scholkopf and Alexander~J Smola.
\newblock \emph{Learning with kernels: support vector machines, regularization,
  optimization, and beyond}.
\newblock MIT press, 2001.

\bibitem[Steidl et~al.(2006)Steidl, Didas, and Neumann]{hightv}
Gabriel Steidl, Stephan Didas, and Julia Neumann.
\newblock Splines in higher order {TV} regularization.
\newblock \emph{International Journal of Computer Vision}, 70\penalty0
  (3):\penalty0 214--255, 2006.

\bibitem[Tibshirani(2015)]{ryan_nonpar_notes}
Ryan Tibshirani.
\newblock {Nonparametric Regression: Statistical Machine Learning, Spring
  2015}, 2015.
\newblock URL: \url{http://www.stat.cmu.edu/~larry/=sml/nonpar.pdf}. Last
  visited on 2019/04/29.

\bibitem[Tibshirani(2014)]{tibshirani2014adaptive}
Ryan~J Tibshirani.
\newblock Adaptive piecewise polynomial estimation via trend filtering.
\newblock \emph{Annals of Statistics}, 42\penalty0 (1):\penalty0 285--323,
  2014.

\bibitem[Tsybakov(2008)]{tsybakov_book}
Alexandre~B. Tsybakov.
\newblock \emph{Introduction to Nonparametric Estimation}.
\newblock Springer Publishing Company, Incorporated, 1st edition, 2008.

\bibitem[Wahba(1990)]{wahba1990spline}
Grace Wahba.
\newblock \emph{Spline models for observational data}, volume~59.
\newblock Siam, 1990.

\bibitem[Wang et~al.(2014)Wang, Smola, and Tibshirani]{wang2014falling}
Yu-Xiang Wang, Alex Smola, and Ryan Tibshirani.
\newblock The falling factorial basis and its statistical applications.
\newblock In \emph{International Conference on Machine Learning (ICML-14)},
  pages 730--738, 2014.

\bibitem[Wasserman(2006)]{wasserman2006book}
Larry Wasserman.
\newblock \emph{All of Nonparametric Statistics}.
\newblock Springer, New York, 2006.

\bibitem[Yang et~al.(2016)Yang, Zhang, Jin, and Yi]{yang2016tracking}
Tianbao Yang, Lijun Zhang, Rong Jin, and Jinfeng Yi.
\newblock Tracking slowly moving clairvoyant: optimal dynamic regret of online
  learning with true and noisy gradient.
\newblock In \emph{International Conference on Machine Learning (ICML-16)},
  pages 449--457, 2016.

\bibitem[Zhang et~al.(2018{\natexlab{a}})Zhang, Lu, and
  Zhou]{zhang2018adaptive}
Lijun Zhang, Shiyin Lu, and Zhi-Hua Zhou.
\newblock Adaptive online learning in dynamic environments.
\newblock In \emph{Advances in Neural Information Processing Systems
  (NeurIPS-18)}, pages 1323--1333, 2018{\natexlab{a}}.

\bibitem[Zhang et~al.(2018{\natexlab{b}})Zhang, Yang, Zhou,
  et~al.]{zhang2018dynamic}
Lijun Zhang, Tianbao Yang, Zhi-Hua Zhou, et~al.
\newblock Dynamic regret of strongly adaptive methods.
\newblock In \emph{International Conference on Machine Learning (ICML-18)},
  pages 5877--5886, 2018{\natexlab{b}}.

\bibitem[Zinkevich(2003)]{zinkevich2003online}
Martin Zinkevich.
\newblock Online convex programming and generalized infinitesimal gradient
  ascent.
\newblock In \emph{International Conference on Machine Learning (ICML-03)},
  pages 928--936, 2003.

\end{thebibliography}
\bibliographystyle{plainnat}
\appendix

\section{Discussion on other related works}\label{app:related}
\paragraph{Regret from Adaptive Optimistic Mirror Descent.}
In \citet{jadbabaie2015online}, the authors propose  Adaptive Optimistic Mirror Descent (AOMD) algorithm that minimizes the dynamic regret against a comparator sequence $\{u_t\}_{t=1}^{n}$. Their learning framework is the full information setting where learner predict $x_t \in \cX$ for a convex set $\cX \subseteq R^d$. Then a loss function $f_t(x)$ is revealed to the learner. To capture the regularity of the comparator, they define a quantity 
$C_n(u_1,u_2,...,u_n) := \sum_{t=1}^{n}\|u_t - u_{t-1}\|.$
They capture the regularity of loss functions by incorporating some external knowledge about their gradients via a predictable sequence $\{M_t\}_{t=1}^{n}$. They define:
$D_n := \sum_{t=1}^{n}\|\nabla f_t(x_t) - M_t \|_{*}^2$.
Finally to account for the temporal variability of $f_t$, they introduce $V_n$ as discussed earlier. The final regret bound is expressed in terms of these three quantities. However, their algorithm is adaptive and requires no prior knowledge about them. 

We note that our problem can be reduced to their framework if one considers loss functions $f_t(x) = (x - y_t)^2$.  Then the expected dynamic regret against the comparator sequence $\{\theta_t\}_{t=1}^{n}$ is given by
\begin{align}
    \sum_{t=1}^{n}E[(x-y_t)^2 - (\theta_t - y_t)^2]
    &= E[\sum_{t=1}^{n}(x-\theta_t)^2] \label{eqn:expected_reg},
\end{align}
where the expectation at right hand side is over the randomness of forecasting strategy. Hence we observe that their regret bound can be directly applied to bound the dynamic regret of our problem. It can be shown that (see Appendix~\ref{sec:regret_aomd}) given a fixed total variation bound $C_n = O(1)$, then $V_n$ and $D_n$ can be proved to be $O(n)$ with high probability. Plugging this into their regret bound yields an $\tilde{O}(\sqrt{n})$ rate in probability. However, it is unclear that whether AOMD is fundamentally limited by this rate or is there a potential suboptimality in their analysis of regret on our particular problem.

\paragraph{Other Dynamic Regret minimizing policies.} \citep{yang2016tracking} defines a path variation budget that is equivalent to our $C_n$ to characterize the sequence of convex loss functions.  However, under the noisy gradient feedback structure, they use a version of restarting OGD to get $C^{1/2}n^{1/2}$ regret rate. This is very similar to the policy in \citep{besbes2015non}. Since OGD is a linear forecaster, it is sub-optimal for predicting bounded variation sequences under the squared error metric.

In \citep{koolen2015minimax}, they consider minimizing the dynamic regret wrt to a comparator class that obeys  $\|D\theta_{1:n}\|_2 \le C'_n$. This is basically the discrete Sobolev class. As shown in appendix \ref{proof_main_thm}, our policy is minimax for forecasting such sequences as well when the observed values are noisy versions of the ground truth. However it should be noted that \citep{koolen2015minimax} does not have this distributional assumption on the observations.

\citep{chen2018smoothed} considers the Smoothed Online Convex Optimization framework where they simultaneously minimize the hitting loss $f_t$ and a switching cost. They provide dynamic regret bounds on this composite cost in the setting that $f_t$ is known to the learner before making the prediction. If we consider $f_t(x) = (x-y_t)^2$, then the baseline they compare against reduces to the offline Trend Filtering (TF) estimate when $\sum_{i=2}^{n} |x_t - x_{t-1}| \le L = C_n$. Then Theorem 10 of \citep{chen2018smoothed} states that the cumulative composite cost incurred by their proposed policy differs from that of the TF estimate by a term that is $O(\sqrt{nC_n})$. However, this doesn't translate to a meaningful regret bound in our setting.

\citep{hall2013dynamical} proposes the Dynamic Mirror Descent (DMD) algorithm that make use of a family of dynamical models for making prediction at each time step. They achieve a dynamic regret bound of $O(\sqrt{n}(1+V_{\phi_t}(\boldsymbol{\theta}_T)))$ where the second term measures the quality of the dynamical models in predicting ground truth.

\paragraph{Comparison to Online Isotonic Regression.} \citep{kotlowski2016} considers the dynamic regret minimization,
\begin{align}
    \sum_{t=1}^{n} (x_t-y_t)^2 - \min_{(\theta_1,...,\theta_n)} \sum_{t=1}^{n} (\theta_t - y_t)^2,
\end{align}
where $y_t \le B$ is a label revealed by the environment, $x_t \le B$ is the prediction of the learner, and the comparator sequence should obey $0 \le \theta_1 \le ... \le \theta_n \le B$. Here $B$ is a fixed positive number. Note that their setting and our framework are mutually reducible to each other in terms of regret guarantees via \ref{eqn:expected_reg}. They propose a minimax policy that achieves a dynamic regret of $\tilde{O}(n^{1/3})$ which translates to an $\tilde{O}(n^{1/3})$ in probability in our setting under the isotonic ground truth restriction. 

We note that the isotonic comparator sequence belong to a TV class of variational budget $C_n = B$. By using an argument similar to that in appendix \ref{sec:regret_aomd} which involves converting to deterministic noise setting and conditioning on a high probability event, it can be shown that our policy is out of the box minimax with high probability in isotonic framework when observations are noisy versions of an isotonic sequence.

\paragraph{Comparison to Online Non-Parametric regression methods.}
We note that our problem falls into the more general framework of online non-parametric regression setting studied in \citep{online_nonpar2015}. We can reduce our dynamic regret minimization to their framework by using a similar argument as above through \eqref{eqn:expected_reg}. Since the bounded TV class is sandwiched between Besov spaces $B^{1}_{1,q}$ for the range $1 \le q \le \infty$, the discussion in section 5.8 of \citep{online_nonpar2015} establishes that minimax growth w.r.t $n$ as $O(n^{1/3})$ in the online setting for TV class. Thus our bounds, modulo logarithmic factor, matches with theirs though we give the precise dependence on $C_n$ and $\sigma$ as well. It is worthwhile to point out that while the bound in \citep{online_nonpar2015} is non-constructive, we achieve the same bound via an efficient algorithm.

\citep{pierre2015} proposes a minimax policy wrt to comaparator functions that are Holder smooth. In particular, for the Holder class $H_1$ that satisfy $|f(x) - f(y)| \le \lambda |x-y|$, their algorithm yields a regret of $\tilde{O}(n^{1/3})$. It is known (\citep{ryan_nonpar_notes}) that $H_1$ is sandwiched between two Sobolev balls having the same minimax rate in the iid batch statistical learning setting. Since our policy is optimal for Sobolev space (appendix \ref{app:sobolev}), it is also optimal over Holder ball $H_1$ when the observations are noisy versions of a Holder smooth functions. Though the framework of \citep{pierre2015} doesn't impose this distributional assumption. The runtime of their policy for $H_1$ class is $O(n^{7/3}\log n)$. It should be noted that Sobolev and Holder classes are arguably easier to tackle than the TV class since both of them can be embedded inside a TV class.

\paragraph{Strongly Adaptive Regret.}
\citet{daniely2015strongly} introduced the notion of Strongly Adaptive (SA) regret where the online learner is guaranteed to have low static regret for any interval within the duration of the game. They also propose a meta algorithm which can convert an algorithm of good static regret to one with good SA regret. However low static regret for any interval doesn't help in our setting because in each interval we are competing with a stronger dynamic adversary. A notion of SA dynamic regret would an interesting topic to explore. 

For minimizing dynamic regret, \citet{zhang2018dynamic} proposed a meta policy that uses an algorithm  with good SA regret as its subroutine. Hence we can use their framework with squared error loss functions as discussed above. They show that OGD has an SA regret of $O(\log(n))$ for strongly convex loss functions. Using OGD as the subroutine and applying corollary 7 of their paper yields a bound $\tilde{O}(n)$. By a similar argument one gets the same linear regret rate when online newton step is used as the subroutine. However, we should note that their algorithm works without the knowledge of radius of the TV ball $C_n$.

\paragraph{Classical time series forecasting models.}
Finally, we note that our work is complementary to much of the classical work in time-series forecasting (e.g., Box-Jenkins method/ARIMA \cite{box1970time}, Hidden Markov Models \citep{baum1966statistical}). These methods aim at using dynamical systems to capture the recurrent patterns under a stationary stochastic process, while we focus on harnessing the nonstationarity. Our work is closer to the ``trend smoothing'' literature (e.g., the celebrated Hodrick-Prescott filter \citep{hodrick1997postwar}, trend filtering \citep{l1tf,tibshirani2014adaptive,hutter2016optimal}).

\section{Additional Experiments}\label{app:more_exp}
\begin{figure}[htp]
  \centering
  \stackunder{\hspace*{-0.5cm}\includegraphics[width=7.5cm,height=5.5cm]{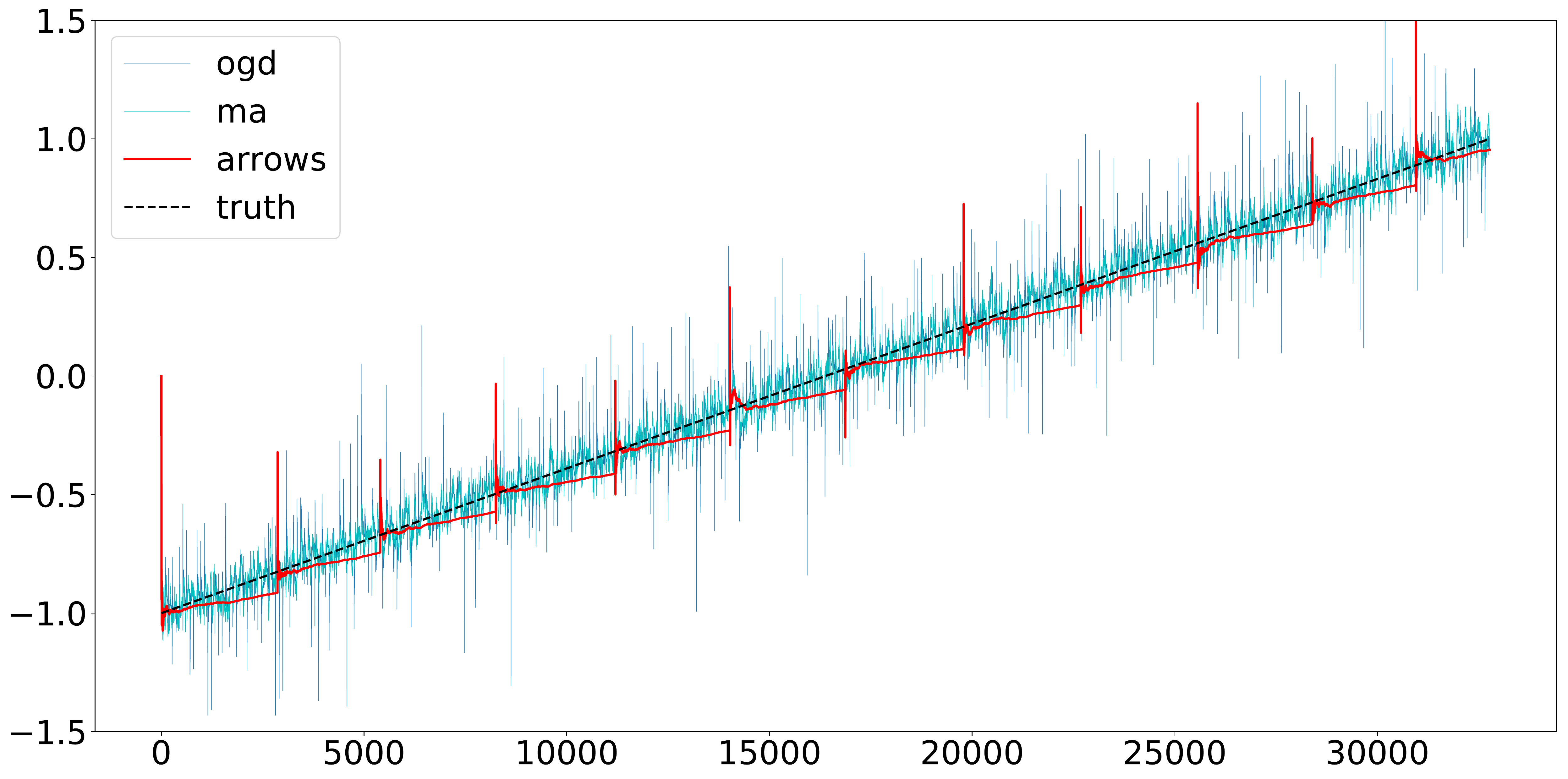}}{}%
  \stackunder{\includegraphics[width=6.9cm, height=5.5cm]{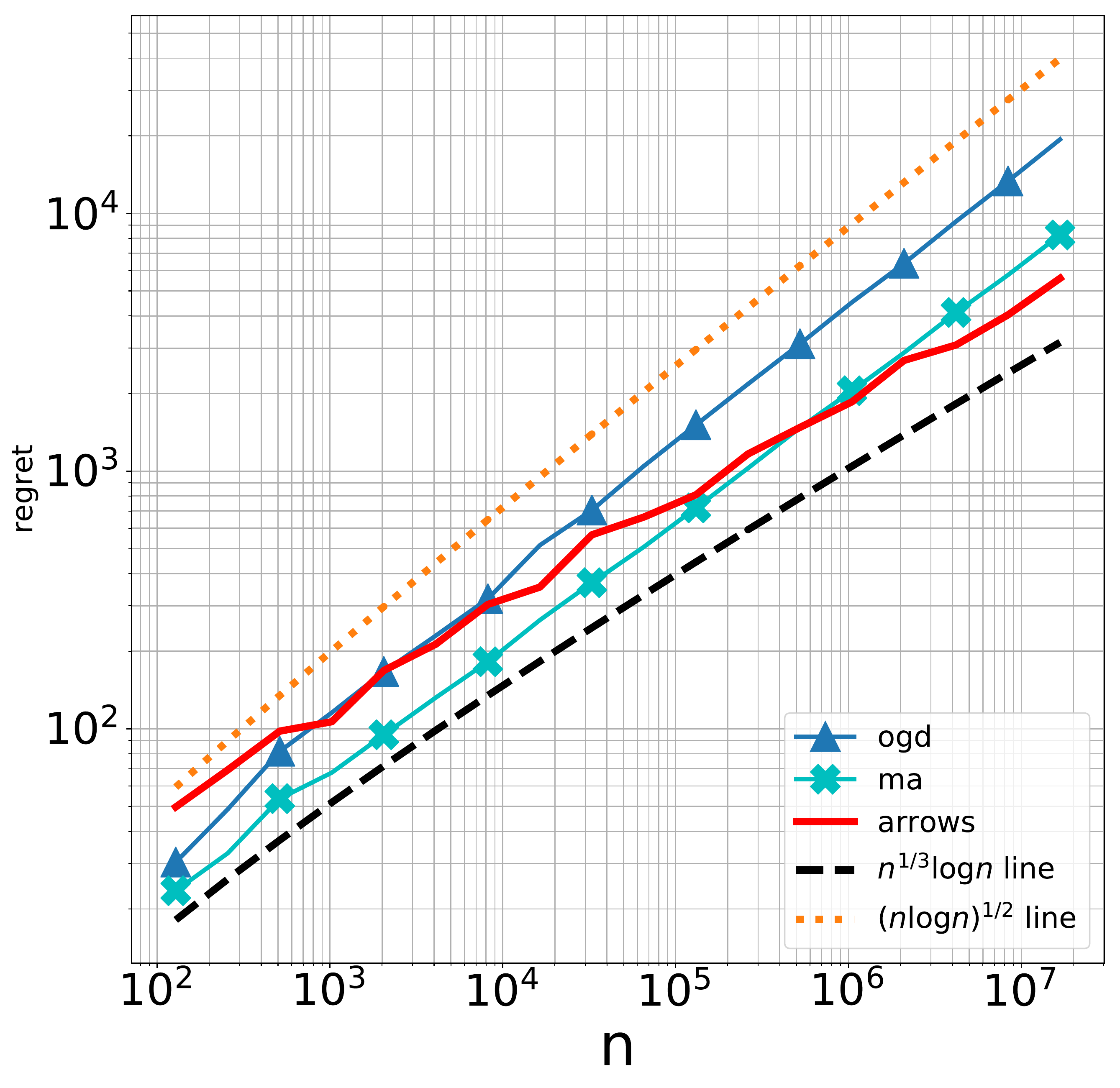}}{}
   \caption{\emph{An illustration of \ARROWS{} on a linear trend which has homogeneous smoothness}}\label{fig:linear}  
\end{figure}

\begin{figure}[htp]
  \centering
  \stackunder{\hspace*{-0.5cm}\includegraphics[width=7.5cm,height=5.5cm]{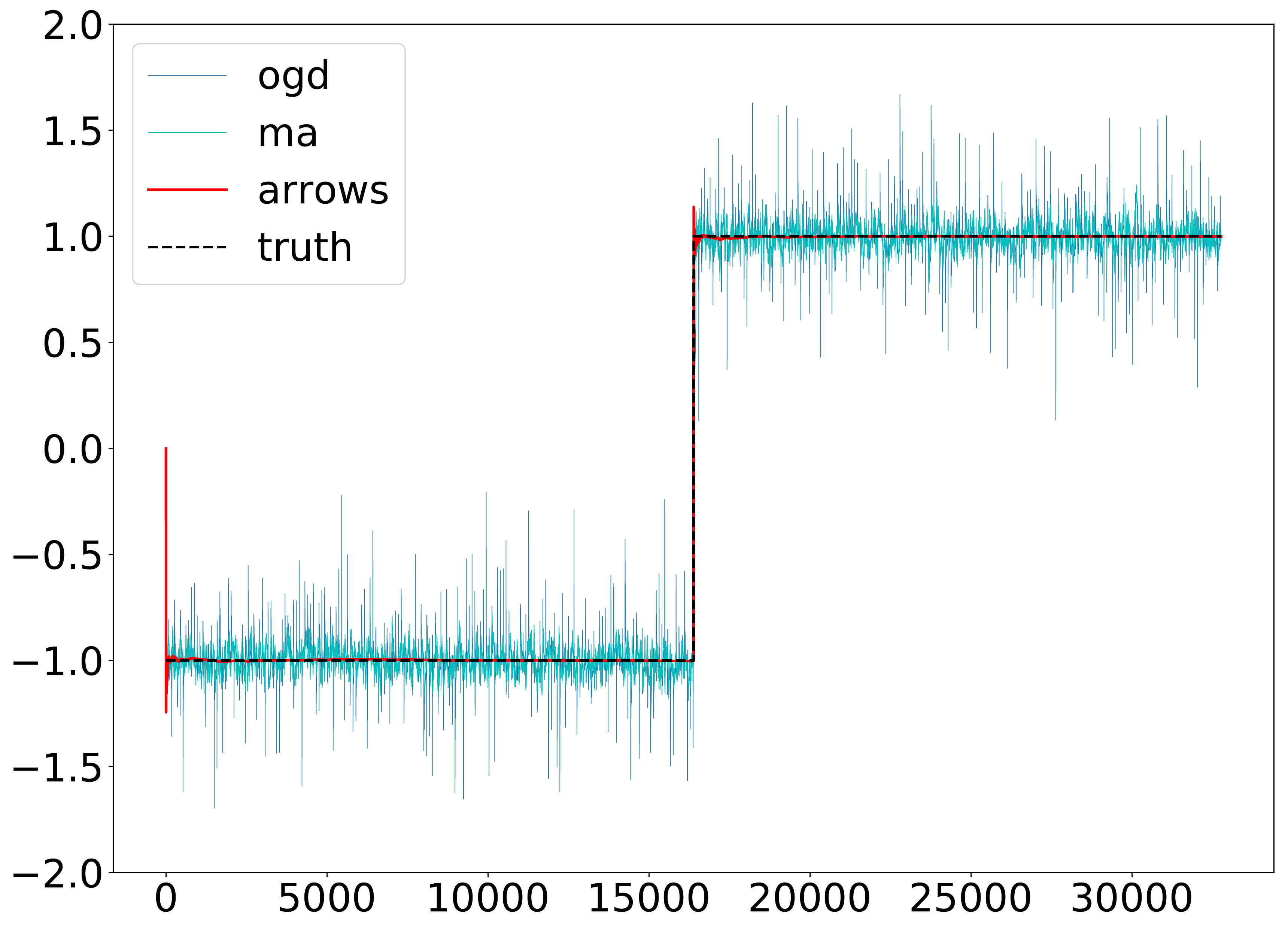}}{}%
  \stackunder{\includegraphics[width=6.9cm, height=5.5cm]{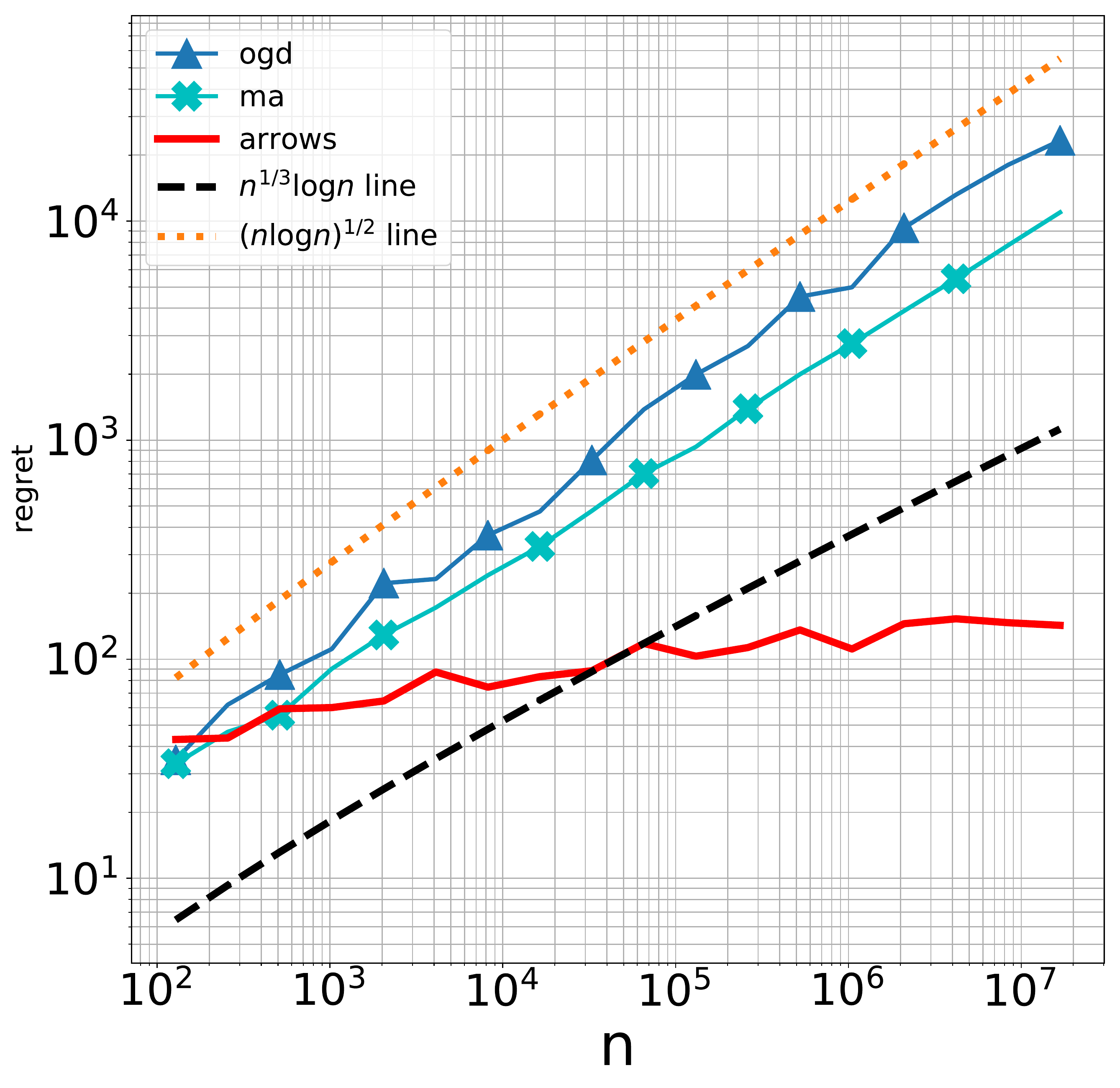}}{}
   \caption{\emph{An illustration of \ARROWS{} on a step trend with abrupt inhomogeneity.}}\label{fig:step}  
\end{figure}

The function that we generated in Figure~\ref{fig:illustration} is a hybrid function which in the first half is a ``discretized cubic spline'' with more knots closely placed towards the end. In the second half it is a Doppler function $f(t) = \sin \left(\frac{2\pi(1+\epsilon)}{t/n+ 0.38}\right)$ with $n$ being the time horizon. We observe noisy data $y_i = f(i/n) + z_i$, $i=1,...,n$ and $z_i$ are iid normal variables with $\sigma = 1$. The value of $C_n$ for $n > 60K$ is around 17. Hence for all $n>83521$, we are under the $n^{1/3}$ regime of $\sigma \sqrt{\log(n)/n} < C_n < \sigma n^{1/4}$.

The window size for moving averages and partition width of OGD were tuned optimally for the TV class (see Appendix C for details). Figure \ref{fig:illustration} depicts the estimated signals and dynamic regret averaged across 5 runs in a log log plot. The left panel illustrates that \ARROWS{} is locally  adaptive to heterogeneous smoothness of the ground truth. Red peaks in the figure signifies restarts. During the initial and final duration, the signal varies smoothly and \ARROWS{} chooses a larger window size for online averaging. In the middle, signal varies rather abruptly. Consequently \ARROWS{} chooses a smaller window size. On the other hand, the linear smoothers OGD and MA attains a suboptimal $\tilde{O}(\sqrt{n})$ regret.


In Figure \ref{fig:linear} and \ref{fig:step} we plot the estimates and log-log regret for two more functions: A linear function that is homogeneously smooth and less challenging and a step function which has an abrupt discontinuity making it more inhomogeneous than linear but have lesser inhomogeneity w.r.t hybrid signal considered in \ref{demo}. Both OGD and MA were optimally tuned for the TV class as in Appendix \ref{gentle}.

The red peaks corresponds to restarts by \ARROWS{}. For linear functions we can see that ARROWS chooses inter-restart duration/bin-widths that are constant throughout. This is expected as a linear trend is spatially homogeneous. For the step function, we see that \ARROWS{} restart only once since the start. Further, notice that it quickly restarts once the bump is hit. For both of these functions, necessary scaling is done so that we are in the $n^{1/3}$ regime quite early.

\section{Upper bounds of linear forecasters}\label{gentle}
In this section we compute the optimal batch size for Restarting OGD and optimal window size for moving averages to yield the $\tilde{O}(\sqrt{n})$ regret rate.

\begin{theorem} \label{thm:ogd}
    Let the feedback be $y_t = \theta_t + Z_t$ where $Z_t$ is an independent, $\sigma$-subgaussian random variable. 
    Let $\theta_{1:n} \in \TV(C_n)$. Restarting OGD with batch size of $\sqrt{n \log n}\frac{\sigma}{C_n}$ achieves an expected dynamic regret of $\tilde{O}(U^2 + C_n^2 + \sigma C_n \sqrt{n})$.
\end{theorem}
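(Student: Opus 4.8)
The plan is to exploit the reduction noted in the main text that restarting OGD on the squared losses $\ell_t(x) = (x-y_t)^2$ coincides with online averaging inside each batch. First I would verify this equivalence: since $\ell_t$ is $2$-strongly convex, the standard OGD update with step size $\eta_t = 1/(2t)$ (with $t$ counted from the batch start) reads $x_{t+1} = (1-1/t)x_t + (1/t)y_t$, whose unrolled solution is the running mean $x_{t+1} = \frac{1}{t}\sum_{s=t_h}^{t} y_s$. Hence inside a batch the learner predicts $\bar y_{t_h:t-1}$, and at the first step of a batch it predicts the last available observation. Because the dynamic baseline satisfies $\sum_t \inf_x \ell_t(x) = \sum_t \ell_t(\theta_t) = 0$, the dynamic regret equals $\E[\sum_t (x_t - \theta_t)^2]$, so it suffices to bound this expected cumulative squared error.

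Next I would perform a per-batch bias–variance decomposition. Fixing a batch $B$ with start $t_h$ and writing $V_B = \sum_{j\in B}|\theta_j - \theta_{j-1}|$ for its internal total variation, a non-initial step decomposes as $x_t - \theta_t = \frac{1}{t-t_h}\sum_{s=t_h}^{t-1}(\theta_s - \theta_t) + \frac{1}{t-t_h}\sum_{s=t_h}^{t-1} Z_s$. The bias is controlled by $|\theta_s - \theta_t| \le V_B$, so its square is at most $V_B^2$; the noise term has second moment at most $\sigma^2/(t-t_h)$ by independence and the subgaussian bound $\E Z_s^2 \le \sigma^2$. Summing over a batch of length $m$ gives $\sum_{t\in B}\E[(x_t-\theta_t)^2] = O(mV_B^2 + \sigma^2\log m)$ plus a first-step term, where the harmonic sum $\sum_{j=1}^{m-1} 1/j = O(\log m)$ produces the logarithmic factor. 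The first step of each batch predicts the previous observation, contributing $O((\theta_{t_h-1}-\theta_{t_h})^2 + \sigma^2)$ in expectation (and $\theta_1^2 \le U^2$ for the very first batch, since $y_0 = 0$).

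I would then sum over the $n/m$ batches. The variance contributions give $O(\frac{n}{m}\sigma^2 \log m)$. For the bias, the key combinatorial step is $\sum_B V_B^2 \le (\sum_B V_B)^2 \le C_n^2$, valid because the $V_B$ are nonnegative with sum at most the global total variation $C_n$; this yields $m\sum_B V_B^2 \le m C_n^2$. The first-step jumps obey the same inequality, $\sum_B (\theta_{t_h-1}-\theta_{t_h})^2 \le C_n^2$, and the $U^2$ term appears once. Collecting everything, the expected regret is $O(U^2 + C_n^2 + \frac{n\sigma^2\log n}{m} + mC_n^2)$. Balancing the last two terms gives $m = \sigma\sqrt{n\log n}/C_n$, the batch size in the statement, and a value of $\sigma C_n\sqrt{n\log n} = \tilde O(\sigma C_n \sqrt n)$, which finishes the bound.

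The main obstacle I anticipate is not any single estimate but the careful bookkeeping across batch boundaries: correctly handling the first-step prediction of each batch (an observation rather than an average) and showing its accumulated error is absorbed into the $U^2 + C_n^2$ terms instead of growing with the number of batches. The inequality $\sum_B V_B^2 \le C_n^2$ is precisely what makes the bias scale with $C_n^2$ rather than with the batch count, and securing that dependence — as opposed to a naive $\frac{n}{m}(\max_B V_B)^2$ estimate — is the crux of matching the claimed rate.
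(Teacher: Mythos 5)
Your proposal is correct and follows essentially the same route as the paper's proof: reduce restarting OGD to within-batch online averaging, apply a per-batch bias--variance decomposition with the bias controlled by the within-batch total variation, use $\sum_B V_B^2 \le \bigl(\sum_B V_B\bigr)^2 \le C_n^2$ to aggregate, and balance $\frac{n}{m}\sigma^2\log n$ against $mC_n^2$ to obtain the stated batch size and rate. Your explicit accounting of the extra $\sigma^2$ per batch from predicting the noisy previous observation at each restart is a small refinement over the paper's bookkeeping, but it is absorbed by the existing variance term and does not change the argument.
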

\begin{proof}
Note that in our setting with squared error losses $f_t(x) = (x - \theta_t)^2$, the update rule of restarting OGD reduces to computing online averages. Thus OGD essentially divides the time horizon $n$ into fixed size batches and output online averages within each batch. Our objective here is to compute the optimal batch size that minimizes the dynamic regret.

We will bound the expected regret. Let $x_t$  be the estimate of OGD at time $t$. Let batches be numbered as $1,...,\ceil{n/L}$ where $L$ is the fixed batch size. Let the total variation of ground truth within batch $i$ be $C_i$. Time interval of batch $i$ is denoted by $[t_h^{(i)}, t_l^{(i)}]$. Due to bias variance decomposition within a batch we have,
\begin{align}
    R_i = \sum_{t=t_h^{(i)}}^{t_l^{(i)}} E[(x_t-\theta_t)^2]
    &= (\theta_{t_h^{(i)}-1} - \theta_{t_h^{(i)}})^2 + \sum_{t=t_h^{(i)}+1}^{t_l^{(i)}} (\theta_t - \bar{\theta}_{t_h^{(i)}:t-1})^2 + \frac{\sigma^2}{t-t_h^{(i)}} \label{eqn:ogd_decomp},\\
    &\le (\theta_{t_h^{(i)}-1} - \theta_{t_h^{(i)}})^2 + L C_i^2 + \sigma^2(2+\log L),
\end{align}
with the convention $\theta_0 = 0$ and at start of bin our prediction is just the noisy realization of the previous data point.

Summing across all bins gives,
\begin{align}
    \sum_{i=1}^{\ceil{n/L}} R_i
    &\le LC_n^2 + 2\sigma^2\frac{n (2+\log L)}{L} + U^2 + C_n^2.
\end{align}
where we have used assumption (A4) to bound the bias of the first prediction. The above expression can be minimized by setting $L = \sqrt{n \log n}\frac{\sigma}{C_n}$ to yield a regret bound of $O(U^2 + C^2 + \sigma C_n \sqrt{n \log n})$
\end{proof}

\begin{theorem}\label{thm:ma}
    Under the same setup as in Theorem \ref{thm:ogd}, moving averages with window size $\frac{\sigma \sqrt{n}}{C_n}$ yields a dynamic regret of $O(\sigma C_n \sqrt{n} + U^2 + C_n^2)$
\end{theorem}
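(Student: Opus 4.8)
The plan is to mimic the bias--variance decomposition used in the proof of Theorem~\ref{thm:ogd}, now adapted to a sliding window rather than disjoint batches. Writing the moving-average forecaster with window $w$ as $x_t = \frac{1}{w}\sum_{s=t-w}^{t-1} y_s$ (with the obvious truncation when $t \le w$), I would first note that $\E[x_t] = \bar{\theta}_{t-w:t-1}$, so the per-step error splits as
\begin{equation}
\E\big[(x_t - \theta_t)^2\big] = \big(\bar{\theta}_{t-w:t-1} - \theta_t\big)^2 + \Var(x_t).
\end{equation}
The variance is immediate: since the $Z_s$ are independent and $\sigma$-subgaussian, $\Var(x_t) = \frac{1}{w^2}\sum_{s=t-w}^{t-1}\Var(Z_s) \le \sigma^2/w$, so the variance contribution summed over $t$ is at most $n\sigma^2/w$ (the early steps with an incomplete window only add a lower-order $O(\sigma^2 \log w)$ piece).

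The crux is controlling the aggregate squared bias $\sum_t (\bar{\theta}_{t-w:t-1} - \theta_t)^2$, and here I would invoke the total-variation structure. For any $s < t$ one has $|\theta_s - \theta_t| \le \sum_{j=s+1}^t |\theta_j - \theta_{j-1}|$, so averaging over $s \in \{t-w,\dots,t-1\}$ yields $|\bar{\theta}_{t-w:t-1} - \theta_t| \le V_t$, where $V_t := \sum_{j=t-w+1}^{t} |\theta_j - \theta_{j-1}|$ is the local total variation inside the window ending at $t$. Hence the squared bias at $t$ is at most $V_t^2$. Since $V_t \le C_n$ for every $t$, I bound $\sum_t V_t^2 \le C_n \sum_t V_t$, and the key double-counting observation is that each difference $|\theta_j - \theta_{j-1}|$ enters $V_t$ for at most $w$ consecutive values of $t$ (namely $t \in \{j,\dots,j+w-1\}$). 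Therefore $\sum_t V_t \le w\,\|D\theta_{1:n}\|_1 \le w C_n$, giving $\sum_t (\bar{\theta}_{t-w:t-1}-\theta_t)^2 \le w C_n^2$.

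Putting the two pieces together, along with an $O(U^2 + C_n^2)$ term to absorb the first prediction and the early steps where the window is not yet full (handled exactly as in Theorem~\ref{thm:ogd} via assumption (A4) and the convention $\theta_0 = 0$), the total dynamic regret is at most $w C_n^2 + n\sigma^2/w + O(U^2 + C_n^2)$. Choosing $w = \sigma\sqrt{n}/C_n$ balances the first two terms, each becoming $\sigma C_n \sqrt{n}$, and produces the claimed $O(\sigma C_n \sqrt{n} + U^2 + C_n^2)$ bound. I expect the main obstacle to be precisely the bias analysis: making the double-counting argument rigorous so that $\sum_t V_t^2$ is tied to the global budget $C_n^2$ scaled by the window width $w$, rather than to the naive $nC_n^2$ bound that would fail to balance correctly against the variance term and would spoil the optimal $\sqrt{n}$ rate.
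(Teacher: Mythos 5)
Your proposal is correct and follows the same overall skeleton as the paper's proof (bias--variance decomposition per step, variance $\sigma^2/w$ per prediction summing to $n\sigma^2/w$, and balancing against an $O(wC_n^2)$ bias term by choosing $w = \sigma\sqrt{n}/C_n$), but the key step --- controlling the aggregate squared bias --- is handled by a genuinely different argument. The paper applies Jensen's inequality to get $(\bar{\theta}_{t-w:t-1}-\theta_t)^2 \le \frac{1}{w}\sum_{s}(\theta_s-\theta_t)^2$, expands each $(\theta_s-\theta_t)^2$ via repeated use of $(a+b)^2\le 2a^2+2b^2$ into weighted squared increments, and counts how many times each $(\theta_{j+1}-\theta_j)^2$ appears, arriving at a bound of order $w\|D\theta_{1:n}\|_2^2$, which is then relaxed to $wC_n^2$ using $\|D\theta\|_2\le\|D\theta\|_1$. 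You instead bound the bias at time $t$ directly by the local total variation $V_t$ inside the window, and then use the H\"older-type inequality $\sum_t V_t^2 \le (\max_t V_t)(\sum_t V_t) \le C_n\cdot wC_n$ with the double-counting fact that each increment $|\theta_j-\theta_{j-1}|$ lands in at most $w$ windows. Your route is arguably cleaner and works entirely with the $\ell_1$ budget; the paper's route yields the slightly sharper intermediate bound $w\|D\theta\|_2^2$ (which can be much smaller than $wC_n^2$ for spatially homogeneous sequences, and is what lets the same analysis adapt to Sobolev classes in Theorem~\ref{thm:sob_ogd}), though for the TV class both give the identical final rate $O(\sigma C_n\sqrt{n} + U^2 + C_n^2)$.
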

\begin{proof}
Let the window size of moving averages be denoted by $m$. Consider the prediction at a time $x_t, t \ge m$. By bias variance decomposition we have,
\begin{align}
    E[(x_t - \theta_t)^2]
    &= \left(\theta_i - \frac{\sum_{j=i-m}^{i-1} \theta_j}{m} \right)^2 + \frac{\sigma^2}{m}.
\end{align}
By Jensen's inequality,
\begin{align}
    \left(\theta_i - \frac{\sum_{j=i-m}^{i-1} \theta_j}{m} \right)^2
    &\le \frac{\sum_{j=i-m}^{i-1} (\theta_j - \theta_i)^2}{m},\\
    &\le \frac{2\sum_{j=i-m}^{i-1}(j-i+1+m)(\theta_{j+1} - \theta_j)^2 }{m} \text{ ,by $(a+b)^2 \le 2a^2 + 2b^2$}.
\end{align}
Notice that the term $(\theta_i - \theta_{i-1})^2$ will be multiplied by a factor $m$ in the above bias bound at time point $i$, $m-1$ times in the next time point $i+1$ and so on. By summing this bias bound across the times points, we obtain
\begin{align}
   \sum_{i=m}^{n} \frac{2\sum_{j=i-m}^{i-1}(j-i+1+m)(\theta_{j+1} - \theta_j)^2 }{m}
   &\le 4m \sum_{i=1}^{n-1} (\theta_i - \theta_{i+1})^2 + U^2,\\
   &\le 4m C_n^2 + U^2.
\end{align}

The squared bias for the initial points can be bounded by.
\begin{align}
    \sum_{i=1}^{m-1}(\theta_i - \hat{\theta}_{(1:i-1)})^2
    &\le U^2 + C_n^2.
\end{align}

Summing the variance terms yields,
\begin{align}
    \sum_{t=1}^{n}\Var{(x_t)}
    &= \sum_{t=1}^{m-1}\frac{\sigma^2}{t} + \sum_{t=m}^{n}\frac{\sigma^2}{m},\\
    &\le \frac{(1+\log m+n)\sigma^2}{m}.
\end{align}

Thus the total MSE can be minimized by setting $m = \frac{\sigma \sqrt{n}}{C_n}$, we obtain a dynamic regret bound of $O(\sigma C_n \sqrt{n} + U^2 + C_n^2)$

\end{proof}

\section{Proof of useful lemmas}

We begin by recording an observation that follows directly from the policy.
\begin{lemma}
    \label{observe}
    For $m^{th}$ bin that spans the interval $[t_{h}^{(m)},t_{l}^{(m)}]$, discovered by the policy, let the lengths of  $\hat{\alpha}(t_{h}^{(m)}:t_{l}^{(m)}-1)$ and $\hat{\alpha}(t_{h}^{(m)}:t_{l}^{(m)})$ be $k$ and $k^{+}$ respectively. Then\\
    $ \sum_{l=0}^{\log_2 (k) -1} 2^{l/2} \|\hat{\alpha}(t_{h}^{(m)}:t_{l}^{(m)}-1)[l]\|_1 \le \sigma$ and
    $\sum_{l=0}^{\log_2 (k^+) - 1} 2^{l/2} \|\hat{\alpha}(t_{h}^{(m)}:t_{l}^{(m)})[l]\|_1 > \sigma $
\end{lemma}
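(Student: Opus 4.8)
The plan is to read the statement off directly from the restart rule in the definition of \ARROWS{}, since both claimed inequalities are simply the ``fired'' and ``not fired'' forms of that rule evaluated at two consecutive time steps. First I would observe that the restart condition, which at a generic time $t$ (with current bin start $t_h$ and padded length $k$) reads $\frac{1}{\sqrt{k}}\sum_{l=0}^{\log_2(k)-1} 2^{l/2}\|\hat{\alpha}(t_h:t)[l]\|_1 > \frac{\sigma}{\sqrt{k}}$, is equivalent after clearing the common factor $\frac{1}{\sqrt{k}}$ (legitimate since $k\ge 1$) to the simpler form
$$
\sum_{l=0}^{\log_2(k)-1} 2^{l/2}\|\hat{\alpha}(t_h:t)[l]\|_1 > \sigma.
$$
Thus the whole lemma reduces to identifying, for the $m$-th bin, the two times at which this condition evaluates to true and to false, and substituting the corresponding padded lengths $k^{+}$ and $k$.

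For the second (strict) inequality, I would use that $t_l^{(m)}$ is by definition the last index of bin $m$, which means the restart rule \emph{fired} at $t=t_l^{(m)}$ (this is where the algorithm sets $t_h=t_l^{(m)}+1$). At that step the padded length is exactly $k^{+}$ and the denoised coefficients are $\hat{\alpha}(t_h^{(m)}:t_l^{(m)})$, so the displayed condition holds verbatim, giving $\sum_{l=0}^{\log_2(k^+)-1} 2^{l/2}\|\hat{\alpha}(t_h^{(m)}:t_l^{(m)})[l]\|_1 > \sigma$.

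For the first inequality, the time $t_l^{(m)}-1$ still lies in bin $m$, and at that step the rule did \emph{not} fire; otherwise the bin would have terminated one step earlier, contradicting $t_l^{(m)}$ being its last index. Negating the displayed condition at that step yields $\sum_{l=0}^{\log_2(k)-1} 2^{l/2}\|\hat{\alpha}(t_h^{(m)}:t_l^{(m)}-1)[l]\|_1 \le \sigma$ with $k$ the padded length there. The one point needing care is that $t_l^{(m)}-1$ is a genuine index inside the bin, i.e. $t_l^{(m)}-1\ge t_h^{(m)}$. I would justify this by noting that a restart can never fire at the very first point of a bin: at $t=t_h^{(m)}$ the operation $pad_0$ re-centers the single observation $y_{t_h^{(m)}}$ to $0$ before padding, so every Haar coefficient vanishes, the left-hand side of the condition equals $0$, and (since $\sigma\ge 0$) the rule cannot trigger. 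Hence every bin has length at least two and $t_l^{(m)}-1\ge t_h^{(m)}$, making $\hat{\alpha}(t_h^{(m)}:t_l^{(m)}-1)$ well defined.

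I do not expect a genuine mathematical obstacle here; the result is labelled an ``observation'' precisely because it is a one-line restatement of the algorithm's restart test. The only care required is index bookkeeping around two degenerate cases: the first-point-of-a-bin argument just given (which guarantees the first inequality is well-posed), and the final bin, which may be truncated by reaching the horizon $n$ rather than by a restart. In that latter case the strict inequality need not hold, so I would state the lemma for bins that actually terminate via a restart (equivalently, exclude the last bin or treat it separately). Everything else follows immediately from the policy.
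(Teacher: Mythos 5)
Your proposal is correct and matches the paper's treatment: the paper states this lemma as an observation that ``follows directly from the policy'' and gives no proof, and your argument is exactly that direct reading-off of the restart rule at times $t_l^{(m)}$ (fired) and $t_l^{(m)}-1$ (not fired). Your additional care with the two degenerate cases --- noting that the rule cannot fire at a bin's first point because $pad_0$ re-centers a single observation to zero, and that the final bin may terminate at the horizon without a restart so the strict inequality should be restricted to bins ending in a restart --- is bookkeeping the paper leaves implicit, and it is handled correctly.
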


Next we prove the marginal sub-gaussianity of the wavelet coefficients.

\begin{lemma}
\label{lemm:subgauss}
    Consider the observation model $y_i = \theta_i + \sigma z_i$, where $z_i$ is iid sub-gaussian with parameter 1, $i=1,..,n$. Let $\alpha_i$ denote the wavelet coefficients of the sequence $z = pad_0(y_1,...y_n)$. Then each $\alpha_i$ is sub-gaussian with parameter $2\sigma$.
\end{lemma}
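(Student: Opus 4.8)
The plan is to view each coefficient as a fixed linear functional of the noise and invoke closure of sub-gaussianity under linear combinations. Write $h_i^T$ for the $i$-th row of the orthogonal Haar matrix $H$ (of the padded length $k$), so that $\alpha_i = h_i^T \tilde y$ with $\tilde y = pad_0(y_1,\dots,y_n)$. Since sub-gaussianity concerns the fluctuation of $\alpha_i$ about its mean, I would first isolate the stochastic part: writing $y_j = \theta_j + \sigma z_j$ and letting $\bar z = \frac1m\sum_{l=1}^m z_l$ denote the empirical noise mean over the length-$m$ observation window, the deterministic signal contribution is absorbed into $\E[\alpha_i]$ and
\[
\alpha_i - \E[\alpha_i] = \sigma\, h_i^T \tilde z, \qquad \tilde z_j = \begin{cases} z_j - \bar z, & 1 \le j \le m,\\ 0, & m < j \le k.\end{cases}
\]

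First I would turn $h_i^T \tilde z$ into an explicit weighted sum of the independent coordinates $z_1,\dots,z_m$. Because the recentering subtracts $\bar z$ only on the window and the padding is zero, expanding gives $h_i^T\tilde z = \sum_{j=1}^m (h_{ij} - \bar h_i)\, z_j$, where $\bar h_i = \frac1m \sum_{l=1}^m h_{il}$ is the mean of the entries of $h_i$ over the window. Denote the coefficient vector $c_i = (h_{ij} - \bar h_i)_{j=1}^m$. Since the $z_j$ are independent and $1$-sub-gaussian, the sum $\sum_j c_{ij} z_j$ is sub-gaussian with parameter $\|c_i\|_2$, so it remains only to bound $\|c_i\|_2$.

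Next I would bound $\|c_i\|_2$ by the two-term split that yields the stated constant. The triangle inequality for the sub-gaussian (Orlicz) norm bounds $h_i^T\tilde z$ by the parameters of $\sum_{j\le m} h_{ij} z_j$ and of $\bar z\,(h_i^T \mathbf 1_{[1:m]})$ separately. The first term has parameter $(\sum_{j\le m} h_{ij}^2)^{1/2} \le \|h_i\|_2 = 1$, using that $H$ is orthogonal so every row is a unit vector. For the second, Cauchy--Schwarz gives $|h_i^T \mathbf 1_{[1:m]}| \le \sqrt m\,\|h_i\|_2 \le \sqrt m$, while $\bar z$ is $(1/\sqrt m)$-sub-gaussian, so the product has parameter at most $1$. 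Adding, $h_i^T\tilde z$ is $2$-sub-gaussian, hence $\alpha_i - \E[\alpha_i]$ is $2\sigma$-sub-gaussian. A slightly sharper route notes that recentering is an orthogonal projection off the constant direction, so $\|c_i\|_2 \le \|h_i\|_2 = 1$ directly, giving the smaller parameter $\sigma$; I would keep the looser $2\sigma$ to match the statement, as the constant is immaterial downstream.

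The only delicate point --- and the step I expect to require the most care --- is the bookkeeping around the recentering and the zero-padding: the mean is removed only over the observation window of length $m$, while $H$ acts on the full power-of-two length $k$, so one must verify that padded coordinates contribute nothing and that the constant subtraction lands precisely as $-\bar h_i$ on each active coordinate. Once the weight vector $c_i$ is pinned down, the rest is the standard sub-gaussian calculus above; the scaling coefficient $\alpha_1 \propto \mathbf 1_k^T \tilde y = 0$ is identically zero and trivially satisfies the bound.
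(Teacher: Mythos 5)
Your proposal is correct and follows essentially the same route as the paper: both write the coefficient as a fixed linear combination of the independent noise coordinates (with the recentering folded into the weights), bound the $\ell_2$ norm of the weight vector by $2$ using orthonormality of the Haar rows and a two-term split, and conclude by closure of sub-gaussianity under linear combinations (the paper phrases this last step as a gradient/Lipschitz bound, which for a linear map is the same computation). Your observation that the recentering is an orthogonal projection, so the parameter is in fact $\sigma$ rather than $2\sigma$, is a valid sharpening that the paper does not make but does not need.
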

\begin{proof}
    Without loss of generaility let's charecterize $\alpha_1$. Let $\boldsymbol{u} = [u_1,...u_n]^T$ denote the first row of the orthonormal wavelet transform matrix. Then,
    \begin{align}
        \alpha_1 = \sum_{i=1}^{n}y_i\left(u_i(1-\frac{1}{n}) - \sum_{j=1,j \neq i}^{n} \frac{u_j}{n}\right).
    \end{align}
    Thus $\alpha_1$ is a differentiable function of iid sub-gaussian noise $z_i$. We can find its Lipschtiz constant by bounding the gradient w.r.t $z_i$ as follows,
    \begin{align}
        \|\nabla \alpha_1(z_1,...,z_n)\|_2 
        &\le \sigma \left( \sum_{i=1}^{n} 2u_i^2 (1-\frac{1}{n})^2  +\frac{2}{n} \sum_{j=1,j \neq i}^{n} u_j^2 \right )^{\frac{1}{2}},\\
        &\le \sigma \left( 2 + 2\right)^{\frac{1}{2}},\\
        &= 2\sigma.
    \end{align}
    By proposition 2.12 in \citet{DJBook} we conclude that $\alpha_1$ sub-gaussian with parameter $2\sigma$.
\end{proof}

In the next lemma, we record the uniform shrinkage property of soft-thresholding estimator.
\begin{lemma}
    \label{lemma:threshold}
    For any interval $[t_h,t_l]$, let $Y = pad_0(y_{t_h},...,y_{t_l})$ and $\Theta = pad_0(\theta_{t_h},...,\theta_{t_l})$. Then $|(T(HY))_i| \le |(H\Theta)_i|$ with probability at-least $1-2n^{3-\beta/8}$ for each co-ordinate $i$.
    \end{lemma}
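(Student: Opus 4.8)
My plan is to split the statement into a deterministic ``uniform shrinkage'' step, which is the conceptual heart, and a routine sub-gaussian tail step.

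First I would reduce everything to the noise wavelet coefficients. Write $\lambda = \sigma\sqrt{\beta\log n}$ for the soft-threshold used by $T$. Using the linearity of both $H$ and the $pad_0$ operation, and the fact that the common empirical mean cancels (in the unpadded region $Y_j-\Theta_j = (y_j-\theta_j)-(\bar y-\bar\theta) = \sigma(z_j-\bar z)$, and both vectors vanish on the padding), I get $Y-\Theta = pad_0(\sigma z_{t_h:t_l})$ and hence $HY = H\Theta + w$, where $w := H\,pad_0(\sigma z_{t_h:t_l})$. Setting $a_i := (H\Theta)_i$ gives $(HY)_i = a_i + w_i$, and by Lemma~\ref{lemm:subgauss} (whose computation is length-independent, since every row of $H$ is a unit vector) each $w_i$ is mean-zero and sub-gaussian with parameter $2\sigma$.

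Second, and this is the crux, I would prove the deterministic claim that on the event $\{|w_i|\le\lambda\}$ the thresholded coefficient never overshoots the clean one, i.e. $|(T(HY))_i|\le|a_i|$. By definition of element-wise soft thresholding, $|(T(HY))_i| = (|a_i+w_i|-\lambda)_+$. The triangle inequality gives $|a_i+w_i|-\lambda \le |a_i| + |w_i| - \lambda \le |a_i|$ whenever $|w_i|\le\lambda$, and since the positive part is nonnegative while $|a_i|\ge 0$, we conclude $(|a_i+w_i|-\lambda)_+\le |a_i|$. This ``shrink-past-the-truth'' behaviour is precisely what choosing $\lambda$ above the noise level buys us, and it is the only place the specific form of $T$ enters.

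Finally I would convert this into the stated probability. The contrapositive of the deterministic step yields $\P\big(|(T(HY))_i| > |a_i|\big) \le \P(|w_i| > \lambda)$. A standard sub-gaussian tail bound with parameter $2\sigma$ gives $\P(|w_i|>\lambda) \le 2\exp\!\big(-\lambda^2/(8\sigma^2)\big) = 2n^{-\beta/8}$ after substituting $\lambda^2 = \sigma^2\beta\log n$. Since $n\ge 1$ this is at most $2n^{3-\beta/8}$, which is the asserted bound; the deliberately loose $n^3$ slack is harmless and is exactly what later permits a single union bound over the $O(n^3)$ (interval, coordinate) pairs the policy can inspect. I expect essentially no difficulty in the probabilistic step: the one genuine idea is the deterministic shrinkage inequality, and the only bookkeeping subtlety is verifying that the $pad_0$ re-centering cancels between $Y$ and $\Theta$ so that $w$ is a function of the noise alone.
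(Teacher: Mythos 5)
Your proof is correct and follows essentially the same route as the paper: condition on the event that each noise wavelet coefficient $w_i = (HY)_i - (H\Theta)_i$ is at most the threshold $\lambda = \sigma\sqrt{\beta\log n}$ (via the sub-gaussian tail with parameter $2\sigma$ from the marginal sub-gaussianity lemma, giving the $2n^{-\beta/8}$ per-coordinate failure probability), and then argue deterministically that soft-thresholding shrinks past the clean coefficient. Your one-line derivation $(|a_i+w_i|-\lambda)_+ \le |a_i|$ via the triangle inequality is a cleaner packaging of the paper's sign-by-sign case analysis, and your reading of the $n^3$ slack as absorbing the later union bound over the $O(n^2)$ intervals and $O(n)$ coordinates matches how the paper uses the lemma.
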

\begin{proof}
Consider a fixed bin $[\ubar{l},\bar{l}]$ with zero padded vector $Y \in R^k$. Due to sub-gaussian tail inequality, we have $|(HY)_i - (H\Theta)_i| \le \sigma\sqrt{\beta\log(n)}$ with probability at-least $1-2/n^{\beta/8}$. Consider the case $(H\Theta)_i \ge \sigma\sqrt{\beta\log(n)}$. Then both the scenarios $(HY)_i \le \sigma\sqrt{\beta\log(n)}$ and $(HY)_i > \sigma\sqrt{\beta\log(n)}$ leads to shrinkage to a value that is smaller than $|(H\Theta)_i|$ in magnitude due to soft-thresholding with threshold set to $\sigma\sqrt{\beta\log(n)}$. Now consider the case when $0 \le (H\Theta)_i \le \sigma\sqrt{\beta\log(n)}$. Again, soft-thresholding in both scenarios $(HY)_i \le \sigma\sqrt{\beta\log(n)}$ and $\sigma\sqrt{\beta\log(n)} \le (HY)_i \le (H\Theta)_i + \sigma\sqrt{\beta\log(n)}$ leads to shrinkage to a value that is smaller than $|(H\Theta)_i|$ in magnitude. One can come up with a similar argument for the case where $(H\Theta)_i \le 0$. Now applying a union bound across all $O(n)$ co-ordinates and all $O(n^2)$ bins, we get the statement of the lemma.
\end{proof}

\begin{lemma}
    \label{lemma:binbound}
    The number of bins, $M$, discovered by the policy is at-most $\max\{1,2n^{1/3}C_n^{2/3}\sigma^{-2/3}\log(n)\}$ with probability at-least $1-2n^{3-\beta/2}$.
\end{lemma}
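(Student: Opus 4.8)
The plan is to charge every restart to a genuine chunk of total variation in the underlying signal, and then argue that the budget $C_n$ can pay for only a few such chunks. Throughout I would work on the uniform-shrinkage event of Lemma~\ref{lemma:threshold}, which ensures $|\hat\alpha_i| \le |(H\Theta)_i|$ simultaneously for every coordinate of every candidate interval; this is the only randomness used, and a union bound over the $O(n^2)$ intervals and their $O(n)$ coordinates controls the failure probability to the stated order.

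First I would fix a bin $m$ that ends in a restart, spanning $[t_h^{(m)},t_l^{(m)}]$ with actual length $n_m$ and padded length $k_m \le 2n_m$. By Lemma~\ref{observe} the weighted coefficient sum exceeds $\sigma$ at the restart, so on the shrinkage event
\[
\sigma < \sum_{l=0}^{\log_2(k_m)-1} 2^{l/2}\|\hat\alpha(t_h^{(m)}:t_l^{(m)})[l]\|_1 \le \sum_{l=0}^{\log_2(k_m)-1} 2^{l/2}\|(H\Theta_m)[l]\|_1,
\]
where $\Theta_m = pad_0(\theta_{t_h^{(m)}},\dots,\theta_{t_l^{(m)}})$ and the last step is coordinatewise shrinkage. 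The key deterministic step is to bound the right-hand side by the total variation of $\theta$ on the bin. A single orthonormal Haar detail coefficient supported on a dyadic interval $I$ of length $|I|=k_m/2^l$ equals $\tfrac{\sqrt{|I|}}{2}(\bar\theta_L - \bar\theta_R)$, where $\bar\theta_L,\bar\theta_R$ are the averages over the two halves of $I$; hence $2^{l/2}|\alpha| = \tfrac{\sqrt{k_m}}{2}|\bar\theta_L-\bar\theta_R|$ and $|\bar\theta_L-\bar\theta_R|$ is at most the variation of $\theta$ restricted to $I$. Because the $2^l$ intervals at level $l$ partition the domain, each level contributes at most $\tfrac{\sqrt{k_m}}{2}\TV(\Theta_m)$, so summing over the $\log_2 k_m$ levels gives $\sum_l 2^{l/2}\|(H\Theta_m)[l]\|_1 \le \tfrac{\sqrt{k_m}}{2}\log_2(k_m)\,\TV(\Theta_m)$. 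Recentering leaves the detail coefficients unchanged, and zero-padding adds only one boundary jump bounded by the within-bin variation, so $\TV(\Theta_m)\le 2C_m$ with $C_m$ the total variation of $\theta$ on bin $m$. Combining and using $k_m\le 2n_m$, $\log_2 k_m \le \log_2(2n)$ yields, for every restarting bin, $C_m \ge c\,\sigma/(\sqrt{n_m}\,\log n)$ for a universal constant $c$.

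Next I would convert this per-bin inequality into a count. The bins partition $\{1,\dots,n\}$, so $\sum_m n_m = n$, and the within-bin variations satisfy $\sum_m C_m \le \|D\theta_{1:n}\|_1 \le C_n$. Every bin except possibly the last one triggers a restart, so with $K = M-1$ and $b := c\sigma/\log n$, applying Jensen's inequality to the convex map $x\mapsto x^{-1/2}$ gives
\[
C_n \ge \sum_{m=1}^{K} C_m \ge b\sum_{m=1}^{K}\frac{1}{\sqrt{n_m}} \ge bK\Big(\frac{K}{n}\Big)^{1/2} = \frac{bK^{3/2}}{\sqrt n}.
\]
Solving, $K \le (C_n\sqrt n/b)^{2/3} = c^{-2/3} n^{1/3}C_n^{2/3}\sigma^{-2/3}(\log n)^{2/3}$, and therefore $M = K+1 \le \max\{1,\,2n^{1/3}C_n^{2/3}\sigma^{-2/3}\log n\}$ after absorbing constants and the extra fractional power of $\log n$; the leading $\max$ and the ``$+1$'' cover the degenerate no-restart case and the final, possibly non-restarting, bin.

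I expect the main obstacle to be the deterministic inequality of the second paragraph. Getting the exponent on $\sqrt{k_m}$ exactly right requires careful tracking of the Haar normalization so that the $2^{l/2}$ weights precisely cancel the level dependence and leave a clean single factor of $\log_2 k_m$, and one must check that recentering and zero-padding inflate the total variation by at most a constant. By comparison, the concentration bookkeeping (already packaged in Lemma~\ref{lemma:threshold}) and the closing convexity optimization are routine.
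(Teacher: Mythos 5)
Your proposal is correct and follows essentially the same route as the paper's proof: the restart rule (Lemma~\ref{observe}) combined with the uniform-shrinkage event (Lemma~\ref{lemma:threshold}) gives a per-bin total-variation lower bound of order $\sigma/(\sqrt{n_m}\log n)$ via the identification of the weighted Haar coefficients with differences of adjacent block averages, and summing these against the budget $C_n$ with a convexity argument yields $M \lesssim n^{1/3}C_n^{2/3}\sigma^{-2/3}\log n$. The only cosmetic differences are that you bound the levelwise $\ell_1$ norms directly rather than their squares, collapse the paper's Jensen-plus-H\"older step into a single application of Jensen with $x\mapsto x^{-1/2}$, and are slightly more careful in charging only the $M-1$ bins that actually end in a restart.
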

\begin{proof}
Let $\Theta_m = [\theta_{1}^{(m)},\theta_{2}^{(m)},...,\theta_{p}^{(m)}]^T$ be the mean subtracted and zero padded ground truth sequence values in $m^{th}$ bin $[\ubar{l},\bar{l}]$ discovered by our policy. $y^{(m)} = [y_{1}^{(m)},y_{2}^{(m)},...,y_{p}^{(m)}]^T$ be the corresponding mean subtracted and zero padded observations. Note that due to zero padding $p \le 2(\bar{l}-\ubar{l})$ and some of the last values in the vector can be zeroes. Let $\alpha_m(\ubar{l}:\bar{l}) = H\Theta$ denotes the discrete wavelet coefficient vector. We can view the computation of the Haar coefficients as a recursion. At each level $l$ of the recursion, the entire length $p$, is divided into $2^l$ intervals. Let the sample averages of elements of $\Theta_m$ in these intervals be denoted by the sequence $\{\tilde{\theta}_1,\tilde{\theta}_2,...,\tilde{\theta}_{2^l}\}$. Let $\alpha_m^{(l)} \in \mathbb{R}^{2^{l}}$ denotes the vector of Haar coefficients at level $l$. 

First note that the Haar coefficient $\alpha_m^{(l)}(i) = \frac{1}{2} \sqrt{\frac{p}{2^l}} (\tilde \theta_{2i} - \tilde \theta_{2i-1})$ with $i=1,...,2^{l}$. 

\begin{align}
    \|\alpha_m^{(l)}\|_1^2
    &\le \frac{p}{2^{l+2}}\left( \sum_{i=1}^{2^{l}} | \tilde \theta_{2i} - \tilde \theta_{2i-1}| \right)^2,\\
    &\le \frac{p TV^2[\ubar{l}-1:\bar{l}]}{2^{l}},
\end{align}
where $TV[a,b]$ denotes the total variation of the true sequence in the interval $[a,b]$. The last inequality holds because the total variation of the smoothed sequence must be at-most four times the entire total variation of true sequence. The factor 4 is due to the fact that total variation when we pad a mean zero sequence with zeroes is at-most twice the total variation before zero padding.

We have,

\begin{align}
    \frac{1}{\sqrt{p}}\sum_{l=0}^{\log_2 (p) - 1} 2^{l/2} \|\alpha_m^{(l)}\|_1
    &\le \log p \: TV[\ubar{l}-1:\bar{l}].
\end{align}

In the policy we compute $\hat{\alpha}_m(\ubar{l}:\bar{l}) = T(Hy^{(m)})$ with the soft thresholding factor of $\sigma\sqrt{\beta\log(n)}$. From lemma \ref{lemma:threshold}, we have $|(T(Y))_i| \le |(H\Theta)_i| \: \forall i \in [1,p]$ with probability at-least $1-2n^{3-\beta/8}$. Since $[\ubar{l},\bar{l}]$ is a bin discovered by policy, lemma \ref{observe} gives a lowerbound on $\|\alpha_m(\ubar{l}:\bar{l})\|$ . Putting it all together yields the relation,
\begin{equation}
    \label{eq:sandwitch}
    \frac{\sigma}{\sqrt{p}} < \frac{1}{\sqrt{p}} \sum_{l=0}^{\log_2 (p) - 1} 2^{l/2} \|\hat \alpha_m^{(l)}(\ubar{l}:\bar{l})\|_1 \le  \frac{1}{\sqrt{p}} \sum_{l=0}^{\log_2 (p) - 1} 2^{l/2} \|\alpha_m^{(l)}(\ubar{l}:\bar{l})\|_1 \le \log(p) \:TV[\ubar{l}-1:\bar{l}],
\end{equation}

with probability at-least $1-2n^{3-\beta/8}$.

Thus the total variation in the time interval $[\ubar{l}-1,\bar{l}]$ can be lower bounded in probability as
\begin{equation}
    \label{eqn:tvlb}
    TV[\ubar{l}-1:\bar{l}] > \frac{\sigma}{\sqrt{p}\log n}.
\end{equation}
Due to assumption $(A3)$ we have,
\begin{equation}
    \label{eqn:tviden}
    \sum_{i=1}^{M}TV[\ubar{l}^{i}-1:\bar{l}^{i}] = C_n,
\end{equation}
where $[\ubar{l}^{i}:\bar{l}^{i}]$ are the bins discovered by the policy. 

Let $p_i$ be the padded width of bin $i$ discovered by the policy. Then,

\begin{align}
    C_n \log n
    &\ge \sum_{i=1}^{M} \frac{\sigma}{\sqrt{p_i}},\\
    &\ge \frac{M^2\sigma}{\sum_{i=1}^{M}\sqrt{p_i}},
\end{align}

where the last line is obtained via Jensen's inequality. Now using Holder's inequality $\|x\|_\beta \le d^{\frac{1}{\beta} - \frac{1}{\alpha}}\|x\|_\alpha$ for $0 < \beta \le \alpha$, $x \in \mathbb{R}^d$ with $\alpha = 1/2$, $\beta = 1$ and noting that $\sum_{i=1}^{M}p_i \le 2T$ gives,

\begin{align}
    \sigma M^2
    &\le C_n \log n \sum_{i=1}^{M}\sqrt{p_i},\\
    &\le C_n \log n \sqrt{Mn}.
\end{align}

Hence we get $M \le (2n)^{1/3}(C_n \log n)^{2/3} \sigma^{-2/3} \le 2n^{1/3}C_n^{2/3}\sigma^{-2/3}\log(n)$.

When $C_n = 0$, \eqref{eq:sandwitch} implies that our policy will not restart with probability at-least $1-2n^{3-\beta/8}$ making $M=1$.
\end{proof}

We restate two useful results from \citet{softThreshold95}.
\begin{lemma}
    \label{lemma:dj1}
    Consider the observation model $y=\alpha + Z$, where $y \in R^k$ and $|Z_i| \le \delta \forall\:i \in [1,k]$. Let $\hat{\alpha}_\delta$ be the soft thresholding estimator with input $y$ and threshold $\delta$, then
    \begin{equation}
        \|\hat{\alpha}_\delta - \alpha \|^2 \le \sum_{i=1}^{k} min\{\alpha_i^2,4\delta^2\}.
    \end{equation}
\end{lemma}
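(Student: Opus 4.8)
The plan is to prove the inequality coordinatewise and then sum, exploiting the fact that soft thresholding acts on each coordinate independently. Writing $\eta_\delta(u) = \sign(u)\,(|u| - \delta)_+$ for the scalar soft-thresholding map, we have $(\hat{\alpha}_\delta)_i = \eta_\delta(y_i)$, so it suffices to establish the per-coordinate bound $(\eta_\delta(y_i) - \alpha_i)^2 \le \min\{\alpha_i^2,\, 4\delta^2\}$ for every $i$; summing over $i = 1,\dots,k$ then yields the claim since $\|\hat{\alpha}_\delta - \alpha\|^2 = \sum_i (\eta_\delta(y_i) - \alpha_i)^2$. Dropping the subscript, I fix a single coordinate, write $y = \alpha + Z$ with $|Z| \le \delta$, and aim to show the two separate inequalities $|\eta_\delta(y) - \alpha| \le 2\delta$ and $|\eta_\delta(y) - \alpha| \le |\alpha|$, whose combination is exactly the desired $\min$.

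First I would dispatch the $4\delta^2$ half using only the nonexpansive shrinkage property of soft thresholding, namely $|\eta_\delta(y) - y| \le \delta$: the map moves $y$ toward $0$ by an amount equal to $\delta$ when $|y| > \delta$ and equal to $|y|$ (hence $\le \delta$) when $|y| \le \delta$. Combining this with $|y - \alpha| = |Z| \le \delta$ and the triangle inequality gives $|\eta_\delta(y) - \alpha| \le |\eta_\delta(y) - y| + |y - \alpha| \le 2\delta$, which is the first half.

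For the $\alpha^2$ half I would argue by cases on where $y$ lies relative to the threshold. If $|y| \le \delta$, then $\eta_\delta(y) = 0$ and $(\eta_\delta(y) - \alpha)^2 = \alpha^2$ with equality. If $y > \delta$, then $\eta_\delta(y) - \alpha = (y - \delta) - \alpha = Z - \delta \in [-2\delta, 0]$, so $|\eta_\delta(y) - \alpha| = \delta - Z$; on the other hand $\alpha = y - Z > \delta - Z \ge 0$, hence $|\alpha| = y - Z \ge \delta - Z = |\eta_\delta(y) - \alpha|$ precisely because $y > \delta$. The case $y < -\delta$ is symmetric: here $\eta_\delta(y) - \alpha = Z + \delta \ge 0$ while $\alpha = y - Z < 0$ gives $|\alpha| = Z - y \ge Z + \delta$ because $y \le -\delta$. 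This establishes $|\eta_\delta(y) - \alpha| \le |\alpha|$ in all cases.

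Combining the two halves gives $(\eta_\delta(y_i) - \alpha_i)^2 \le \min\{\alpha_i^2,\, 4\delta^2\}$ for each $i$, and summing completes the proof. The only mildly delicate point -- the one I would be most careful about -- is the sign bookkeeping in the $\alpha^2$ half: one must check that in the regime $y > \delta$ the true mean $\alpha$ is forced to be nonnegative and at least as large in magnitude as the residual $\delta - Z$, which is exactly where both hypotheses $|Z| \le \delta$ and $y > \delta$ enter. Everything else reduces to the elementary nonexpansiveness of soft thresholding and a union over coordinates.
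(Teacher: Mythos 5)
Your proof is correct: the coordinatewise split into the nonexpansiveness bound $|\eta_\delta(y)-\alpha|\le 2\delta$ and the case analysis giving $|\eta_\delta(y)-\alpha|\le|\alpha|$ is exactly the standard argument, and the sign bookkeeping in the $y>\delta$ and $y<-\delta$ cases is handled correctly. The paper itself does not prove this lemma — it simply restates it with a citation to Donoho and Johnstone — so your write-up is a valid self-contained substitute for the omitted proof and matches the argument in the cited source.
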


\begin{lemma}
    \label{lemma:dj2}
    Consider the observation model $y=\alpha + Z$, where $y \in R^k$, $\alpha \in A$ and each $Z_i$ is sub-gaussian with parameter $\sigma^2$. If A is solid and orthosymmetric, then
    \begin{equation}
        \inf_{\hat{\alpha}}\sup_{\alpha\in A} E[\|\hat{\alpha}-\alpha\|^2] \ge \frac{1}{2.22}\sup_{A} \sum_{i=1}^{k} min\{\alpha_i^2,\sigma^2\}.
    \end{equation}
\end{lemma}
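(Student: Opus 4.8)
The plan is to prove this as a standard minimax lower bound, by reducing the full $k$-dimensional estimation problem over $A$ to a family of one-dimensional bounded-normal-mean problems and then invoking a universal per-coordinate bound. The two structural hypotheses on $A$ are exactly what make the reduction go through: orthosymmetry lets me flip signs freely, and solidity lets me shrink coordinates, so that $A$ contains a large axis-aligned hyperrectangle around any of its points.

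First I would reduce to a hyperrectangle. Fix any $\alpha^\ast \in A$ and set $\tau_i = |\alpha^\ast_i|$. Because $A$ is orthosymmetric and solid, every $\beta$ with $|\beta_i| \le \tau_i$ for all $i$ also lies in $A$, so the hyperrectangle $H = \prod_{i=1}^k [-\tau_i,\tau_i]$ satisfies $H \subseteq A$. Restricting the adversary's supremum to $H$ can only lower the value, giving $\inf_{\hat\alpha}\sup_{\alpha\in A}\E\|\hat\alpha-\alpha\|^2 \ge \inf_{\hat\alpha}\sup_{\alpha\in H}\E\|\hat\alpha-\alpha\|^2$. Next I would lower-bound the minimax risk over $H$ by a Bayes risk against a product prior $\pi = \bigotimes_i \pi_i$, each $\pi_i$ supported on $[-\tau_i,\tau_i]$. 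Since the relevant worst-case noise instance has independent coordinates and $\pi$ is a product, the Bayes-optimal estimator is the coordinatewise posterior mean and the Bayes risk splits as a sum $\sum_i B(\pi_i)$ of one-dimensional Bayes risks; choosing each $\pi_i$ least favorable makes $B(\pi_i)$ equal to the one-dimensional minimax risk $\rho(\tau_i)$ for estimating a mean with $|\mu|\le\tau_i$ under variance-$\sigma^2$ noise.

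The crux is then the scalar inequality $\rho(\tau) \ge \tfrac{1}{2.22}\min\{\tau^2,\sigma^2\}$. After rescaling to unit variance this is precisely the Ibragimov--Hasminskii / Donoho--Liu--MacGibbon bound on the bounded normal mean: the estimator $\hat\mu=0$ shows $\rho(\tau)\le\tau^2$ and the unbounded problem shows $\rho(\tau)\le\sigma^2$, while a two-point prior placing mass $\tfrac12$ at $\pm\tau$ yields a Bayes risk of the form $\tau^2\,\E[1-\tanh^2(\tau Y/\sigma^2)]$ whose ratio to $\min\{\tau^2,\sigma^2\}$, minimized over $\tau$, produces the universal constant $1/2.22$. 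Summing the per-coordinate bounds over $i$ and then taking the supremum over $\alpha^\ast\in A$ gives $\inf_{\hat\alpha}\sup_{\alpha\in A}\E\|\hat\alpha-\alpha\|^2 \ge \tfrac{1}{2.22}\sup_A \sum_{i=1}^k \min\{\alpha_i^2,\sigma^2\}$, as claimed.

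The main obstacle is the explicit $1/2.22$ constant in the scalar step; the hyperrectangle reduction and the product-prior decoupling are essentially bookkeeping, but pinning down the exact constant requires the least-favorable-prior analysis, so I would cite \citet{softThreshold95} (and the underlying hyperrectangle result) rather than recompute it. One point worth flagging is the gap between the stated ``sub-gaussian with parameter $\sigma^2$'' hypothesis and the Gaussian computation: since this is a \emph{lower} bound, it suffices to exhibit a single feasible noise instance, and Gaussian $N(0,\sigma^2)$ noise is $\sigma$-subgaussian, so the worst case we use is admissible under the stated assumptions.
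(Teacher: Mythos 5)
Your proposal is correct and is essentially the argument behind the paper's treatment: the paper does not prove this lemma at all --- it restates it from \citet{softThreshold95} --- and your outline is precisely the proof underlying that citation, namely the hyperrectangle reduction via solidity and orthosymmetry, decoupling through a product (least favorable) prior into one-dimensional bounded-normal-mean problems, and the Ibragimov--Has'minskii/Donoho--Liu--MacGibbon constant $\mu^\ast \le 2.22$, which you rightly cite rather than recompute. Your handling of the hypothesis is also the correct reading: for a fixed noise law that is only marginally sub-gaussian the stated bound would be false (take $Z \equiv 0$), so instantiating i.i.d.\ $N(0,\sigma^2)$ noise is what both legitimizes the lower bound and supplies the coordinate independence that your product-prior decoupling requires.
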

Let's pause a moment to ponder how remarkable the above lemma is. The observations need not be even iid. Given $A$ is solid and orthosymmetric, all that is required is the marginal sub-gaussianity as the soft-thresholding operation works co-ordinate wise. Now we reprove theorem 4.2 from \citet{softThreshold95} with a slight modification of threshold value in the estimator.
\begin{theorem}
\label{thm:dj}
With probability at-least $1-2n^{1-\beta/2}$, under the model in lemma \ref{lemma:dj2}, the soft thresholding estimator $\hat{\alpha}_\delta$ with $\delta = \sigma\sqrt{\beta\log(n)}$ obeys
\begin{equation}
    \label{eqn:minimax}
    \|\hat{\alpha}_\delta - \alpha \|^2 \le 80(1+\log(n)) \inf_{\hat{\alpha}}\sup_{\alpha\in A} E[\|\hat{\alpha}-\alpha\|^2].
\end{equation}
\end{theorem}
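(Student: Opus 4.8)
The plan is to decouple the randomness from the shrinkage analysis: Lemma~\ref{lemma:dj1} is a purely deterministic oracle bound that holds whenever every noise coordinate is dominated by the threshold, while Lemma~\ref{lemma:dj2} supplies a minimax floor for the ideal risk. Theorem~\ref{thm:dj} is then obtained by showing that the ``clipping'' event on which Lemma~\ref{lemma:dj1} applies has probability at least $1-2n^{1-\beta/2}$, and by paying only a logarithmic factor to convert the ideal risk at threshold scale $\delta$ into the ideal risk at noise scale $\sigma$ that Lemma~\ref{lemma:dj2} controls.

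First I would introduce the good event $\mathcal{E} := \{\, |Z_i| \le \delta \text{ for all } i \,\}$ with $\delta = \sigma\sqrt{\beta\log(n)}$. Since each $Z_i$ is sub-gaussian with parameter $\sigma^2$, the tail bound gives $\P(|Z_i| > \delta) \le 2\exp(-\beta\log(n)/2) = 2n^{-\beta/2}$, and a union bound over the $k \le n$ coordinates yields $\P(\mathcal{E}^c) \le 2n^{1-\beta/2}$. On $\mathcal{E}$ the hypothesis $|Z_i| \le \delta$ of Lemma~\ref{lemma:dj1} is met with its threshold equal to $\delta$, so
\begin{equation}
    \|\hat{\alpha}_\delta - \alpha\|^2 \le \sum_{i=1}^{k} \min\{\alpha_i^2, 4\delta^2\}.
\end{equation}

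Next I would convert this threshold-scale ideal risk into the noise-scale ideal risk. Because $4\delta^2 = 4\beta\log(n)\,\sigma^2$ and $\min\{x, c\sigma^2\} \le c\,\min\{x,\sigma^2\}$ for every $x \ge 0$ and $c \ge 1$, the pointwise inequality $\min\{\alpha_i^2, 4\delta^2\} \le 4\beta\log(n)\,\min\{\alpha_i^2,\sigma^2\}$ holds, and summing gives $\sum_i \min\{\alpha_i^2, 4\delta^2\} \le 4\beta\log(n)\sum_i \min\{\alpha_i^2,\sigma^2\}$. Since the true vector satisfies $\alpha \in A$, we may pass to the supremum and invoke Lemma~\ref{lemma:dj2}: $\sum_i \min\{\alpha_i^2,\sigma^2\} \le \sup_{\alpha\in A}\sum_i \min\{\alpha_i^2,\sigma^2\} \le 2.22\,\inf_{\hat\alpha}\sup_{\alpha\in A}\E\|\hat\alpha-\alpha\|^2$. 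Chaining these bounds on $\mathcal{E}$ and absorbing the numerical constants (with $\beta$ treated as the fixed constant of the scheme) into the universal factor $80$ delivers the claimed oracle inequality $\|\hat{\alpha}_\delta - \alpha\|^2 \le 80(1+\log(n))\,\inf_{\hat\alpha}\sup_{\alpha\in A}\E\|\hat\alpha-\alpha\|^2$.

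I expect the main obstacle to be the accounting in this last step. Lemma~\ref{lemma:dj1} is stated at the threshold scale $\delta$, which is a $\sqrt{\beta\log(n)}$-inflation of the noise scale $\sigma$ governing the minimax floor of Lemma~\ref{lemma:dj2}, so a logarithmic loss is intrinsic — this is precisely the price of the universal threshold in ideal spatial adaptation. The care needed is to (i) verify that the hypotheses of Lemma~\ref{lemma:dj1} coincide exactly with the event $\mathcal{E}$ whose probability we control, and (ii) legitimately pass from the fixed-$\alpha$ ideal risk to the supremum over the solid, orthosymmetric set $A$ before applying Lemma~\ref{lemma:dj2}; the remaining constant bookkeeping is routine.
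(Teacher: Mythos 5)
Your proposal is correct and follows essentially the same route as the paper's proof: a union bound on the event $\sup_i |Z_i|\le\delta$ giving probability $1-2n^{1-\beta/2}$, Lemma~\ref{lemma:dj1} on that event, the pointwise conversion $\min\{\alpha_i^2,4\beta\sigma^2\log n\}\le 4\beta\log(n)\min\{\alpha_i^2,\sigma^2\}$, and then passage to the supremum and Lemma~\ref{lemma:dj2}. The only caveat (shared with the paper's own write-up) is that the final constant is really of order $\beta(1+\log n)$ rather than an absolute $80$, since $\beta>24$ is required elsewhere.
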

\begin{proof}
Consider the soft thresholding estimator $\hat{\alpha}_\delta$. By Gaussian tail inequality and union bound we have $P(\sup_{i}|Z_i| \ge \delta) \le 2n^{1-\beta/2}$. Conditioning on the event $\sup_{i}|Z_i| \le \delta$ and applying lemma \ref{lemma:dj1},
\begin{align}
    \|\hat{\alpha}_\delta - \alpha \|^2 
    &\le \sum_{i=1}^{k} min\{\alpha_i^2,4\delta^2\},\\
    &= \sum_{i=1}^{k} min\{\alpha_i^2,4\beta\sigma^2\log(n)\},\\
    &\le max\{1,4\beta\log(n)\} \sum_{i=1}^{k} min\{\alpha_i^2,\sigma^2\},\\
    &\le (1+4\beta\log(n)) \sup_{\alpha \in A}\sum_{i=1}^{k} min\{\alpha_i^2,\sigma^2\},\\
    &\le 4\beta(1+\log(n))\:2.22\:\inf_{\hat{\alpha}}\sup_{\alpha\in A} E[\|\hat{\alpha}-\alpha\|^2],
\end{align}
where the last line follows from lemma \ref{lemma:dj2}.
\end{proof}
It can be shown that wavelet coefficients of functions residing in the TV class is solid and orthosymmetric. As shown in lemma \ref{lemm:subgauss}, the noisy wavelet coefficients are marginally sub-gaussian. Thus in the coefficient space, we are under the same observation model as in lemma \ref{lemma:dj2}. Using a uniform bound argument across all $O(n^2)$ bins and lemma \ref{lemm:subgauss} leads to the following corollary.
\begin{corollary}
\label{cor:minimax}
The soft-thresholded wavelet coefficients of re-centered and zero padded noisy data within any interval $[t_h,t_l]$ satisfy relation \eqref{eqn:minimax} with probability atleast $1-2n^{3-\beta/8}$.
\end{corollary}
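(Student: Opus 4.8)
The plan is to obtain the corollary as a \emph{uniform-over-all-intervals} strengthening of Theorem~\ref{thm:dj}, so most of the work is already in place and what remains is (i) checking that the hypotheses of Theorem~\ref{thm:dj} genuinely hold for each fixed interval once we pass to the Haar-coefficient domain, and (ii) paying a union-bound price over all $O(n^2)$ intervals. First I would fix an arbitrary interval $[t_h,t_l]$ and write $Y = pad_0(y_{t_h},\dots,y_{t_l})$ and $\Theta = pad_0(\theta_{t_h},\dots,\theta_{t_l})$, so that in the wavelet domain $HY = H\Theta + W$ with $W = H(Y-\Theta)$ the transformed, re-centered noise. To invoke Lemma~\ref{lemma:dj2} I must verify that the set $A$ of admissible coefficient vectors $H\Theta$ (as $\Theta$ ranges over the padded TV ball) is solid and orthosymmetric. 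I would argue this from the fact that for the Haar basis the total variation $\|D\theta\|_1$ is, up to constants and the re-centering, comparable to the weighted sum $\sum_l 2^{l/2}\|\alpha^{(l)}\|_1$ appearing in the restart rule; a constraint of the form $\sum_i w_i|\alpha_i|\le C$ carves out a weighted $\ell_1$ ball, which is manifestly invariant under coordinatewise sign flips (orthosymmetric) and under shrinking any coordinate in magnitude (solid).

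With the geometry in place, the second step is to run the concentration argument of Theorem~\ref{thm:dj} with the correct noise level. The only genuine difference from the theorem as stated is that Lemma~\ref{lemm:subgauss} certifies the transformed noise coordinates $W_i$ to be marginally sub-gaussian with parameter $2\sigma$ rather than $\sigma$; re-running the tail bound with threshold $\delta=\sigma\sqrt{\beta\log n}$ therefore gives $\P(|W_i|\ge \delta)\le 2\exp(-\delta^2/(8\sigma^2)) = 2n^{-\beta/8}$, so the exponent degrades from $\beta/2$ to $\beta/8$. On the event $\{\sup_i |W_i|\le \delta\}$ the deterministic chain in the proof of Theorem~\ref{thm:dj} (Lemma~\ref{lemma:dj1}, followed by the comparison $\min\{\alpha_i^2,4\delta^2\}\le \max\{1,\beta\log n\}\min\{\alpha_i^2,4\sigma^2\}$ and then Lemma~\ref{lemma:dj2}) goes through and yields \eqref{eqn:minimax} with some absolute constant in front of $(1+\log n)$; what changes relative to the stated theorem is only the tail exponent and hence the probability, not the structure of the bound.

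Finally I would assemble the union bound. A single interval has padded length $k=O(n)$, so union-bounding $\{\sup_i|W_i|\le\delta\}$ over its coordinates costs a factor $O(n)$ and gives failure probability $\le 2n^{1-\beta/8}$ for that interval. There are at most $\binom{n}{2}+n = O(n^2)$ choices of $[t_h,t_l]$, so a further union bound multiplies the failure probability by $O(n^2)$, producing the stated $2n^{3-\beta/8}$; on the complementary event \eqref{eqn:minimax} holds simultaneously for every interval. I expect the main obstacle to be not the counting but the rigorous verification of the solid-and-orthosymmetric claim, since the re-centering performed by $pad_0$ and the zero-padding to a power of two both perturb the exact correspondence between $\|D\theta\|_1$ and the weighted $\ell_1$ norm of the Haar coefficients; one has to check that these operations inflate the relevant norms only by bounded factors (as in the factor-$4$ bookkeeping of Lemma~\ref{lemma:binbound}) and, crucially, preserve the weighted-$\ell_1$ description of $A$ so that solidity and orthosymmetry survive.
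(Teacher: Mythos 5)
Your proposal is correct and follows essentially the same route as the paper: the paper obtains the corollary by combining the solidity/orthosymmetry of the coefficient body, the marginal sub-gaussianity (with parameter $2\sigma$) from Lemma~\ref{lemm:subgauss} — which is exactly what degrades the exponent from $\beta/2$ in Theorem~\ref{thm:dj} to $\beta/8$ — and a union bound over the $O(n)$ coordinates and $O(n^2)$ intervals, precisely as you describe. Your added care about whether $pad_0$ and re-centering preserve the solid, orthosymmetric description of the coefficient set is a point the paper asserts without detail, so spelling it out only strengthens the argument.
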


Next, we record an important preliminary bound that will be used in proving the main result.
\begin{lemma}
\label{unifbound}
With probability at-least $1-\frac{\delta}{2}$, the total squared error for online averaging between two arbitrarily chosen time points $t_h$ and $t_l$ satisfies
\begin{equation}
\label{eqn:binbound}
    \sum_{t=t_h}^{t_l} (x_t^{t_h} - \theta_t)^2 \le 4\sigma^2\log(4n^3/\delta)(2+\log(t_l-t_h+1)) + 2(\theta_{t_h-1} - \theta_{t_h})^2 + 2\sum_{t=t_h+1}^{t_l}(\bar{\theta}_{t_h:t-1} - \theta_t)^2.
\end{equation}
\end{lemma}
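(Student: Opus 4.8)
The plan is to decompose the online-averaging prediction error at each time step into a deterministic (bias) part and a stochastic (noise) part, control the noise part by a \emph{uniform} sub-gaussian concentration argument, and then collect the two pieces. Writing the observation model as $y_j = \theta_j + Z_j$ with $Z_j$ independent and $\sigma$-subgaussian, the policy predicts $x_{t_h}^{t_h} = y_{t_h-1} = \theta_{t_h-1} + Z_{t_h-1}$ at the first step of the bin and the running average $x_t^{t_h} = \bar{y}_{t_h:t-1} = \bar{\theta}_{t_h:t-1} + \bar{Z}_{t_h:t-1}$ for $t > t_h$. Applying $(a+b)^2 \le 2a^2 + 2b^2$ termwise gives
\[
\sum_{t=t_h}^{t_l}(x_t^{t_h}-\theta_t)^2 \le 2(\theta_{t_h-1}-\theta_{t_h})^2 + 2\sum_{t=t_h+1}^{t_l}(\bar{\theta}_{t_h:t-1}-\theta_t)^2 + 2Z_{t_h-1}^2 + 2\sum_{t=t_h+1}^{t_l}\bar{Z}_{t_h:t-1}^2 .
\]
The first two terms are exactly the deterministic terms in the statement, so it remains to bound the noise sum by $4\sigma^2\log(4n^3/\delta)(2+\log(t_l-t_h+1))$.

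For the noise sum I would note that $Z_{t_h-1}$ is $\sigma$-subgaussian and, being an average of $t-t_h$ independent terms, $\bar{Z}_{t_h:t-1}$ is $(\sigma/\sqrt{t-t_h})$-subgaussian. The standard tail bound $\P(|W|\ge s\sqrt{2\log(2/p)})\le p$ for an $s$-subgaussian $W$ then yields, at a single failure level $p$, the estimates $Z_{t_h-1}^2 \le 2\sigma^2\log(2/p)$ and $\bar{Z}_{t_h:t-1}^2 \le \tfrac{2\sigma^2}{t-t_h}\log(2/p)$. Because the endpoints $t_h,t_l$ ultimately returned by the restart rule are \emph{data-dependent}, I need these bounds to hold simultaneously over every admissible choice; I would therefore take a union bound over all $O(n^2)$ intervals and all $O(n)$ summand indices inside each interval, i.e. over $O(n^3)$ events, setting $p = \delta/(2n^3)$ so that $\log(2/p)=\log(4n^3/\delta)$ and the total failure probability is at most $\delta/2$.

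Finally I would sum the per-term variance bounds. Substituting $k=t-t_h$ turns $\sum_{t=t_h+1}^{t_l}\tfrac{1}{t-t_h}$ into the harmonic sum $\sum_{k=1}^{t_l-t_h}\tfrac{1}{k}\le 1+\log(t_l-t_h)$, whence
\[
2Z_{t_h-1}^2 + 2\sum_{t=t_h+1}^{t_l}\bar{Z}_{t_h:t-1}^2 \le 4\sigma^2\log(4n^3/\delta) + 4\sigma^2\log(4n^3/\delta)\bigl(1+\log(t_l-t_h)\bigr),
\]
which is at most $4\sigma^2\log(4n^3/\delta)(2+\log(t_l-t_h+1))$ since $\log(t_l-t_h)\le\log(t_l-t_h+1)$. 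Combining with the two deterministic terms gives the claim on the $(1-\delta/2)$-probability event.

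The main obstacle is getting the union bound right: the intervals are selected adaptively by the restart rule, so a per-interval concentration statement does not suffice — the bound must hold uniformly over all $O(n^3)$ (interval, index) pairs, which is precisely where the $n^3$ inside the logarithm originates. Everything else (the $2a^2+2b^2$ split, the subgaussianity of the running average with parameter $\sigma/\sqrt{t-t_h}$, and the harmonic-sum estimate) is routine, modulo the minor edge case $t_h=1$ handled by the conventions $y_0=0,\ \theta_0=0$.
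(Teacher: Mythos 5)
Your proposal is correct and follows essentially the same route as the paper's proof: split the prediction error into the deterministic bias term plus the (averaged) noise term, apply $(a+b)^2 \le 2a^2+2b^2$, note that the running noise average is $(\sigma/\sqrt{t-t_h})$-subgaussian, take a union bound over all $O(n^3)$ (interval, index) pairs at level $\delta/(2n^3)$ to get the $\log(4n^3/\delta)$ factor, and finish with the harmonic-sum estimate. The only cosmetic difference is that the paper normalizes the noise into a unit-parameter subgaussian variable $Z_t$ and performs the union bound in two stages (over times within a bin, then over bins), which is equivalent to your single-stage count.
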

\begin{proof}
Throughout this lemma we assume the notation $\theta_0 = 0$. For proving this, first we bound the squared error for online sample averages within a bin, $b[\ubar{l},\bar{l}]$, that starts and ends at fixed times $\ubar{l}$ and $\bar{l}$ respectively. Then a uniform bound argument will be used for bounding the squared error within any arbitrarily chosen bin. Note that $b[\ubar{l},\bar{l}]$ represents any fixed time interval and may not be even chosen by the policy. For $t \in [\ubar{l},\bar{l}]$, consider the prediction $x_t^{\ubar{l}}$, with same notation as used in the policy. Define a random variable $Z_t$ as
\begin{equation}
    Z_t =  \frac{(x_t^{\ubar{l}} - \theta_t)-(\lambda_t - \theta_t)}{\sigma\sqrt{1/[t-\ubar{l}]_{1+}}},
\end{equation}
where $[x]_{1+} = max\{1,x\}$, $\lambda_{l} = \theta_{\ubar{l}-1}$ and $\lambda_t = \bar{\theta}_{\ubar{l}:t-1}, \forall t > \ubar{l}$. $Z_t$ is subgaussian with variance parameter 1 and mean 0. Hence by sub-gaussian tail inequality, we have $P(|Z_t| \ge \sqrt{2\log(4/\delta)}) \le \delta/2$. By noting that length of a bin is $O(n)$ and applying uniform bound across all time points within the current bin we have
\begin{equation}
    P\left(\sup_{\ubar{l} \le t \le \bar{l}}|Z_t| \ge \sqrt{2\log(4n^3/\delta)}\right) \le \delta/2n^2.
\end{equation}
Hence with probability at-least $1-\delta/2n^2$, 
\begin{equation}
\label{timesup}
|x_t^{\ubar{l}} - \theta_t| \le |\lambda_t - \theta_t| + \sigma \sqrt{\frac{2\log(4n^3/\delta)}{[t-\ubar{l}]_{1+}}},\forall t \in [\ubar{l},\bar{l}].
\end{equation}
So the squared error within a bin can be bounded in probability as
\begin{equation}
    \sum_{t=\ubar{l}}^{\bar{l}} (x_t^{\ubar{l}} - \theta_t)^2 \le  2(\theta_{\ubar{l}-1} - \theta_{\ubar{l}})^2 + 2\sum_{t=\ubar{l}+1}^{\bar{l}}(\bar{\theta}_{\ubar{l}:t-1} - \theta_t)^2 + 2\sum_{t=\ubar{l}}^{\bar{l}}\sigma^2 \frac{2\log(4n^3/\delta)}{[t-\ubar{l}]_{1+}}.
\end{equation}
Here we applied the inequality $(a+b)^2 \le 2a^2+2b^2$ on \eqref{timesup}. Ultimately we are interested in analyzing the MSE within a bin detected by the policy. However the observations within a bin satisfies the restarting criterion of the policy and cannot be regarded independent. To break free of this constraint, we uniformly bound the quantity of interest --- MSE here --- across all possible bins. Noting that number of bins is $O(n^2)$ and applying uniform bound across all bins gives the following single sided tail bound.

Let E denote the event:\\
$\sup_{b[\ubar{l}:\bar{l}]}(x_t^{\ubar{l}} - \theta_t)^2 - 2(\theta_{\ubar{l}-1} - \theta_{\ubar{l}})^2 - 2\sum_{t=\ubar{l}+1}^{\bar{l}}(\bar{\theta}_{\ubar{l}:t-1} - \theta_t)^2 - 2 \sum_{t=\ubar{l}}^{\bar{l}}\sigma^2 \frac{2\log(4n^3/\delta)}{[t-\ubar{l}]_{1+}} \ge 0$.

Then,
\begin{align}
    P(E)
    &\le \delta/2.
\end{align}
Hence with probability at-least $1-\delta/2$, any bin $b[t_h:t_l]$ satisfies \eqref{eqn:binbound}.
\end{proof}

Since \eqref{eqn:binbound} holds for any arbitrary interval of the time  axis, it is particularly true for the bins discovered by the policy. Therefore the total squared error $T$ of the policy is upper bounded in probability by the sum of bin bounds of the form,
{\small
\begin{equation}
    \label{eqn:tse}
    T \le \sum_{m=1}^{M}4\sigma^2\log(4n^3/\delta)(2+\log(t_{l}^{(m)}-t_{h}^{(m)}+1))+ 2(\theta_{t_{h}^{(m)} - 1} - \theta_{t_{h}^{(m)}})^2 +  2\sum_{t=t_{h}^{(m)}+1}^{t_{l}^{(m)}}(\bar{\theta}_{t_{h}^{(m)}:t-1} - \theta_t)^2,
\end{equation}
}
where the outer sum iterates over the bins and $M$ is the total number of bins. The first term inside the outer summation can be controlled if we can upper bound $M$. Now we set out to prove our main theorem. 

\section{Proof of Theorem 1}
\label{proof_main_thm}

From the discussion in section \ref{sec:problem_setup}, the goal of bounding dynamic regret of the policy can be achieved by bounding the total squared error of its predictions. Our solution proceeds in two steps. First we upper bound the total squared error within a bin. Then we construct an upper bound for the number of bins spawned by the policy. With these two bounds in place, we bound the total squared error of the policy \eqref{eqn:tse}. 

Let's first proceed to get a bound on the last summation term in \eqref{eqn:tse}. We use a reduction towards Follow The Leader (FTL) strategy. The term is basically the regret incurred by an FTL game with quadratic loss function for the duration $[t_h,t_l]$.

Let $\Theta(t_h:t_l-1) = pad_0(\theta_{t_h},...,\theta_{t_l-1}) = [\Theta_{t_h},...,\Theta_{t_h+k-1}]^T$ denotes mean subtracted the zero padded true sequence in the interval $[t_h,t_l-1]$. Then,
\begin{align}
    \sum_{t=t_h}^{t_l}(\bar{\theta}_{t_h:t-1} - \theta_t)^2
    &= (\bar{\theta}_{t_h:t_l-1} - \theta_{t_l})^2 + \sum_{t=t_h}^{t_l-1}(\bar{\theta}_{t_h:t-1} - \theta_t)^2, \label{eqn:a1}\\
    &\le (\bar{\theta}_{t_h:t_l-1} - \theta_{t_l})^2 + \sum_{t=t_h}^{t_l-1}\frac{(\bar{\theta}_{t_h:t-1} - \theta_t)^2}{(t-t_h+1)} + \sum_{t=t_h}^{t_l-1}(\bar{\theta}_{t_h:t_l-1} - \theta_t)^2, \\
    &= (\bar{\theta}_{t_h:t_l-1} - \theta_{t_l})^2 + \sum_{t=t_h}^{t_l-1}\frac{(\bar{\theta}_{t_h:t-1} - \theta_t)^2}{(t-t_h+1)} + \|\Theta(t_h:t_l-1)\|^2\label{eqn:a2}.
\end{align}
We have applied FTL reduction for online game of predicting the true sequence $\theta_{t_h},...,\theta_{t_l-1}$ to get \eqref{eqn:a2}.

In the discussion below we assume that $\|D\theta_{1:n}\|_1 \le C_n$ and $|\theta_1| \le U$.

Now let's try to bound the term $\|\Theta(t_h:t_l-1)\|_2^2$. This is basically the regret of the best expert. By triangle inequality, 
\begin{align}
\|\Theta(t_h:t_l-1)\|^2
&\le \|\hat{\alpha}(t_h:t_l-1)\|_1^2 + \|\hat{\alpha}(t_h:t_l-1) - \alpha(t_h:t_l-1)\|_2^2,\\
&\le \left(\sum_{l=0}^{\log_2 (p) - 1} 2^{l/2} \|\hat{\alpha}(t_h:t_l-1)[l]\|_1   \right)^2 \notag \\
&\qquad + \|\hat{\alpha}(t_h:t_l-1) - \alpha(t_h:t_l-1)\|_2^2,\label{eqn:a6}
\end{align}

where $p$ is the padded length.

We can base our online averaging restart rule on the output of wavelet smoother. Suppose we decide to restart when $\|\hat{\alpha}(t_h:t_l)\|_1 \ge K n^{-1/3}$ for a constant $K$. Then the first term of \eqref{eqn:a6} gives the optimal rate of $O(n^{1/3})$ when summed across all bins. But the estimation error term $\|\hat{\alpha}(t_h:t_l-1) - \Theta(t_h:t_l-1)\|^2$ should also be controlled. If the smoother is minimax over any bin $[t_h,t_l]$, then we can hope to get minimaxity over the entire horizon. However, the total variation inside the bin is not known to the smoother. This is where the adaptive minimaxity of wavelet smoother comes to rescue.

Suppose $\cF$ denotes the class of functions $f$ with total variation $TV(f)\le C_n$. Let $\cA$ denote the set of all coefficients of the continuous wavelet transform of functions $f \in \cF$. Then $\cA \subset \Theta_{1,\infty}^{1/2}(C_n)$, where $\Theta_{1,\infty}^{1/2}(C_n)$ is a Besov body as defined in \citet{donoho1998minimax}. The minimax rate of estimation in this Besov body is $O(n^{-2/3}C_n^{2/3}\sigma^{4/3})$ where n is the number of observations. However, this is the rate of convergence of the $L_2$ function norm instead of the discrete (input-averaged) norm that we consider here. Over the Besov spaces, these two norms are close enough that the rates do not change (see section 15.5 of \citet{DJBook}). Hence Corollary \ref{cor:minimax} can be used to control the bias. 

Let $\hat{y}(t_h:t)$ denotes the soft-thresholding estimates of the vector $pad_0(y_{t_h:t})$.\\
i.e $\hat{y}(t_h:t) = H^T T(H \: pad_0(y(t_h:t)))$.

\begin{align}
    (\bar{\theta}_{t_h:t_l-1} - \theta_{t_l})^2
    &\le 2(\theta_{t_l-1} - \theta_{t_l})^2 + 2(\bar{\theta}_{t_h:t_l-1} - \theta_{t_l-1})^2,\\
    &\le 2(\theta_{t_l-1} - \theta_{t_l})^2 + 4(\hat{y}(t_h:t_l-1)[t_l-1] - (\bar{\theta}_{t_h:t_l-1} - \theta_{t_l-1}))^2 \notag \\
    &\quad + 4(\hat{y}(t_h:t_l-1)[t_l-1])^2.
    \label{eqn:outlier}
\end{align}

Since L1 norm is greater than L2 norm, the policy's restart rule implies that

\begin{align}
    (\hat{y}(t_h:t_l-1)[t_l-1])^2
    &\le \sigma^2\label{eqn:lastTerm}
\end{align}

Combining \eqref{eqn:outlier} and \eqref{eqn:lastTerm}, we get

\begin{align}
    (\bar{\theta}_{t_h:t_l-1} - \theta_{t_l})^2
    &\le 2(\theta_{t_l} - \theta_{t_l-1})^2 + \gamma_1 (t_l-t_h)^{1/3}\:TV^{2/3}[t_h:t_l]\:\sigma^{4/3} +\sigma^2, \label{eqn:firstTerm}
\end{align}

where last line holds with probablity atleast $1- 2n^{3-\beta/8}$ due to Corollary \ref{cor:minimax}. Here $\gamma_1$ is a constant which can depend logarithmically on the width $t_l - t_h$.

Now let's bound the second term in \eqref{eqn:a2}. For any $t \in [t_h,t_l-1]$ we have,

\begin{align}
    \sum_{t=t_h}^{t_l-1}\frac{(\bar{\theta}_{t_h:t-1} - \theta_t)^2}{(t-t_h+1)}
    &\le \sum_{t=t_h}^{t_l-1} \frac{2(\theta_t - \theta_{t-1})^2 + 2(\bar{\theta}_{t_h:t-1} - \theta_{t-1})^2}{t-t_h+1},\\
    &\le \sum_{t=t_h}^{t_l-1} 2(\theta_t - \theta_{t-1})^2 \notag \\
    &\quad + \sum_{t=t_h}^{t_l-1} \frac{4(\hat{y}(t_h:t-1)[t-1] - (\bar{\theta}_{t_h:t-1} - \theta_{t-1}))^2 + 4(\hat{y}(t_h:t-1)[t-1])^2}{t-t_h+1},\\
    &\le \sum_{t=t_h}^{t_l-1} 2(\theta_t - \theta_{t-1})^2 + (\gamma_2 (t_l-t_h)^{1/3}\:TV^{2/3}[t_h:t_l]\:\sigma^{4/3} + 4 \sigma^2) (1 + \log n),\label{eqn:logterm}
\end{align}

where the last line holds with probability at-least $1- 2n^{3-\beta/8}$. 

\begin{align}
\|\Theta(t_h:t_l-1)\|_2^2
&\le\left(\sum_{l=0}^{\log_2 (p) - 1} 2^{l/2} \|\hat{\alpha}(t_h:t_l-1)[l]\|_1   \right)^2, \notag \\
&\qquad +  \gamma_3 (t_l-t_h)^{1/3}\:TV^{2/3}[t_h:t_l]\:\sigma^{4/3},\\
&\le \sigma^2 + \gamma_3 (t_l-t_h)^{1/3}\:TV^{2/3}[t_h:t_l]\:\sigma^{4/3}
\label{eqn:a7},
\end{align}
 with probability at-least $1- 2n^{3-\beta/8}$ for some constant $\gamma_3$ which can depend logarithmically on the width $t_l - t_h$.
 
 Due to Corollary \ref{cor:minimax} the bounds \eqref{eqn:firstTerm}, \eqref{eqn:logterm}, \eqref{eqn:a7} all simultaneously holds with probability at-least $1- 2n^{3-\beta/8}$. Combining these bounds, we get
 
 \begin{align}
     \sum_{t=t_h}^{t_l}(\bar{\theta}_{t_h:t-1} - \theta_t)^2
     &\le 2\|D\theta_{t_h:t_l}\|_2^2+ \gamma (t_l-t_h)^{1/3}\:TV^{2/3}[t_h:t_l]\:\sigma^{4/3} + 6\sigma^2 (1+\log(n)),
 \end{align}
 
 with probability at-least $1- 2n^{3-\beta/8}$ and $\gamma = \gamma_1 + \gamma_2 (1+\log(n)) + \gamma_3$.

 When summed across all bins as in \eqref{eqn:tse}, with probability at-least  $1-2n^{3-\beta/8}$ we have,

\begin{align}
    \sum_{m=1}^{M}\sum_{t=t_{h}^{(m)}}^{t_{l}^{(m)}}(\bar{\theta}_{t_{h}^{(m)}:t-1} - \theta_t)^2 &\le  U^2 + 2\|D\theta_{1:n}\|_2^2 + 6M \sigma^2 (1+\log n) \notag \\[-15pt]
    &\quad +\sum_{m=1}^{M} \gamma \: (k^{(m)})^{1/3}\:TV^{2/3}[t_h^{(m)}:t_l^{(m)}]\:\sigma^{4/3}\label{eqn:10}, \\[10pt]
    &\le U^2 + 2\|D\theta_{1:n}\|_2^2 + 6M \sigma^2 (1+\log n) \notag \\
    &\qquad + \gamma\sigma^{4/3}n^{1/3}\left(\sum_{m=1}^{M}\frac{k^{(m)}}{n}\right)^{\frac{1}{3}}\left(\sum_{m=1}^{M}TV[t_h^{(m)}:t_l^{(m)}]\right)^{\frac{2}{3}},\label{eqn:11}\\[10pt]
    &\le U^2 +2\|D\theta_{1:n}\|_2^2 + 6M \sigma^2 (1+\log n) \notag \\
    &\qquad + 2\gamma\sigma^{4/3}n^{1/3}C_n^{2/3}\label{eqn:12}.
\end{align}

Here $k^{(m)}$ is the length of $\Theta(t_h^{(m)}:t_l^{(m)}-1)$. The term $(\theta_{t_h^{(m)}-1} - \theta_{t_h^{(m)}})^2$ is at-most $U^2$ for the first bin. We arrive at \eqref{eqn:11} by applying Holder's inequality $x^Ty \le \|x\|_p\|y\|_q$ with $p=3$ and $q=3/2$. For both  \eqref{eqn:11} and \eqref{eqn:12} we use the fact that $\sum_{m=1}^{M}k^{(m)} \le 2n$ where the factor 2 is an artifact of zero-padding.

By appealing to lemma \ref{lemma:binbound}, we have with probability at-least $1-4n^{3-\beta/8}$,

\begin{align}
    \sum_{m=1}^{M}\sum_{t=t_{h}^{(m)}}^{t_{l}^{(m)}}(\bar{\theta}_{t_{h}^{(m)}:t-1} - \theta_t)^2
    &\le U^2 +2\|D\theta_{1:n}\|_2^2 + 12\sigma^2 \log n \notag \\
    &\qquad + 24 (\log(n))^2 n^{1/3} C_n^{2/3}\sigma^{4/3} + \gamma\sigma^{4/3}n^{1/3}C_n^{2/3}.\label{eqn:13}
\end{align}

Next, we proceed to bound the first summation terms in \eqref{eqn:tse}. For this, we upperbound the number of bins to control the concentration terms in \eqref{eqn:tse} when summed across all bins. Essentially our decision rule should not lead to over binning. Observe that the sum of total variations across all bins is $C_n$. If the decision rule guarantees (at-least in probability) that total variation inside any detected bin is $\tilde{\Omega}(n^{-1/3})$, then the number of bins is optimally $O(n^{1/3})$. Such a TV lower bounding property is satisfied by wavelet soft-thresholding as described in lemma \ref{lemma:binbound}. This is facilitated by the uniform shrinkage property of soft-thresholding estimator. More precisely,

Let's denote
\begin{align}
    V_m &= 4\sigma^2\log(2n^3/\delta)(2+\log(t_{l}^{(m)}-t_{h}^{(m)}+1)).
\end{align}
Then,
\begin{align}
     \sum_{m=1}^{M} V_m &\le 4\sigma^2\log(4n^3/\delta)(2+\log(n)) \max\{1,2n^{1/3}C_n^{2/3}\sigma^{-2/3}\log(n)\}, \\
    &\le 4\sigma^2\log(4n^3/\delta)(2+\log(n)) \notag \\
    &\qquad + 8n^{1/3}C_n^{2/3}\sigma^{4/3}\log(n) \log(4n^3/\delta)(2+\log(n))\label{eqn:firstbound},
\end{align}

with probability at-least $1-2n^{3-\beta/8}$. Here $[t^{m}_h, t^{m}_l]$ corresponds to the $m^{th}$ bin discovered by the policy. This relation follows due to Lemma \ref{lemma:binbound}.

Combining \eqref{eqn:firstbound} and \eqref{eqn:13} we have with probability at-least $1-4n^{3-\beta/8}-\delta/2$
\begin{align}
    \begin{split}
    \label{eqn:fin_inter}
    T \le{}&
    8n^{1/3}C_n^{2/3}\sigma^{4/3}(2+\log(n))\log(n) \\
    &+ 4\sigma^2\log(4n^3/\delta)(2+\log(n))\\
    &+ U^2 +2\|D\theta_{1:n}\|_2^2 + 12\sigma^2 \log n \\
    &\qquad + 24 (\log(n))^2 n^{1/3} C_n^{2/3}\sigma^{4/3} + 2\gamma\sigma^{4/3}n^{1/3}C_n^{2/3}.
    \end{split}
\end{align}

By observing that $\|D\theta_{1:n}\|_2 \le \|D\theta_{1:n}\|_1 = C_n$ we get the bound,
\begin{align}
    \label{finalbound}
    \begin{split}
    T \le{}&
    8n^{1/3}C_n^{2/3}\sigma^{4/3}(2+\log(n))\log(n) \\
    &+ 4\sigma^2\log(4n^3/\delta)(2+\log(n))\\
    &+ U^2 +2C_n^2 + 12\sigma^2 \log n \\
    &\qquad + 24 (\log(n))^2 n^{1/3} C_n^{2/3}\sigma^{4/3} + 2\gamma\sigma^{4/3}n^{1/3}C_n^{2/3}.
    \end{split}
\end{align}
The above bounds holds with probability at-least $1-\delta$, if we set $\beta = 24+\frac{8\log(8/\delta)}{\log(n)}$.

We conclude our proof by observing that the above arguments can be readily extended to any batch smoother that satisfy the following criteria.
\begin{description}[font=$\bullet$\scshape\bfseries]
\item Adaptive minimaxity over any interval within the time horizon.
\item The restart decision rule optimally lowerbounds the total variation of any spawned bin.
\end{description}
Thus our policy can be viewed as a meta-algorithm that lifts a ``well behaved smoother'' to an optimal forecaster in the online setting.

Next we remark how the proof can be adapted to the setting where an extra boundedness constraint is put on the ground truth. i.e, $\theta_{1:n} \in TV(C_n)$ and $|\theta_i| \le B, i=1,\ldots,n$. Then the $U^2$ term in \eqref{eqn:fin_inter} becomes $B^2$. The additive $\|D\theta_{1:n}\|_2^2$ term can be bounded as,

\begin{align}
    \|D\theta_{1:n}\|_2^2
    &= \sum_{i=2}^{n} (\theta_i - \theta_{i-1})^2,\\
    &\le \sum_{i=2}^{n} (|\theta_i| + |\theta_{i-1}|)(|\theta_i - \theta_{i-1}|),\\
    &\le 2BC_n.
\end{align}

Thus when $\|\theta_{1:n}\|_\infty \le B$ and if we set $\beta=24+\frac{8\log(8/\delta)}{\log(n)}$ then with probability at-least $1-\delta$,

\begin{align}
    \label{finalbound_constrainted}
    \begin{split}
    T \le{}&
    8n^{1/3}C_n^{2/3}\sigma^{4/3}(2+\log(n))\log(n) \\
    &+ 4\sigma^2\log(4n^3/\delta)(2+\log(n))\\
    &+ B^2 +4BC_n + 12\sigma^2 \log n \\
    &\qquad + 24 (\log(n))^2 n^{1/3} C_n^{2/3}\sigma^{4/3} + 2\gamma\sigma^{4/3}n^{1/3}C_n^{2/3}.
    \end{split}
\end{align}
\section{Adaptive Optimality in Discrete Sobolev class} \label{app:sobolev}

In this section, we establish that despite the fact that \ARROWS{} is designed for the total variation class, it adapts to the optimal rates forecasting sequences that are more regular.

The discrete Sobelov class is defined as
\begin{align}
    \cS(C_n') = \{\theta_{1:n} : \|D \theta_{1:n}\|_2 \le C_n' \}.
\end{align}
The minimax cumulative error of nonparametric estimation in the discrete Sobolev class is $\theta_{1:n}(n^{2/3}[C'_n]^{2/3}\sigma^{4/3})$ \citep[see e.g.,][Theorem 5 and 6]{sadhanala2016total}. 

Recall that the discrete Total Variation class that we considered in this paper is defined as 
\begin{align}
    \cT(C_n) = \{\theta_{1:n} : \|D \theta_{1:n}\|_1 \le C_n \}.
\end{align}

By the norm inequalities, we know that
$$
\cT(C_n') \subset  \cS(C_n')  \subset \cT(C_n'\sqrt{n}).
$$

The following refinement of our main theorem establishes that \ARROWS{} also achieves the minimax rate in discrete Sobolev classes.

\begin{theorem}
    \label{thm:main_sobelov}
    Let the feedback be $y_t = \theta_t + Z_t$ where $Z_t$ is an independent, $\sigma$-subgaussian random variable. 
    Let $\theta_{1:n} \in \cS(C_n')$. If $\beta = 24+\frac{8\log(8/\delta)}{\log(n)}$, then with probability at least $1-\delta$, \ARROWS{} achieves 
    a dynamic regret of $\tilde{O}(n^{2/3}[C_n']^{2/3}\sigma^{4/3} +U^2 +  [C_n^']^2 + \sigma^2 )$ where $\tilde{O}$ hides a logarithmic factor in $n$ and $1/\delta$.
\end{theorem}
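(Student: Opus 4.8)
The plan is to reduce the claim to Theorem~\ref{thm:main} without re-running any part of its analysis, exploiting two facts: \ARROWS{} is oblivious to the radius of the underlying function class (it never uses $C_n$ or $C_n'$), and the guarantee of Theorem~\ref{thm:main} is already stated in terms of the \emph{realized} quantities $\|D\theta_{1:n}\|_1$ and $\|D\theta_{1:n}\|_2$ rather than any assumed budget. Thus the same run of the algorithm under the same $\beta$ and the same high-probability event yields, for \emph{every} ground truth $\theta_{1:n}$,
\[
T \;\le\; \tilde{O}\!\left(n^{1/3}\|D\theta_{1:n}\|_1^{2/3}\sigma^{4/3} + |\theta_1|^2 + \|D\theta_{1:n}\|_2^2 + \sigma^2\right)
\]
with probability at least $1-\delta$, and it only remains to specialize the two norm-dependent terms to a Sobolev sequence.

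First I would bound the $\ell_1$ norm by the $\ell_2$ norm via Cauchy--Schwarz on the difference vector $D\theta_{1:n}\in\R^{n-1}$, giving $\|D\theta_{1:n}\|_1 \le \sqrt{n-1}\,\|D\theta_{1:n}\|_2 \le \sqrt{n}\,C_n'$ under the Sobolev constraint $\|D\theta_{1:n}\|_2 \le C_n'$. Substituting into the leading term,
\[
n^{1/3}\|D\theta_{1:n}\|_1^{2/3}\sigma^{4/3} \;\le\; n^{1/3}\big(\sqrt{n}\,C_n'\big)^{2/3}\sigma^{4/3} \;=\; n^{2/3}[C_n']^{2/3}\sigma^{4/3},
\]
which is precisely the target rate. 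The remaining terms are immediate: $\|D\theta_{1:n}\|_2^2 \le [C_n']^2$ by hypothesis and $|\theta_1|^2 \le U^2$ by (A4). Collecting these with the residual $\sigma^2$ gives the claimed $\tilde{O}\!\left(n^{2/3}[C_n']^{2/3}\sigma^{4/3} + U^2 + [C_n']^2 + \sigma^2\right)$, which is also the minimax rate for the Sobolev class.

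The only genuine subtlety is justifying that the displayed bound from Theorem~\ref{thm:main} is truly ``radius-free,'' and this is where I would spend the care. It amounts to checking that each high-probability ingredient in the proof of Theorem~\ref{thm:main} is phrased through the realized total variation: the per-interval concentration of online averages (Lemma~\ref{unifbound}), the adaptive minimaxity of the soft-threshold oracle over each detected bin (Corollary~\ref{cor:minimax}), and, most importantly, the bin-count bound (Lemma~\ref{lemma:binbound}), whose controlling quantity is $\|D\theta_{1:n}\|_1$ itself. The reason this substitution yields the sharp Sobolev rate \emph{without} an extraneous factor of $n$ in the additive term is structural: the number of bins obeys $M = \tilde{O}\!\left(n^{1/3}\|D\theta_{1:n}\|_1^{2/3}\sigma^{-2/3}\right)$, so the dominant variance contribution $M\sigma^2\log n$ is exactly the leading $n$-dependent term, and it is only here that replacing $\|D\theta_{1:n}\|_1$ by $\sqrt{n}\,C_n'$ promotes $n^{1/3}$ to $n^{2/3}$, while the genuinely additive $\|D\theta_{1:n}\|_2^2 \le [C_n']^2$ term is left untouched. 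No new algorithmic idea is required; the theorem is a corollary of the adaptivity of \ARROWS{} together with the embedding $\cS(C_n') \subset \cT(\sqrt{n}\,C_n')$.
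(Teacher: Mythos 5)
Your proposal is correct and follows essentially the same route as the paper's own proof: the paper likewise rewrites the bound from Theorem~\ref{thm:main} in terms of the realized norms $\|D\theta_{1:n}\|_1$ and $\|D\theta_{1:n}\|_2$, applies $\|x\|_1 \le \sqrt{n}\,\|x\|_2$ to embed $\cS(C_n')$ into $TV(\sqrt{n}\,C_n')$, and substitutes to obtain the $n^{2/3}[C_n']^{2/3}\sigma^{4/3}$ leading term while the additive $[C_n']^2$ term is untouched. Your extra care in verifying that every high-probability ingredient is phrased through the realized total variation is a sound (and slightly more explicit) justification of the same radius-free reduction.
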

\begin{proof}
Let's minimally expand the Sobolev ball to a TV ball of radius $C_n = \sqrt{n}C_n'$. This chosen radius of the TV ball is in accordance with the canonical scaling introduced in \citep{sadhanala2016total}. This activates the following embedding:
\begin{align}
    \cS_1(C_n') \subseteq TV(C_n).
\end{align}

We can rewrite \eqref{eqn:fin_inter} as
\begin{align}
    \label{finalbound_sob}
    \begin{split}
    T \le{}&
   8n^{1/3}\|D\theta_{1:n}\|_1^{2/3}\sigma^{4/3}(2+\log(n))\log(n) \\
    &+ 4\sigma^2\log(4n^3/\delta)(2+\log(n))\\
    &+ U^2 +2\|D\theta_{1:n}\|_2^2 + 12\sigma^2 \log n \\
    &\qquad + 24 (\log(n))^2 n^{1/3} \|D\theta_{1:n}\|_1^{2/3}\sigma^{4/3} + 2\gamma\sigma^{4/3}n^{1/3}\|D\theta_{1:n}\|_1^{2/3}.
    \end{split}
\end{align}

The above representation reveals the optimality of our policy over Sobolev class $S_1(C_n')$. Enlarging the Sobolev class to the TV class that contains it does not change the minimax rate in the smoothing setting.  See, e.g., Theorem 5 and 6 of \citep{sadhanala2016total} and take $d = 1$, and $C'_n = n^{-1/2} C_n$. By using $\|x\|_1 \le n^{1/2}\|x\|_2$ for $x \in \mathbb{R}^n$,
\begin{align}
    \frac{\|D\theta_{1:n}\|_1}{n^{1/2}} \le \|D\theta_{1:n}\|_2 \le C'_n = \frac{C_n}{n^{1/2}}.
\end{align}
Plugging this bound on $\|D\theta_{1:n}\|_1$ in \eqref{finalbound_sob} recovers the minimax regret for the Sobolev class of radius $C'_n$. The additional term of $\|D\theta_{1:n}\|_2^2$ --- similar to as shown in in appendix \ref{add_proofs} --- is unavoidable in the online setting for predicting discrete Sobolev sequences.

\end{proof}

\begin{remark} \label{rem:sob}
    Note that $\cT(C_n')\subset \cS(C_n')$, therefore our lower bound from Proposition~\ref{prop:lowerbound} still applies, which suggests that the additional $[C_n^']^2+\sigma^2$ is required and that \ARROWS{} is an optimal forecaster for sequences in Sobolev classes as well.
\end{remark}

\section{Fast Computation}\label{sec:runtime}
We describe the proof of $O(n\log n)$ runtime guarantee below.

	We use an inductive argument. Without loss of generality let the start of current bin be at time 1. Suppose we know the wavelet transform of points upto time $t$. Let the next highest power of 2 for both $t$ and $t+1$ be $p$. We identify this value as a pivot for time $t$ and $t+1$. Zero padding is done to hit this pivot. We can view the $pad_0$ operation at time $t+1$ as the difference between the padded original data and and a step signal. This step signal assume the value $\bar{y}_{1:t+1}$ in time $[1,t+1]$ and 0 in $[t+2,p]$. For computing wavelet transform of the step, we need to update only $O(\log(p))$ coefficients. Inputs to the Haar transform of the padded data at times $t$ and $t+1$ differs by just one co-ordinate. Hence coefficients of only  $\log(p)$ wavelets need to be changed. Each such change can be performed in $O(1)$ time in an incremental fashion. 
	
	Now let's consider the case when the pivot for time $t+1$ is $2t$. Suppose we know the Haar wavelet coefficients upto time $t$. In this case, we need to compute the coefficients of $\log(t)$ newly introduced wavelets that span the interval $[t,2t]$ since the zero padding will force most of the new wavelet coefficients to be zero. The computation of each of those new coefficients can be done in $O(1)$ due to sparsity of signal in interval $[t,2t]$. We also need to change the first two wavelet coefficients which can be done again in in $O(1)$ time. In all these cases, we only need to do soft-thresholding to the newly updated coefficients. At the base case, when the pivot is just 2, then the computation can be in $O(1)$ time. Thus within a pivot $p$, the number of computations required is $O(p\log(p))$ which translates to $O(k^{(m)}\log(k^{(m)}))$ computations within the  $m^{th}$ bin. Summing across all the bins yields a runtime complexity of $O(n\log(n))$. 

\section{Regret of AOMD}\label{sec:regret_aomd}
In this section we prove that for any predictable sequence $\{M_t\}_{t=1}^{n}$, the AOMD algorithm has a dynamic regret of $\tilde{O}(\sqrt{n})$ when applied to our problem. As discussed in Section~\ref{sec:related_work}, consider loss functions $f_t(x) = (x - y_t)^2$ and comparator sequence $\{u_t\}_{t=1}^{n}$. First let's consider a deterministic noise setting \citep{softThreshold95}:
\begin{equation}
    y_t = \theta_t + \delta\:\sigma\sqrt{20\log(n)},
\end{equation}
where $|\delta| \le 1$ is chosen by a clever adversary. Let's proceed to get a bound on the quantity $D_n$. The gradient of our loss function is $2(x-y_t)$. So after observing the values of $x_t$ and $M_t$, an adversary can pick a suitable $\delta$ such that each term of $D_n$ 
\begin{equation}
\label{eqn:Dn}
D_n = \sum_{t=1}^{n}\|\nabla f_t(x_t) - M_t \|_{*}^2.
\end{equation}
can be made $O(1)$. This gives an $O(n)$ bound for $D_n$. 

We can show that $V_n$ is $O(n)$ if we assume that $\cX$ is compact and all of the $y_t$ is bounded. Boundedness of $y_t$ follows from the assumptions (A3) and (A4). By appealing to assumption (A3) we see that
\begin{equation}
C_n(u_1,u_2,...,u_n) = \sum_{t=1}^{n}\|u_t - u_{t-1}\|.
\end{equation}
$C_n(\theta_1,...,\theta_n)$ is $O(1)$. Plugging this into the regret bound specified in \citet{jadbabaie2015online} bounds the dynamic regret in our setting as $\tilde{O}(\sqrt{n})$.

We now relate this deterministic noise setting to the guassian setting where the observations are produced according to $y_t = \theta_t + Z_t$, where $Z_t$ is a zero mean sub gaussian with parameter $\sigma^2$. As described in proof of theorem \ref{thm:dj}, $P(\sup_{i}|Z_i| \ge \sigma\sqrt{20\log(n)}) \le 2n^{-9}$. Hence by conditioning on the event that $\sup_{i}|Z_i| \le \sigma\sqrt{20\log(n)}$, the regret bound of the deterministic noise setting applies to gaussian setting with high probability.

\section{Lower bound proof}\label{app:lower}
\label{add_proofs}

\begin{proof}[Proof of Proposition~\ref{prop:lowerbound}]
	First, a lower bound of $\Omega(n^{1/3} C_n^{2/3}\sigma^{4/3})$ is given by \citep{donoho1998minimax} for the smoothing estimator $x_{1:n}$ that has more information than we do. 
The argument uses the fact that the TV-ball is sandwiched between two Besov-bodies with identical minimax rate. To the best of our knowledge, the dependence on $C_n$ and $\sigma$ is first made explicit in, e.g., \citep{birge2001gaussian}.
		
By the fact that  ``the max is larger than the mean'', we have that for any prior distribution $\cP$,
	\begin{align*}
	\sup_{\theta_{1:n}\in \TV(C_n)} \E\left[\sum_{t=1}^n (x_t - \theta_t)^2 \right]  \geq \E_{\theta_{1:n}\sim \cP}\left[\E[\sum_{t=1}^n (x_t - \theta_t)^2 | \theta_{1:n} ]\right].
	\end{align*}

Take $\cP$ such that 
\begin{enumerate}
	\item $\theta_1= U$ with probability $0.5$ and $-U$ otherwise.
	\item $\theta_2=\theta_1 + C_n$ with probability $0.5$ and $\theta_1-C_n$ otherwise.
	\item $\theta_t = \theta_2$ for $t = 3,4,...,n$.
\end{enumerate}

Note that $x_1$ does not observe anything yet, therefore $x_1=0$ is the Bayes optimal decision rule. This gives a trivial lower bound of $\E\left[(x_1-\theta_1)^2\right] \geq U^2$. Now, let's reveal $\theta_1$ to $x_2$ an additional information, then by the same argument, we have that $\E\left[ (x_2-\theta_2)^2 \right]\geq C_n^2$. 

Consider an alternative $\cP$ when $\theta_1 = ... = \theta_n =\theta$. 
Let the noise be iid Gaussian with variance $\sigma^2$.  In this case the problem reduces to a naive statistical estimation problem with $\theta \in [-U,U]$. For each $t$ which observes $t-1$ iid samples from $\cN(\theta,\sigma^2)$, then by \citet{bickel1981minimax}, the minimax risk for this problem is
$$
\inf_{\hat{\theta}}\sup_{\theta\in [-U,U]} \E(\hat{\theta}-\theta)^2 = \frac{\sigma^2}{t} - \frac{\pi^2\sigma^4}{tU^2}  +  o(\frac{\sigma^4}{tU^2}).
$$
Summing over $t=2,3,...,n$, and apply the upper/lower bounds of the harmonic series, we have a lower bound of 
$$
\E\left[\sum_{t=1}^n (x_t-\theta_t)^2 \right]\geq \max\{0, \sigma^2 \log(n+1)  - \frac{\pi^2\sigma^4}{U^2} (1+\log(n)) (1+ o(1))\}.
$$
Take the condition that $U > 2\pi\sigma$ and $n > 3$, the above expression can be further lower bounded by $0.5 \sigma^2 \log(n)$. Note that this bound applies even if $C_n=0$.

Finally, we can similarly apply the same argument to the case when $\theta_1 = 0$ and $\theta_2 = ... = \theta_n = \theta$ and where the constraint is that $-C_n \leq \theta \leq C_n$.
This gives us a lower bound of
$$
\E\left[\sum_{t=2}^n (x_t-\theta_t)^2 \right]\geq \max\{0, \sigma^2 \log(n)  - \frac{\pi^2\sigma^4}{C_n^2} (1+\log(n-1)) (1+ o(1))\}.
$$
If $C_n > 2\pi\sigma$ and $n > 3$, we can again bound it below by $0.5 \sigma^2 \log(n)$.  In other word, we get the $\sigma^2 \log(n)$ lower bound provided that either $C_n$ or $U$ is greater than $2\pi\sigma$.

The proof is complete by taking the average of lower bounds above. We can take $c=1/6$.
\end{proof}

\subsection{Lower bound with extra boundedness constraint on ground truth}
Suppose we assume $|\theta_i| \le B, i = 1,\ldots,n$. Then we can adapt the proof presented above by considering a prior $\cP$ such that $\theta_i= \epsilon_i B, i=1,\ldots,1+\floor{C_n/2B}$. $\theta_i = \theta_{1+\floor{C_n/2B}}, \forall i > 1+\floor{C_n/2B}$. Here $\epsilon_i$ are independent random variables assuming value $+1$ with probability 0.5 and $-1$ with probability 0.5. Assume that we reveal to learner the probability law of observations $\theta_i$. Under this setting we can see that $\E\left[\sum_{t=1}^n (x_t-\theta_t)^2\right] \geq B^2 + B C_n/2$.

\subsection{Connections to other lower bounds in literature}

\citep{besbes2015non} derived a lower bound of $O(n^{1/2} V_n^{1/2})$ by packing a sequence of quadratic loss functions. Note that this is larger than the upper bound that we attain with quadratic losses. Though this observation seems confusing, a careful study reveals that there is no contradiction. For constructing the lowerbound, \citep{besbes2015non} used a variational budget $V_n$ as , $V_n = \sum_{t=2}^{n} \sup_{x \in conv(\theta_1,...\theta_n)} |f_t(x) - f_{t-1}(x)|=\sum_{t=2}^{n} \sup_{x \in [\theta_{min},\theta_{max}]} |(x-\theta_t)^2 - (x-\theta_{t-1})^2|$, where $conv(.)$ denotes the convex hull of a sequence of points. This is different from the variational budget they use in section 2 of their paper and is also different from $C_n$ that we use for the TV class. When applied to our setting this $V_n$ is no longer proportional to our $C_n$, instead, it is proportional to $(\theta_{max} - \theta_{min})C_n$.

The packing set constructed through the functions defined in equation (A-12) of \citep{besbes2015non}  obeys $(\theta_{max} - \theta_{min}) =  \frac{1}{2} V_n^{1/4} n^{-1/4}$. So we have $C_n =\frac{V_n}{V_n^{1/4}n^{-1/4}} = V_n^{3/4}n^{1/4}$, where we have subsumed proportionality constants. Thus we see that $V_n = \frac{C_n^{4/3}}{n^{1/3}}$. Putting this into their lowerbound recovers exactly our $n^{1/3}C^{2/3}$ bound. 

The additional  $C_n^2$ term that appears in our upper bound is required for any methods that do online forecasting of sequences in the TV class.  The reason why OGD appears to not require $C_n^2$ according to \citep{besbes2015non} is because they require the $\theta_t$ to be bounded for all $t$, while we only require $\theta_1$ to be bounded by $U$ (see Theorem~\ref{thm:ogd}).

The lowerbound discussed in \citep{yang2016tracking} considers a more general setting of smooth non-strongly convex sequence of loss functions. Such a lowerbound will not apply in our more restrictive setting.

\section{Optimality of linear forecasters in Discrete Sobolev class}\label{app:sobexp}

In this section we first establish that just like \ARROWS{}, linear strategies such as OGD and MA are also optimal forecasters for sequences in Discrete Sobolev class. Then we substantiate it using experiments.
\begin{theorem} \label{thm:sob_ogd}
    Let the feedback be $y_t = \theta_t + Z_t$ where $Z_t$ is an independent, $\sigma$-subgaussian random variable. 
    Let $\theta_{1:n} \in \cS(C_n')$. Restarting OGD with batch size of $\frac{\sigma^{2/3}(n \log n)^{1/3}}{[C_n']^{2/3}}$ achieves an expected dynamic regret of $\tilde{O}(U^2 + [C_n']^2 + n^{2/3}[C_n']^{2/3}\sigma^{4/3})$.
\end{theorem}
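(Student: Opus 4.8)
The plan is to mirror the proof of Theorem~\ref{thm:ogd} almost verbatim, changing only the step that controls the within-batch squared bias so that it reads off the Sobolev energy $\|D\theta_{1:n}\|_2$ instead of the TV radius $\|D\theta_{1:n}\|_1$. As in that proof, restarting OGD with squared losses $f_t(x)=(x-\theta_t)^2$ reduces to partitioning $\{1,\dots,n\}$ into consecutive batches of a fixed width $L$ and predicting, within each batch, the running average $\bar\theta_{t_h:t-1}$ of the observations seen so far in the batch. The bias--variance decomposition \eqref{eqn:ogd_decomp} then gives, for the $i$-th batch spanning $[t_h^{(i)},t_l^{(i)}]$,
\[
R_i = (\theta_{t_h^{(i)}-1}-\theta_{t_h^{(i)}})^2 + \sum_{t=t_h^{(i)}+1}^{t_l^{(i)}}(\theta_t-\bar\theta_{t_h^{(i)}:t-1})^2 + \sum_{t}\frac{\sigma^2}{t-t_h^{(i)}},
\]
where, exactly as before, the variance contribution sums to $O\!\big(\tfrac{n}{L}\sigma^2(2+\log L)\big)$ and the boundary terms $\sum_i(\theta_{t_h^{(i)}-1}-\theta_{t_h^{(i)}})^2$ telescope into $U^2 + \|D\theta_{1:n}\|_2^2 \le U^2+[C_n']^2$ (using the convention $\theta_0=0$ for the first bin).

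The one genuinely new step --- and the main obstacle --- is the bias bound. In the TV proof one uses $|\theta_t-\theta_j|\le C_i$, the TV inside the batch, so each bias term is $\le C_i^2$ and the whole batch contributes $\le L\,C_i^2$. This is too crude for the Sobolev class, where the within-batch energy $[C_i']^2 := \sum_{s}(\theta_s-\theta_{s-1})^2$ controls only the \emph{sum of squares} of the increments, not their sum. I would instead apply Cauchy--Schwarz to each gap, $(\theta_t-\theta_j)^2 \le (t-j)\sum_{s=j+1}^{t}(\theta_s-\theta_{s-1})^2 \le L\,[C_i']^2$, followed by Jensen's inequality $(\theta_t-\bar\theta_{t_h:t-1})^2 \le \tfrac{1}{t-t_h}\sum_j(\theta_t-\theta_j)^2 \le L\,[C_i']^2$. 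Summing the at most $L$ bias terms in the batch therefore yields a within-batch bias of $O(L^2[C_i']^2)$. The extra factor of $L$ relative to the TV case is precisely the price of the Cauchy--Schwarz step, and it is essential because a Sobolev sequence may spread its variation across the entire batch rather than concentrating it.

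Summing across the $\lceil n/L\rceil$ disjoint batches and using $\sum_i [C_i']^2 \le \|D\theta_{1:n}\|_2^2 \le [C_n']^2$, the total regret is bounded by
\[
U^2 + [C_n']^2 + L^2[C_n']^2 + \frac{2n\sigma^2(2+\log L)}{L}.
\]
The plan then finishes by optimizing over $L$: balancing the bias term $L^2[C_n']^2$ against the variance term $n\sigma^2/L$ gives $L \asymp \sigma^{2/3}(n\log n)^{1/3}/[C_n']^{2/3}$, the stated batch size, and substituting back produces $L^2[C_n']^2 \asymp (n\log n)^{2/3}[C_n']^{2/3}\sigma^{4/3}$, which matches the variance term up to logarithmic factors and yields the claimed $\tilde O(U^2 + [C_n']^2 + n^{2/3}[C_n']^{2/3}\sigma^{4/3})$. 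Everything except the bias bound is identical to Theorem~\ref{thm:ogd}; the only points to verify carefully are that the Cauchy--Schwarz/Jensen chain loses no more than the single factor of $L$, and that the boundary increments genuinely aggregate into $\|D\theta_{1:n}\|_2^2$ rather than reintroducing an $\ell_1$ (TV) dependence.
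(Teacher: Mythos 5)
Your proposal is correct and follows essentially the same route as the paper's proof: the paper likewise applies Jensen/Cauchy--Schwarz to get a per-term bias bound of $L[C_i']^2$, sums to $L^2[C_i']^2$ per batch and $L^2[C_n']^2$ overall, keeps the $\tfrac{n}{L}\sigma^2\log L$ variance term, and optimizes $L$ to the stated batch size. The only cosmetic difference is the order in which the two Cauchy--Schwarz steps are applied, and your explicit treatment of the boundary increments (which sum into $\|D\theta_{1:n}\|_2^2$, not literally telescope) matches what the paper leaves implicit.
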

\begin{proof}
We stick to the same notations as in Appendix \ref{gentle}. Let's start the analysis from \eqref{eqn:ogd_decomp}. Let $t' = t - t_h^{(i)}$.
\begin{align}
    (\theta_t - \bar{\theta}_{t_h^{(i)}:t-1})^2
    &\le \frac{\left(\sum_{i=t_h^{(i)}}^{t-1} (\theta_t - \theta_i)\right)^2}{[t']^2},\\
    &\le \frac{t'}{[t']^2} \sum_{i=t_h^{(i)}}^{t-1} (\theta_t - \theta_i)^2,\\
    &\lsim L [C'_i]^2.
\end{align}
Hence summing across all points yields,
\begin{align}
    R_i 
    &\lsim L^2 [C'_i]^2 + \sigma^2 \log L .
\end{align}
So the total expected regret becomes,
\begin{align}
    \sum_{i=1}^{\ceil{n/L}} R_i
    &\lsim L^2 [C'_n]^2 + \frac{n}{L} \sigma^2 \log L .
\end{align}
By choosing $L = \frac{\sigma^{2/3}(n \log n)^{1/3}}{[C_n']^{2/3}}$ we get the theorem. The additive term $[C_n']^2$ arises similarly as in proof of Theorem \ref{thm:ogd}
\end{proof}
The optimality of Moving Averages can be proved similarly.

\begin{remark}
Thus from Theorems ~\ref{thm:main}, ~\ref{thm:main_sobelov1}, ~\ref{thm:ogd}, ~\ref{thm:sob_ogd} we see that \ARROWS{} is minimax over both the classes $TV(C_n)$ and $\cS(C_n/\sqrt{n})$ while linear forecasters such as OGD and MA require different tuning parameters to perform optimally in each class.    
\end{remark}

Next, we give numerical experiments substantiating the claims.
\\
\\
\textbf{Experimental results:}
\begin{figure}[tbh]
\centering
\includegraphics[width=0.6\textwidth]{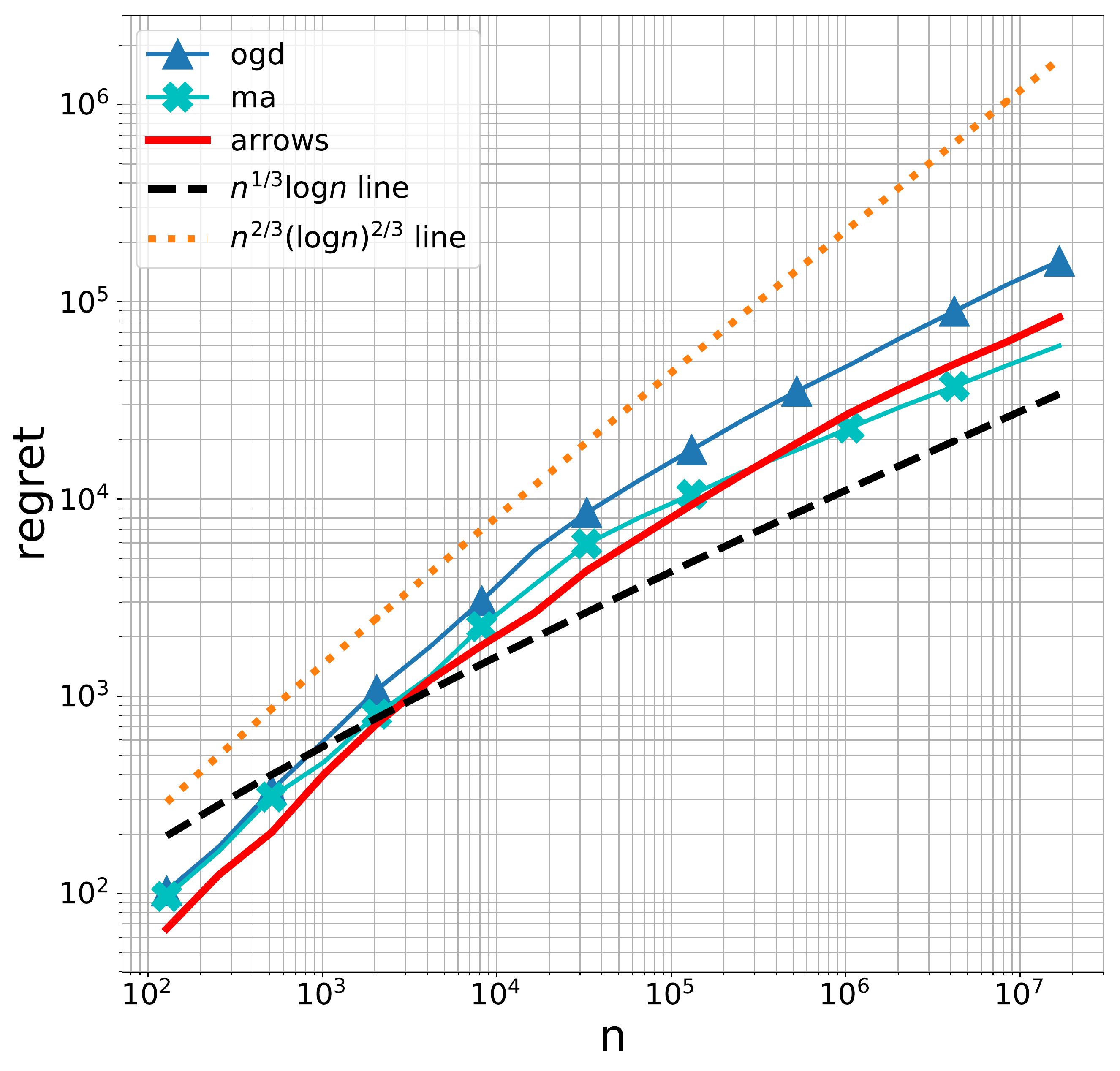}
\caption{Regret plot for policies calibrated according to Sobolev radius for a Doppler function} \label{fig:doppler_sob}
\end{figure}
Here we consider a doppler function $f(t) = \sin \left(\frac{2\pi(1+\epsilon)}{t/n+ 0.01}\right)$ with $n$ being the time horizon. For this function $C'_n = \|D\theta\|_2 = O(C_n/\sqrt{n})$ when $n$ is sufficiently large and $\|D\theta\|_2 = O(C_n)$ for small $n$ for a TV bound $C_n = O(1)$. Thus for sufficiently large $n$, this sequence belong to a  small Sobolev ball with radius $O(1/\sqrt{n})$ while the TV class that encloses that Sobolev ball as per Theorem \ref{thm:main_sobelov} has radius $O(1)$. 

We observe noisy data $y_i = f(i/n) + z_i$, $i=1,...,n$ and $z_i$ are iid normal variables with $\sigma = 1$.Figure \ref{fig:doppler_sob} plots the regret averaged across 5 runs in a log log scale. The necessary input calibration was made as per Remark \ref{rem:sob} while running \ARROWS{}. We can see that in this case all the algorithms perform in an optimal manner. 

Specifically we identify two regimes one for small $n$ and other for larger $n$. When $n$ is large, we obtain the minimax regret rate $\tilde O(n^{1/3})$ due to small $C'_n$ which can be considered as $O(1/\sqrt{n})$. Numerically for $n>10^5$, $C'_n$ is less than 0.1\% of $C_n$. For smaller values of $n$ where $C'_n$ can be not too small, we attain a regret in accordance with the $\tilde O(n^{2/3})$ minimax rate. Numerically when $n < 10^4$, $C'_n$ is atleast 8.5\% of $C_n$ which can be considered as $O(C_n) = O(1)$.

\end{document}